\def\eqref#1{equation~\ref{#1}}
\def\1{\bm{1}}
\def\vzero{{\bm{0}}}
\def\vone{{\bm{1}}}
\def\vb{{\bm{b}}}
\def\vd{{\bm{d}}}
\def\ve{{\bm{e}}}
\def\vh{{\bm{h}}}
\def\vq{{\bm{q}}}
\def\vu{{\bm{u}}}
\def\vv{{\bm{v}}}
\def\vx{{\bm{x}}}
\def\vy{{\bm{y}}}
\def\vz{{\bm{z}}}
\DeclareMathAlphabet{\mathsfit}{\encodingdefault}{\sfdefault}{m}{sl}
\SetMathAlphabet{\mathsfit}{bold}{\encodingdefault}{\sfdefault}{bx}{n}
\DeclareMathOperator*{\argmin}{arg\,min}
\DeclareMathOperator{\sign}{sign}
\newcommand{\overbar}[1]{\mkern 1.5mu\overline{\mkern-1.5mu#1\mkern-1.5mu}\mkern 1.5mu}
\newcommand{\cut}[1]{{}}
\newcommand{\yl}[1]{{\color{blue}(#1 --- Yao)}} % comments
\newcommand{\vA}{{\mathbf{A}}}
\newcommand{\vB}{{\mathbf{B}}}
\newcommand{\vC}{{\mathbf{C}}}
\newcommand{\vD}{{\mathbf{D}}}
\newcommand{\vE}{{\mathbf{E}}}
\newcommand{\vF}{{\mathbf{F}}}
\newcommand{\vH}{{\mathbf{H}}}
\newcommand{\vI}{{\mathbf{I}}}
\newcommand{\vM}{{\mathbf{M}}}
\newcommand{\vP}{{\mathbf{P}}}
\newcommand{\vQ}{{\mathbf{Q}}}
\newcommand{\vW}{{\mathbf{W}}}
\newcommand{\vX}{{\mathbf{X}}}
\newcommand{\vY}{{\mathbf{Y}}}
\newcommand{\vZ}{{\mathbf{Z}}}
\newcommand{\cA}{{\mathcal{A}}}
\newcommand{\cB}{{\mathcal{B}}}
\newcommand{\cC}{{\mathcal{C}}}
\newcommand{\cE}{{\mathcal{E}}}
\newcommand{\cG}{{\mathcal{G}}}
\newcommand{\cL}{{\mathcal{L}}}
\newcommand{\cV}{{\mathcal{V}}}
\newcommand{\EE}{{\mathbb{E}}}
\newcommand{\RR}{\mathbb{R}}
\newcommand{\Span}{\mathbf{span}}
\let\@@span\span
\def\sp@n{\@@span\omit\advance\@multicnt\m@ne}
\DeclarePairedDelimiter{\dotp}{\langle}{\rangle}
\newcommand{\bc}{\begin{center}}
\newcommand{\ec}{\end{center}}
\newcommand{\bdm}{\begin{displaymath}}
\newcommand{\edm}{\end{displaymath}}
\newcommand{\beq}{\begin{equation}}
\newcommand{\eeq}{\end{equation}}
\newcommand{\bfl}{\begin{flushleft}}
\newcommand{\efl}{\end{flushleft}}
\newcommand{\bt}{\begin{tabbing}}
\newcommand{\et}{\end{tabbing}}
\newcommand{\beqn}{\begin{align}}
\newcommand{\eeqn}{\end{align}}
\newcommand{\beqs}{\begin{align*}} % no equation numbers
\newcommand{\eeqs}{\end{align*}}  % no equation numbers
\newtheorem{theorem}{Theorem}
\newtheorem{assumption}{Assumption}
\newtheorem{corollary}{Corollary}
\newtheorem*{corollary*}{Corollary}
\newtheorem{remark}{Remark}
\newtheorem{lemma}{Lemma}
\newtheorem{proposition}{Proposition}
\newcommand{\Range}{\mathbf{Range}}
\newlength\myindent
\title{Linear Convergent Decentralized Optimization with Compression}
\author{
Xiaorui Liu$^1$, Yao Li$^{2,3}$, Rongrong Wang$^{3,2}$, Jiliang Tang$^1$ \& Ming Yan$^{3,2}$ \\
~$^1$ Department of Computer Science and Engineering \\
~$^2$ Department of Mathematics \\
~$^3$ Department of Computational Mathematics, Science and Engineering \\
~Michigan State University, East Lansing, MI 48823, USA \\
~\texttt{\{xiaorui,liyao6,wongron6,tangjili,myan\}@msu.edu}
}
\renewcommand{\eqref}[1]{(\ref{#1})}
\begin{document}

\maketitle
\etocdepthtag.toc{mtchapter}
\etocsettagdepth{mtchapter}{subsection}
\etocsettagdepth{mtappendix}{none}

\begin{abstract}
Communication compression has become a key strategy to speed up distributed optimization. However, existing decentralized algorithms with compression mainly focus on compressing DGD-type algorithms. They are unsatisfactory in terms of convergence rate, stability, and the capability to handle heterogeneous data. Motivated by primal-dual algorithms, this paper proposes the first \underline{L}in\underline{EA}r convergent \underline{D}ecentralized algorithm with compression, LEAD. Our theory describes the coupled dynamics of the inexact primal and dual update as well as compression error, and we provide the first consensus error bound in such settings without assuming bounded gradients. Experiments on convex problems validate our theoretical analysis, and empirical study on deep neural nets shows that LEAD is applicable to non-convex problems.
\end{abstract}

\section{Introduction}

Distributed optimization solves the following optimization problem
\begin{align}
    \vx^* :=\argmin_{\vx\in\mathbb{R}^d} \Big [ f(\vx):=\frac{1}{n}\sum_{i=1}^n f_i(\vx) \Big ] \label{problem}
\end{align}
with $n$ computing agents and a communication network. Each $f_i(\vx): \mathbb{R}^d \rightarrow \mathbb{R}$ is a local objective function of agent $i$ and typically defined on the data $\mathcal{D}_i$ settled at that agent. The data distributions $\{\mathcal{D}_i\}$ can be heterogeneous depending on the applications such as in federated learning. The variable $\vx\in\mathbb{R}^d$ often represents model parameters in machine learning. A distributed optimization algorithm seeks an optimal solution that minimizes the overall objective function $f(\vx)$ collectively. According to the communication topology, existing algorithms can be conceptually categorized into centralized and decentralized ones. Specifically, centralized algorithms require global communication between agents (through central agents or parameter servers).
% such as in the parameter-server or all-reduce architectures. 
While decentralized algorithms only require local communication between connected agents and are more widely applicable than centralized ones. In both paradigms, the computation can be relatively fast with powerful computing devices; efficient communication is the key to improve algorithm efficiency and system scalability, especially when the network bandwidth is limited.

In recent years, various communication compression techniques, such as quantization and sparsification, have been developed to reduce communication costs. Notably, extensive studies~\citep{1-bit-sgd, NIPS2017_qsgd, signSGD-ICML18, Stich:2018:SSM:3327345.3327357, KarimireddyRSJ19feedback, mishchenko2019distributed, tang2019doublesqueeze, liu2019double} have utilized gradient compression to significantly boost communication efficiency for centralized optimization. They enable efficient large-scale optimization while maintaining comparable convergence rates and practical performance with their non-compressed counterparts. This great success has suggested the potential and significance of communication compression in decentralized algorithms.

While extensive attention has been paid to centralized optimization, communication compression is relatively less studied in decentralized algorithms because the algorithm design and analysis are more challenging in order to cover general communication topologies. There are recent efforts trying to push this research direction.
For instance, DCD-SGD and ECD-SGD~\citep{tang_NIPS2018_7992} introduce difference compression and extrapolation compression to reduce model compression error.
\citep{reisizadeh2019exact, reisizadeh2019robust} introduce QDGD and QuanTimed-DSGD to achieve exact convergence with small stepsize. DeepSqueeze~\citep{dblpDeepSqueeze} directly compresses the local model and compensates the compression error in the next iteration.
CHOCO-SGD~\citep{ksj2019choco, koloskova*2020decentralized} presents a novel quantized gossip algorithm that reduces compression error by difference compression and preserves the model average. Nevertheless, most existing works focus on the compression of primal-only algorithms, i.e., reduce to DGD~\citep{nedic2009distributed, yuan2016convergence} or P-DSGD~\citep{lian2017can}. They are unsatisfying in terms of convergence rate, stability, and the capability to handle heterogeneous data. Part of the reason is that they inherit the drawback of DGD-type algorithms, whose convergence rate is slow in heterogeneous data scenarios where the data distributions are significantly different from agent to agent.

In the literature of decentralized optimization, it has been proved that primal-dual algorithms can achieve faster converge rates and better support heterogeneous data~\citep{ling2015dlm, extra_shi, li2019decentralized, yuan2020can}. However, it is unknown whether communication compression is feasible for primal-dual algorithms and how fast the convergence can be with compression. In this paper, we attempt to bridge this gap by investigating the communication compression for primal-dual decentralized algorithms. 
Our major contributions can be summarized as:
\begin{itemize}[leftmargin=0.2in]
    \item We delineate two key challenges in the algorithm design for communication compression in decentralized optimization, i.e., data heterogeneity and compression error, and motivated by primal-dual algorithms, we propose a novel decentralized algorithm with compression, LEAD. 
    
    \item 
    We prove that for LEAD, a constant stepsize in the range $(0, 2/(\mu+L)]$ is sufficient to ensure linear convergence for strongly convex and smooth objective functions. To the best of our knowledge, LEAD is the first linear convergent decentralized algorithm with compression. Moreover, LEAD provably works with unbiased compression of arbitrary precision.
    
    \item We further prove that if the stochastic gradient is used, LEAD converges linearly to the $O(\sigma^2)$ neighborhood of the optimum with constant stepsize.
    % , where $\sigma^2$ is the stochastic variance bound. 
    LEAD is also able to achieve exact convergence to the optimum with diminishing stepsize. 

    \item Extensive experiments on convex problems validate our theoretical analyses, and the empirical study on training deep neural nets shows that LEAD is applicable for nonconvex problems. LEAD achieves state-of-art computation and communication efficiency in all experiments and significantly outperforms the baselines on heterogeneous data. 
    Moreover, LEAD is robust to parameter settings and needs minor effort for parameter tuning.

\end{itemize}

\section{Related Works}
\label{sec:related_work}

% \paragraph{Decentralized optimization.}
Decentralized optimization can be traced back to the work by~\citet{tsitsiklis1986distributed}. DGD~\citep{nedic2009distributed} is the most classical decentralized algorithm. It is intuitive and simple but converges slowly due to the diminishing stepsize that is needed to obtain the optimal solution~\citep{yuan2016convergence}. Its stochastic version D-PSGD~\citep{lian2017can} has been shown effective for training nonconvex deep learning models. 
Algorithms based on primal-dual formulations or gradient tracking are proposed to eliminate the convergence bias in DGD-type algorithms and improve the convergence rate, such as D-ADMM~\citep{mota2013d}, DLM~\citep{ling2015dlm}, EXTRA~\citep{extra_shi}, NIDS~\citep{li2019decentralized}, $D^2$~\citep{pmlr-v80-tang18a}, Exact Diffusion~\citep{yuan2018exact}, OPTRA\citep{xu2020accelerated}, DIGing~\citep{nedic2017achieving}, GSGT~\citep{pu2020distributed}, etc.

% \paragraph{Communication compression.}
Recently, communication compression is applied to decentralized settings by~\citet{tang_NIPS2018_7992}. It proposes two algorithms, i.e., DCD-SGD and ECD-SGD, which require compression of high accuracy and are not stable with aggressive compression. \citet{reisizadeh2019exact, reisizadeh2019robust} introduce QDGD and QuanTimed-DSGD to achieve exact convergence with small stepsize and the convergence is slow. DeepSqueeze~\citep{dblpDeepSqueeze} compensates the compression error to the compression in the next iteration. 
Motivated by the quantized average consensus algorithms, such as~\citep{carli2010gossip}, the quantized gossip algorithm CHOCO-Gossip~\citep{ksj2019choco} converges linearly to the consensual solution. Combining CHOCO-Gossip and D-PSGD leads to a decentralized algorithm with compression, CHOCO-SGD, which converges sublinearly under the strong convexity and gradient boundedness assumptions. Its nonconvex variant is further analyzed in~\citep{koloskova*2020decentralized}. A new compression scheme using the modulo operation is introduced in~\citep{lu2020moniqua} for decentralized optimization. A general algorithmic framework aiming to maintain the linear convergence of distributed optimization under compressed communication is considered in~\citep{magnusson2020maintaining}. It requires a contractive property that is not satisfied by many decentralized algorithms including the algorithm in this paper.

\section{Algorithm}

We first introduce notations and definitions used in this work.
We use bold upper-case letters such as $\vX$ to define matrices and bold lower-case letters such as $\vx$ to define vectors. Let $\vone$ and $\vzero$ be vectors with all ones and zeros, respectively. Their dimensions will be provided when necessary. Given two matrices $\vX,~\vY\in\RR^{n\times d}$, we define their inner product as $\langle \vX, \vY \rangle=\text{tr}(\vX^\top\vY)$ and the norm as $\|\vX\|=\sqrt{\langle \vX, \vX \rangle}$. We further define $\langle \vX, \vY \rangle_{\vP}=\text{tr}(\vX^\top \vP\vY)$ and $\|\vX\|_{\vP}=\sqrt{\langle \vX, \vX \rangle}_{\vP}$ for any given symmetric positive semidefinite matrix $\vP \in \mathbb{R}^{n\times n}$.
For simplicity, we will majorly use the matrix notation in this work. For instance, each agent $i$ holds an individual estimate $\vx_i \in \mathbb{R}^d$ of the global variable $\vx \in \mathbb{R}^d$. 
Let $\vX^k$ and $\nabla \vF(\vX^k)$ be the collections of $\{\vx_i^k\}_{i=1}^n$ and $\{\nabla f_i(\vx_i^k)\}_{i=1}^n$ which are defined below:
% \vspace{-0.1in}
\begin{align}
    \vX^k = \left [ {\vx_1^k}, \dots, {\vx_n^k} \right ]^\top \in \mathbb{R}^{n\times d}, 
    ~~~\nabla \vF(\vX^k) = \left [ \nabla f_1(\vx_1^k),\dots,\nabla f_n(\vx_n^k)  \right ]^\top \in \mathbb{R}^{n\times d}. %\nonumber
    % \nabla \vF(\vX^k; \xi^k) &= \left [ \nabla f_1(\vx_1^k; \xi_1^k),\dots,\nabla f_n(\vx_n^k; \xi_n^k)  \right ]^\top \in \mathbb{R}^{n\times d}. %\nonumber
\label{equ:notation}
\end{align}
We use $\nabla \vF(\vX^k; \xi^k)$ to denote the stochastic approximation of $\nabla \vF(\vX^k)$.
With these notations, the update $\vX^{k+1} = \vX^k - \eta \nabla \vF(\vX^k; \xi^k)$ means that $\vx_i^{k+1}=\vx_i^k-\eta\nabla f_i(\vx_i^k; \xi_i^k)$ for all $i$.
%We use $\overbar \vx$ as the average of all rows of $\vX$. 
In this paper, we need the average of all rows in $\vX^k$ and $\nabla \vF(\vX^k)$, so we define $\overbar \vX^k = ({\vone^\top \vX^k})/{n}$ and $\overbar {\nabla} \vF(\vX^k) = ({\vone^\top \nabla \vF(\vX^k)})/{n}$. They are row vectors, and we will take a transpose if we need a column vector.
%We define $\hat \vY$ as the estimate of $\vY$ with compression error. 
The pseudoinverse of a matrix $\vM$ is denoted as $\vM^{\dagger}$. The largest, $i$th-largest, and smallest nonzero eigenvalues of a symmetric matrix $\vM$ are $\lambda_{\max}(\vM)$, $\lambda_i(\vM)$, and $\lambda_{\min}(\vM)$.% , respectively.

\begin{assumption}[Mixing matrix] 
\label{ass:mixing}
The connected network $\cG=\{\cV,\cE \}$ consists of a node set $\cV=\{1,2,\dots,n\}$ and an undirected edge set $\cE$. The primitive symmetric doubly-stochastic matrix $\vW=[w_{ij}]\in\RR^{n\times n}$ encodes the network structure such that $w_{ij}=0$ if nodes $i$ and $j$ are not connected and cannot exchange information. 
\end{assumption}
Assumption~\ref{ass:mixing} implies that $-1<\lambda_n(\vW)\leq\lambda_{n-1}(\vW)\leq\cdots\lambda_2(\vW) < \lambda_1(\vW)=1$ and $\vW\vone = \vone$~\citep{xiao2004fast, extra_shi}. The matrix multiplication $\vX^{k+1}=\vW \vX^k$ describes that agent $i$ takes a weighted sum from its neighbors and itself, i.e., $\vx_i^{k+1} = \sum_{j\in\EuScript{N}_{i}\cup \{i\}}{w}_{ij}\vx^k_j$, where $\EuScript{N}_i$ denotes the neighbors of agent $i$.

\subsection{The Proposed Algorithm}

The proposed algorithm LEAD to solve problem~\eqref{problem} is showed in Alg.~\ref{algocomplete} with matrix notations for conciseness. We will refer to the line number in the analysis. A complete algorithm description from the agent's perspective can be found in Appendix~\ref{sec:efficient}. 
% For the initialization, we require $\vD^1 = (\vI-\vW) \vZ$ for some $\vZ\in\RR^{n\times d}$. One simple way is to set $\vD^1=\vzero^{n\times d}$. 
The motivation behind Alg.~\ref{algocomplete} is to achieve two goals: (a) consensus ($\vx_i^k -(\overbar\vX^k)^\top\rightarrow \vzero$) and (b) convergence ($(\overbar \vX^k)^\top \rightarrow \vx^*$). We first discuss how goal (a) leads to goal (b) and then explain how LEAD fulfills goal (a).

\begin{algorithm}[!ht]
\caption{LEAD} 
\label{algocomplete}
\setstretch{1.3}
\textbf{Input:} Stepsize $\eta$, parameter ($\alpha,~\gamma$), $\vX^0,~\vH^1,~\vD^1 = (\vI-\vW)\vZ$ for any $\vZ$ \\
\noindent\textbf{Output: } $\vX^K$ or $1/n\sum_{i=1}^n \vX^K_i$
% \begin{algorithmic}
\begin{algorithmic}[1]
\hspace{-25pt}
\begin{minipage}[t]{0.58\textwidth}
\State $\vH_w^1 = \vW\vH^1$
\State $\vX^1 = \vX^0 - \eta \nabla \vF(\vX^0;\xi^0)$
\For{$k=1,2,\cdots, K-1$}
    \State $\vY^k = \vX^k-\eta \nabla \vF(\vX^k;\xi^k)-\eta \vD^k$
    \State $\hat \vY^k, \hat \vY^k_w, \vH^{k+1}, \vH_w^{k+1}$ = \Call{Comm} {$\vY^k, \vH^k, \vH_w^k$}
    \State $\vD^{k+1} = \vD^k + \frac{\gamma}{2\eta} (\hat \vY^k-\hat \vY^k_w)$
    \State $\vX^{k+1} = \vX^k - \eta \nabla \vF(\vX^k;\xi^k) - \eta \vD^{k+1}$
\EndFor
\end{minipage}
% }
% \begin{minipage}[t]{0.46\textwidth}
\begin{minipage}[t]{0.38\textwidth}
\Procedure{Comm}{$\vY, \vH, \vH_w$}
% \STATE $\vU = \vY - \vH$
% \STATE $\vQ =$ \Call{Compress}{$\vU$} \hfill $\vartriangleright$ Compression
% \State $\vQ =$ \Call{Compress}{$\vY - \vH$} \hfill $\vartriangleright$ Compression
\State $\vQ =$ \Call{Compress}{$\vY - \vH$} %$\vartriangleright$ Compression
\State $\hat \vY = \vH + \vQ$
\State $\hat \vY_w = \vH_w + \vW\vQ$ % \hfill $\vartriangleright$ Communication
\State $\vH = (1-\alpha)\vH + \alpha \hat \vY $
\State $\vH_w = (1-\alpha)\vH_w + \alpha \hat \vY_w $
\State \textbf{Return:} $\hat \vY, \hat \vY_w, \vH, \vH_w$
\EndProcedure
\end{minipage}
% }
%\State \textbf{Output:} $\vX^{K}$
\end{algorithmic}
% \normalsize
% \end{algorithm*}
\end{algorithm}

% \textbf{In essence, LEAD runs the approximate gradient descent globally and reduces to the exact gradient descent under consensus.}
In essence, LEAD runs the approximate SGD globally and reduces to the exact SGD under consensus.
One key property for LEAD is $\vone_{n\times 1}^\top \vD^{k}=\vzero$, regardless of the compression error in $\hat \vY^k$. It holds because that for the initialization, we require $\vD^1 = (\vI-\vW) \vZ$ for some $\vZ\in\RR^{n\times d}$, e.g., $\vD^1=\vzero^{n\times d}$, and that the update of $\vD^k$ ensures $\vD^k\in\Range(\vI-\vW)$ for all $k$ and $\vone_{n\times 1}^\top(\vI-\vW)=\vzero$ as we will explain later.
%the update $\vD^{k+1} = \vD^k + \frac{\gamma}{2\eta} (\vI-\vW)\hat \vY^k$ (line 7) and initialization $\vD^1=(\vI-\vW)\vZ$ ensure $\vD^k \in \text{range}(\vI-\vW)$ and thus $\vone^\top \vD^{k}=\vzero$. Importantly, this property holds regardless of the compression error in $\hat \vY^k$.
% $\hat \vY^s, ~\forall s=\{1,2,\dots,k-1\}$. 
Therefore, multiplying $(1/n){\vone_{n\times 1}^\top}$ on both sides of { Line} 7 leads to a global average view of Alg.~\ref{algocomplete}:  
% \[\overbar \vX^{k+1}=\overbar \vX^k-\eta \overbar {\nabla}\vF(\vX^k).\]
\begin{align}
\label{equ:average_gd}
\overbar \vX^{k+1}=\overbar \vX^k-\eta \overbar {\nabla} \vF(\vX^k; \xi^k),
\end{align}
which doesn't contain the compression error. 
Note that this is an approximate SGD step because, as shown in~\eqref{equ:notation}, the gradient $\nabla \vF(\vX^k; \xi^k)$ is not evaluated on a global synchronized model $\overbar\vX^k$.
However, if the solution converges to the consensus solution, i.e., $\vx_i^k-(\overbar\vX^k)^\top \rightarrow \vzero$, 
% or equivalently $\vx^k_i \approx \overbar\vX^k, \forall i\in\{1,2,\dots,n\}$, 
then 
% $\EE_{\xi^k} (\overbar {\nabla} \vF(\vX^k; \xi^k))^\top- {\nabla}f((\overbar\vX^k; \xi^k)^\top)\rightarrow \vzero$ 
$\EE_{\xi^k} [\overbar {\nabla} \vF(\vX^k; \xi^k)- {\nabla}f(\overbar\vX^k; \xi^k)] \rightarrow \vzero$ 
and~\eqref{equ:average_gd} gradually reduces to exact SGD.

With the establishment of how consensus leads to convergence, the obstacle becomes how to achieve consensus under local communication and compression challenges. It requires addressing two issues, i.e., data heterogeneity and compression error. To deal with these issues, existing algorithms, such as DCD-SGD, ECD-SGD, QDGD, DeepSqueeze, Moniqua, and CHOCO-SGD, need a diminishing or constant but small stepsize depending on the total number of iterations.
However, these choices unavoidably cause slower convergence and bring in the difficulty of parameter tuning. 
% In contrary, one distinctive advantage of LEAD is that a constant step size is sufficient to ensure the consensual solution and achieve linear convergence if full gradient is available as proved in Section~\ref{sec:theory}. 
% This is achieved by a novel way of resolving both the data heterogeneity and compression error issues.
In contrast, LEAD takes a different way to solve these issues, as explained below.

\textbf{Data heterogeneity}. 
It is common in distributed settings that there exists data heterogeneity among agents, especially in real-world applications where different agents collect data from different scenarios. In other words, we generally have $f_i(\vx) \neq f_j(\vx)$ for $i \neq j$. The optimality condition of problem~\eqref{problem} gives $\vone_{n\times 1}^\top \nabla \vF(\vX^*) = \vzero$, where $\vX^*=[\vx^*,\cdots,\vx^*]$ is a consensual and optimal solution. The data heterogeneity and optimality condition imply that there exist at least two agents $i$ and $j$ such that $\nabla f_i(\vx^*)\neq \vzero$ and $\nabla f_j(\vx^*)\neq \vzero$. As a result, a simple D-PSGD algorithm cannot converge to the consensual and optimal solution as $\vX^* \neq \vW \vX^* -\eta \EE_{\xi} \nabla \vF(\vX^*; \xi)$ even when the stochastic gradient variance is zero. 

\textit{Gradient correction}. 
Primal-dual algorithms or gradient tracking algorithms are able to convergence much faster than DGD-type algorithms by handling the data heterogeneity issue, as introduced in Section~\ref{sec:related_work}. Specifically, LEAD is motivated by the design of primal-dual algorithm NIDS~\citep{li2019decentralized} and the relation becomes clear if we consider the two-step reformulation of NIDS adopted in~\citep{li2019linear}:
\begin{align}
    \vD^{k+1} &= \vD^k + \frac{\vI-\vW}{2\eta} (\vX^k - \eta \nabla \vF(\vX^k) - \eta \vD^k), \\
    \vX^{k+1} &= \vX^k - \eta \nabla \vF(\vX^k) - \eta \vD^{k+1} \label{equ:two_step2},
\end{align}
where $\vX^k$ and $\vD^k$ represent the primal and dual variables respectively.
% \footnote{Precisely speaking, $\vD^k$ is a linear transformation of the dual variable in NIDS, but we call $\vD^k$ the dual variable with a little bit abuse of notation.}. 
The dual variable $\vD^k$ plays the role of gradient correction. As $k \rightarrow \infty$, we expect $\vD^k \rightarrow -\nabla \vF(\vX^*)$ and $\vX^k$ will converge to $\vX^*$ via the update in~\eqref{equ:two_step2}
since $\vD^{k+1}$ corrects the nonzero gradient $\nabla \vF(\vX^k)$ asymptotically.
The key design of Alg.~\ref{algocomplete} is to provide compression for the auxiliary variable defined as $\vY^k = \vX^k - \eta \nabla \vF(\vX^k) - \eta \vD^k$.
Such design ensures that the dual variable $\vD^k$ lies in $\textbf{Range}(\vI-\vW)$, which is essential for convergence. Moreover, it achieves the implicit error compression as we will explain later. To stabilize the algorithm with inexact dual update, we introduce a parameter $\gamma$ to control the stepsize in the dual update. Therefore, if we ignore the details of the compression, Alg.~\ref{algocomplete} can be concisely written as
\begin{align}
    \vY^k & = \vX^k - \eta \nabla \vF(\vX^k; \xi^k) - \eta \vD^k \\
    \vD^{k+1} &= \vD^k + \frac{\gamma}{2\eta} (\vI-\vW) \hat \vY^k \\
    \vX^{k+1} &= \vX^k - \eta \nabla \vF(\vX^k; \xi^k) - \eta \vD^{k+1}
\end{align}
where $\hat \vY^k$ represents the compression of $\vY^k$ and $\vF(\vX^k; \xi^k)$ denote the stochastic gradients.

Nevertheless, how to compress the communication and how fast the convergence we can attain with compression error are unknown. 
In the following, we propose to carefully control the compression error by difference compression and error compensation such that the inexact dual update (Line 6) and primal update (Line 7) can still guarantee the convergence as proved in Section~\ref{sec:theory}.

\textbf{Compression error}. 
Different from existing works, which typically compress the primal variable $\vX^k$ or its difference, LEAD first construct an intermediate variable $\vY^k$ and apply compression to obtain its coarse representation $\hat \vY^k$ as shown in the procedure $\Call{Comm}{\vY, \vH, \vH_w}$:
% \vspace{-0.1in}
\begin{itemize}[leftmargin=0.3in]
\setlength\itemsep{-0.2em}
    \item Compress the difference between $\vY$ and the state variable $\vH$ as $\vQ$;
    \item $\vQ$ is encoded into the low-bit representation, which enables the efficient local communication step $\hat \vY_w = \vH_w + \vW\vQ$. It is 
    % {\textbf{the only communication step}} in each iteration.
    \textit{the only communication step} in each iteration.
    
    \item Each agent recovers its estimate $\hat \vY$ by $\hat \vY = \vH + \vQ$ and we have $\hat \vY_w = \vW \hat \vY$.% provided that $\vH_w=\vW\vH$ as we will see next. In other words, the $i$-th row of $\hat \vY_w$ contains the weighted sum of its neighboring agents' estimate $\hat \vy_j, \forall{j\in \EuScript{N}_{i}}$ and $\hat \vy_i$ itself, i.e., $(\hat{\vy}_w)_i=\sum_{j\in\EuScript{N}_{i}\cup \{i\}} w_{ij}\hat \vy_j$. 
    \item States $\vH$ and $\vH_w$ are updated based on $\hat \vY$ and $\hat \vY_w$, respectively. We have $\vH_w = \vW \vH$.% given the initialization $\vH_w^1 = \vW\vH^1$. 
\end{itemize}

By this procedure, we expect when both $\vY^k$ and $\vH^k$ converge to $\vX^*$, the compression error vanishes asymptotically due to the assumption we make for the compression operator in Assumption~\ref{ass:compression}. 

\begin{remark}
Note that difference compression is also applied in DCD-PSGD~\citep{tang_NIPS2018_7992} and CHOCO-SGD~\citep{ksj2019choco}, but their state update is the simple integration of the compressed difference. We find this update is usually too aggressive and cause instability as showed in our experiments. Therefore, we adopt a momentum update $\vH = (1-\alpha)\vH + \alpha \hat \vY$ motivated from DIANA~\citep{mishchenko2019distributed}, which reduces the compression error for gradient compression in centralized optimization. 
\end{remark}

\textit{Implicit error compensation.}
% \textit{Error compensation.}
On the other hand, even if the compression error exists, LEAD essentially compensates for the error in the inexact dual update (Line 6), making the algorithm more stable and robust. To illustrate how it works, let $\vE^k = \hat \vY^k- \vY^k$ denote the compression error and $\ve^k_i$ be its $i$-th row. The update of $\vD^k$ gives
\begin{align*}
% \vD^{k+1} &= \vD^k + \frac{\gamma}{2\eta} (\hat \vY^k- \hat \vY^k_w) = \vD^k + \frac{\gamma}{2\eta} (\vI-\vW)\hat \vY^k = \vD^k + \frac{\gamma}{2\eta} (\vI-\vW) \vY^k+ \frac{\gamma}{2\eta} (\vI-\vW) \vE^k 
\vD^{k+1} &= \vD^k + \frac{\gamma}{2\eta} (\hat \vY^k- \hat \vY^k_w) = \vD^k + \frac{\gamma}{2\eta} (\vI-\vW) \vY^k+ \frac{\gamma}{2\eta} (\vE^k -\vW \vE^k) 
%\vD^{k+1} &= \vD^k + \frac{\gamma}{2\eta} (\hat \vY^k- \hat \vY^k_w) = \vD^k + \frac{\gamma}{2\eta} (\vI-\vW)\hat \vY^k = \vD^k + \frac{\gamma}{2\eta} (\vI-\vW) \vY^k+ \frac{\gamma}{2\eta} (\vE^k -\vW \vE^k) 
\end{align*}
% $$\vD^{k+1} = \vD^k + \frac{\gamma}{2\eta} (\vI-\vW)\hat \vY^k = \vD^k + \frac{\gamma}{2\eta} (\vI-\vW)\hat \vY^k = (\vI-\vW) \vY^k + (\vI-\vW) \vE^k$$ 
where $-\vW \vE^k$ indicates that agent $i$ spreads total compression error $-\sum_{j\in \EuScript{N}_i\cup \{i\} } w_{ji}\ve^k_i = -\ve^k_i$ to all agents and $\vE^k$ indicates that each agent compensates this error locally by adding $\ve^k_i$ back. 
This error compensation also explains why the global view in~\eqref{equ:average_gd} doesn't involve compression error.

\begin{remark}
Note that in LEAD, the compression error is compensated into the model $\vX^{k+1}$ through Line 6 and Line 7 such that the gradient computation in the next iteration is aware of the compression error. This has some subtle but important difference from the error compensation or error feedback in~\citep{1-bit-sgd, pmlr-v80-wu18d, Stich:2018:SSM:3327345.3327357, KarimireddyRSJ19feedback, tang2019doublesqueeze, liu2019double, dblpDeepSqueeze}, where the error is stored in the memory and only compensated after gradient computation and before the compression. 
\end{remark}

\begin{remark}
The proposed algorithm, LEAD in Alg.~\ref{algocomplete}, recovers NIDS~\citep{li2019decentralized}, D$^2$~\citep{pmlr-v80-tang18a}, Exact Diffusion~\citep{yuan2018exact}. These connections are established in Appendix~\ref{append:connects}. 
\end{remark}
% \section{Convergence analysis}
\section{Theoretical Analysis}
\label{sec:theory}

In this section, we show the convergence rate for the proposed algorithm LEAD. Before showing the main theorem, we make some assumptions, which are commonly used for the analysis of decentralized optimization algorithms. All proofs are provided in Appendix~\ref{sec:allproof}.  

\begin{assumption}[Unbiased and $C$-contracted operator]
\label{ass:compression}
The compression operator $Q:\RR^{d}\rightarrow\RR^{d}$ is unbiased, i.e., $\EE Q(\vx)=\vx$, and there exists $C\geq 0$ such that $\EE \|\vx-Q(\vx)\|_2^{2}\leq C\|\vx\|^{2}_2$ for all $\vx\in\mathbb{R}^d$.
\end{assumption}

\begin{assumption}[Stochastic gradient]
\label{ass:stochatic}
The stochastic gradient $\nabla f_i(\vx;\xi)$ is unbiased, i.e., $\EE_\xi \nabla f_i(\vx; \xi)=\nabla f_i(\vx)$, and the stochastic gradient variance is bounded: $\EE_\xi \| \nabla f_i(\vx;\xi)- \nabla f_i(\vx)\|_2^{2}\leq \sigma_i^2$ for all $i \in [n]$. Denote $\sigma^2=\frac{1}{n}\sum_{i=1}^{n}\sigma_i^2.$
\end{assumption}

\begin{assumption}\label{ass:smooth}Each $f_{i}$ is $L$-smooth and $\mu$-strongly convex with $L\geq \mu> 0$, i.e., for $i=1,2,\dots, n$ and $\forall \vx,\vy\in\RR^{d}$, we have
\begin{align*}
 f_{i}(\vy)+\dotp{\nabla f_{i}(\vy), \vx-\vy}+\frac{\mu}{2}\|\vx-\vy\|^{2}\leq    f_{i}(\vx) &\leq f_{i}(\vy)+\dotp{\nabla f_{i}(\vy), \vx-\vy}+\frac{L}{2}\|\vx-\vy\|^{2}. \label{lipschitz} %\\
%   \textstyle f_{i}(\vx) &\geq f_{i}(\vy)+\dotp{\nabla f_{i}(\vy), \vx-\vy}+\frac{\mu}{2}\|\vx-\vy\|^{2}. \label{strcov}
\end{align*}
\end{assumption}

% \begin{theorem}[Linear convergence]
\begin{theorem}[Constant stepsize]
\label{thm:linear_convergence}
Let $\{\vX^k, \vH^k, \vD^k\}$ be the sequence generated from Alg.~\ref{algocomplete} and $\vX^*$ is the optimal solution with $\vD^*=-\nabla\vF(\vX^*)$. Under Assumptions~\ref{ass:mixing}-\ref{ass:smooth}, for any constant stepsize $\eta \in (0, {2}/({\mu+L})]$, if the compression parameters $\alpha$ and $\gamma$ satisfy
\begin{align}
    \gamma &\in \left(0,\min\Big\{\frac{2}{(3C+1)\beta}, \frac{2\mu\eta(2-\mu\eta) }{[2-\mu\eta(2-\mu\eta)]C\beta} \Big\}\right), \\
    \alpha &\in \left[\frac{C\beta\gamma}{2(1+C)}, \frac{1}{a_1} \min\Big\{\frac{2-\beta\gamma}{4-\beta\gamma}, \mu\eta(2-\mu\eta) \Big\}\right], 
\end{align}
with $\beta\coloneqq\lambda_{\max}(\vI-\vW)$. Then, in total expectation we have
\begin{align}
    \frac{1}{n}\EE\mathcal{L}^{k+1} \leq \rho \frac{1}{n}\EE\mathcal{L}^k + \eta^2\sigma^2,
\end{align}
where
% \begin{align}
%     \rho = \max \{1-\frac{\mu(2\eta-L\eta^2)-a_1\alpha}{1-a_1\alpha}, 1-\frac{1}{1+\lambda_{max}(M)}, 1-\alpha \}
% \end{align}
\begin{align*}
\mathcal{L}^k &\coloneqq (1-a_1\alpha)\|\vX^{k}-\vX^*\|^2 + (2\eta^2/\gamma) \EE\|\vD^{k}-\vD^*\|^2_{(\vI-\vW)^\dagger}+ a_1\|\vH^k-\vX^*\|^2, \label{eqn_Lk}\\
\rho& \coloneqq\max\left\{ \frac{1-\mu\eta(2-\mu\eta)}{1-a_1\alpha},
% {2\lambda_{\max}((\vI-\vW)^\dagger)-\gamma\over 2\lambda_{\max}((\vI-\vW)^\dagger)},
1-{\gamma\over 2\lambda_{\max}((\vI-\vW)^\dagger)},~~
1-\alpha\right\}<1, a_1\coloneqq\frac{4(1+C)}{C\beta\gamma+2} 
% a_1\coloneqq\frac{4(1+C)}{C\beta\gamma+2}.
% \text{and}~~\vM \coloneqq 2(\vI-\vW)^{\dagger} - \gamma \vI.
%\tilde \lambda \coloneqq 1-\frac{\gamma}{\gamma+\lambda_{\max}(\vM)}
%&\yl{\tilde{\lambda}=\max_{i=1,\cdots, n-1}\Big\{\frac{\lambda_i(\vM)}{\gamma+\lambda_i(\vM)}\Big\}=1-\frac{\gamma}{\gamma+\lambda_{\max}(\vM)}}.
\end{align*} 
% The result holds for $C=0$, if we let $\frac{(3C+1)-\sqrt{(3C+1)^2-4C}}{C\beta}={2\over \beta}$, which is the limit when $C\rightarrow0$.
The result holds for $C\rightarrow 0$.
\end{theorem}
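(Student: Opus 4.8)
The plan is to treat $\mathcal{L}^k$ as a potential function and prove the one-step contraction $\frac{1}{n}\EE\mathcal{L}^{k+1}\le\rho\,\frac{1}{n}\EE\mathcal{L}^k+\eta^2\sigma^2$ by expanding the three squared-norm blocks along the recursions of Lines 4--7. First I would record the fixed-point relations $\nabla\vF(\vX^*)+\vD^*=\vzero$, $(\vI-\vW)\vX^*=\vzero$ (since $\vX^*$ is consensual and $\vW$ doubly stochastic), and $\vD^*\in\Range(\vI-\vW)$, which together give $\vY^*=\vX^*$ at optimality. Using the invariant $\vD^k\in\Range(\vI-\vW)$ inherited from the initialization $\vD^1=(\vI-\vW)\vZ$ and the form of Line 6, the dual error $\vD^{k+1}-\vD^*$ remains in $\Range(\vI-\vW)$, so the weight $(\vI-\vW)^\dagger$ inverts $(\vI-\vW)$ on that subspace and the norm $\|\cdot\|_{(\vI-\vW)^\dagger}$ is well defined. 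I would also isolate the compression error $\vE^k=\hat\vY^k-\vY^k=Q(\vY^k-\vH^k)-(\vY^k-\vH^k)$ and invoke Assumption~\ref{ass:compression} to obtain $\EE_Q\vE^k=\vzero$ and $\EE_Q\|\vE^k\|^2\le C\|\vY^k-\vH^k\|^2$.

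Next I would subtract the fixed-point identities to get the three error recursions: the primal $\vX^{k+1}-\vX^*=(\vX^k-\vX^*-\eta\vG^k)-\eta(\nabla\vF(\vX^k;\xi^k)-\nabla\vF(\vX^k))-\eta(\vD^{k+1}-\vD^*)$ with $\vG^k:=\nabla\vF(\vX^k)-\nabla\vF(\vX^*)$; the dual $\vD^{k+1}-\vD^*=(\vD^k-\vD^*)+\frac{\gamma}{2\eta}(\vI-\vW)(\vY^k-\vX^*+\vE^k)$; and the state $\vH^{k+1}-\vX^*=(1-\alpha)(\vH^k-\vX^*)+\alpha(\vY^k-\vX^*)+\alpha\vE^k$. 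The identity $\vX^{k+1}=\vY^k-\frac{\gamma}{2}(\vI-\vW)\hat\vY^k$ links the primal and dual moves and is what makes the cross terms tractable. Taking expectation conditioned on the current iterate, the stochastic-gradient terms are mean-zero and, via Assumption~\ref{ass:stochatic} and the normalization $\sigma^2=\frac1n\sum_i\sigma_i^2$, contribute the additive $\eta^2\sigma^2$, while the compression terms are mean-zero with second moments controlled by $C\|\vY^k-\vH^k\|^2$.

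The heart of the argument is the expansion of $\mathcal{L}^{k+1}$. For the primal block I would use Assumption~\ref{ass:smooth} (the tight strong-convexity/smoothness interpolation inequality) to get the contraction $\|\vX^k-\vX^*-\eta\vG^k\|^2\le(1-\mu\eta(2-\mu\eta))\|\vX^k-\vX^*\|^2$, valid precisely for $\eta\in(0,2/(\mu+L)]$; the slightly shrunk coefficient $(1-a_1\alpha)$ then turns this into the first entry $\frac{1-\mu\eta(2-\mu\eta)}{1-a_1\alpha}$ of $\rho$ while leaving room to absorb the error terms. The weight $2\eta^2/\gamma$ and the $(\vI-\vW)^\dagger$-norm are engineered so that the cross inner products between $\vX^k-\vX^*-\eta\vG^k$ and $\vD^{k+1}-\vD^*$ telescope against the dual expansion, leaving only terms proportional to the three Lyapunov components and to $C\|\vY^k-\vH^k\|^2$. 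I would then bound $\|\vY^k-\vH^k\|^2$ by the Lyapunov components through the triangle inequality, letting the state block $a_1\|\vH^k-\vX^*\|^2$ together with the factor $(1-\alpha)$ absorb the residual compression error.

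Finally I would collect coefficients and verify that, under the stated intervals for $\gamma$ and $\alpha$ and the definition $a_1=4(1+C)/(C\beta\gamma+2)$ with $\beta=\lambda_{\max}(\vI-\vW)$, each of the three groups contracts by at most $\rho$, yielding the claimed recursion, with the limit $C\to 0$ recovering the compression-free rate. I expect the main obstacle to be the cross-term bookkeeping: showing that the primal--dual inner products cancel under the $(\vI-\vW)^\dagger$ weighting, and that the compression-error contributions, which appear simultaneously in the primal, dual, and state recursions, can all be dominated by the single negative term from the state update and the contraction factors. Pinning down the admissible ranges of $\alpha,\gamma$ (rather than merely invoking a small stepsize) is the delicate, compression-specific part of the analysis.
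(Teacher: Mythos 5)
Your proposal is correct and follows essentially the same route as the paper's proof: the identical Lyapunov function, the same primal/dual/state error recursions linked through the identity $(\vI-\vW)(\vX^{k+1}-\vX^*)=\frac{2\eta}{\gamma}\big(\vI-\frac{\gamma}{2}(\vI-\vW)\big)(\vD^{k+1}-\vD^k)-(\vI-\vW)\vE^k$ (your $\vX^{k+1}=\vY^k-\frac{\gamma}{2}(\vI-\vW)\hat\vY^k$ in disguise), the same co-coercivity bound yielding the $1-\mu\eta(2-\mu\eta)$ primal contraction on $(0,2/(\mu+L)]$, the same unbiasedness arguments for the compression and gradient noise, and the same parameter bookkeeping via $a_1=4(1+C)/(C\beta\gamma+2)$. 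One small correction: no triangle-inequality bound of $\|\vY^k-\vH^k\|^2$ by the Lyapunov components is needed or used — as your final paragraph anticipates, the compression-variance terms $C\big((1-a_1\alpha)\beta\gamma+a_1\alpha^2\big)\|\vY^k-\vH^k\|^2$ are dominated directly by the single negative term $-a_1\alpha(1-\alpha)\|\vY^k-\vH^k\|^2$ from the state update's three-point identity, and making that combined coefficient nonpositive is precisely the quadratic condition in $\alpha$ whose roots give the stated admissible range.
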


\begin{corollary}[Complexity bounds]
\label{thm:complexity_bound}
Define the condition numbers of the objective function and communication graph as $\kappa_f=\frac{L}{\mu}$ and $\kappa_g
% =\lambda_{\max}(\vI-\vW) \cdot \lambda_{\max}((\vI-\vW)^{\dagger})
=\frac{\lambda_{\max}(\vI-\vW)}{\lambda_{\min}^{+}(\vI-\vW)}$, respectively.
Under the same setting in Theorem~\ref{thm:linear_convergence}, we can choose $\eta=\frac{1}{L}, \gamma= \min \{\frac{1}{C\beta\kappa_f}, \frac{1}{(1+3C)\beta}\}$, and $\alpha=\mathcal{O}(\frac{1}{(1+C)\kappa_f})$ such that 
$$\rho = \max \left\{1-\mathcal{O}\Big(\frac{1}{(1+C)\kappa_f}\Big), 1-\mathcal{O}\Big(\frac{1}{(1+C)\kappa_g}\Big), 1-\mathcal{O}\Big(\frac{1}{C\kappa_f \kappa_g }\Big)\right\}.$$
% This recovers classical rates in the following specific cases:
% Some specific cases include:
With full-gradient (i.e., $\sigma=0$), we obtain the following complexity bounds:
\begin{itemize}
    \item LEAD converges to the $\epsilon$-accurate solution with the iteration complexity 
    $$\mathcal{O}\Big ( \big ( (1+C)(\kappa_f + \kappa_g) + C\kappa_f \kappa_g \big ) \log \frac{1}{\epsilon} \Big ).$$

    \item When $C=0$ (i.e., there is no compression), 
    % we can set $\eta=\mathcal{O}(\frac{1}{L}), \gamma=1, \alpha=1$ and $a_1=0$. 
    we obtain $\rho = \max \{ 1-\mathcal{O}(\frac{1}{\kappa_f}), 1-\mathcal{O}(\frac{1}{\kappa_g}) \},$ and 
    the iteration complexity $\mathcal{O}\Big ( (\kappa_f + \kappa_g) \log \frac{1}{\epsilon} \Big )$. This exactly recovers the convergence rate of NIDS~\citep{li2019decentralized}.
    
    \item When $C\leq\frac{\kappa_f+\kappa_g}{\kappa_f\kappa_g+\kappa_f+\kappa_g},$ the asymptotical complexity is $\mathcal{O}\Big ( (\kappa_f + \kappa_g) \log \frac{1}{\epsilon} \Big )$, which also recovers that of NIDS~\citep{li2019decentralized} and indicates that the compression doesn't harm the convergence in this case.
    
    \item With $C=0$ (or $C\leq\frac{\kappa_f+\kappa_g}{\kappa_f\kappa_g+\kappa_f+\kappa_g}$) and fully connected communication graph (i.e., $\vW=\frac{\vone\vone^{\top}}{n}$), we have $\beta=1$ and $\kappa_g=1$. Therefore, we obtain  $\rho = 1-\mathcal{O}(\frac{1}{\kappa_f})$ and the complexity bound $\mathcal{O} (\kappa_f log \frac{1}{\epsilon} )$. This recovers the convergence rate of gradient descent~\citep{nesterov2013introductory}.
\end{itemize}
\end{corollary}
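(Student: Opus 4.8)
The plan is to read Corollary~\ref{thm:complexity_bound} as a pure instantiation of Theorem~\ref{thm:linear_convergence}: plug in the proposed $(\eta,\gamma,\alpha)$, check they are admissible, estimate each of the three terms of $\rho$ up to universal constants, and convert the contraction factor into an iteration count. The computation rests on four elementary identities that I would record first. With $\eta=1/L$ one has $\mu\eta=1/\kappa_f$, so $\mu\eta(2-\mu\eta)=\Theta(1/\kappa_f)$, and $\eta=1/L\le 2/(\mu+L)$ holds automatically because $\mu\le L$. For the graph quantities, $\lambda_{\max}((\vI-\vW)^\dagger)=1/\lambda_{\min}^{+}(\vI-\vW)$ and $\lambda_{\min}^{+}(\vI-\vW)=\beta/\kappa_g$. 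Finally, the chosen $\gamma$ forces $C\beta\gamma\le 1$, hence $a_1=\tfrac{4(1+C)}{C\beta\gamma+2}=\Theta(1+C)$.

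I would first verify admissibility. For $\gamma=\min\{\tfrac{1}{C\beta\kappa_f},\tfrac{1}{(1+3C)\beta}\}$, the branch $\tfrac{1}{(1+3C)\beta}$ lies strictly below $\tfrac{2}{(3C+1)\beta}$, while a one-line estimate shows $\tfrac{1}{\kappa_f}<\tfrac{2\mu\eta(2-\mu\eta)}{2-\mu\eta(2-\mu\eta)}$ for all $\kappa_f\ge 1$, so the branch $\tfrac{1}{C\beta\kappa_f}$ respects the second upper bound on $\gamma$; thus $\gamma$ is feasible. For $\alpha$, using $a_1=\Theta(1+C)$ and $\beta\gamma\le 1$, the upper endpoint $\tfrac{1}{a_1}\min\{\tfrac{2-\beta\gamma}{4-\beta\gamma},\mu\eta(2-\mu\eta)\}$ is $\Theta(\tfrac{1}{(1+C)\kappa_f})$ (the minimum being attained by $\mu\eta(2-\mu\eta)$ once $\kappa_f$ is large, the other factor being $\Theta(1)$), and the lower endpoint $\tfrac{C\beta\gamma}{2(1+C)}$ is $\mathcal{O}(\tfrac{1}{(1+C)\kappa_f})$; comparing the two shows the interval is non-empty and admits $\alpha=\Theta(\tfrac{1}{(1+C)\kappa_f})$.

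Next I would estimate the three terms of $\rho$. The delicate one is $\tfrac{1-\mu\eta(2-\mu\eta)}{1-a_1\alpha}$: I would fix the hidden constant in $\alpha$ so that $a_1\alpha$ is a fixed fraction (say one half) of $s:=\mu\eta(2-\mu\eta)$, yielding $\tfrac{1-s}{1-a_1\alpha}=1-\Theta(s)=1-\Theta(1/\kappa_f)$. Choosing $\alpha$ at its upper endpoint must be avoided, for then $a_1\alpha=s$ and this ratio collapses to $1$. The term $1-\alpha=1-\Theta(\tfrac{1}{(1+C)\kappa_f})$ dominates the previous one in the maximum, so together they contribute $1-\Theta(\tfrac{1}{(1+C)\kappa_f})$. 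For the middle term, $\tfrac{\gamma}{2\lambda_{\max}((\vI-\vW)^\dagger)}=\tfrac{\gamma\beta}{2\kappa_g}$, and substituting the two branches of $\gamma$ gives
\[
1-\frac{\gamma\beta}{2\kappa_g}=1-\min\Big\{\tfrac{1}{2C\kappa_f\kappa_g},\ \tfrac{1}{2(1+3C)\kappa_g}\Big\}=\max\Big\{1-\Theta(\tfrac{1}{C\kappa_f\kappa_g}),\,1-\Theta(\tfrac{1}{(1+C)\kappa_g})\Big\}.
\]
Taking the maximum over all three pieces reproduces the stated $\rho$. This coupling between the constant in $\alpha$ and the first term is the only real subtlety; everything else is bookkeeping with the eigenvalue identities and the two-branch analysis of $\gamma$.

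Finally, with $\sigma=0$ the recursion of Theorem~\ref{thm:linear_convergence} is a pure geometric contraction, so $\tfrac1n\EE\mathcal{L}^k\le\rho^k\,\tfrac1n\mathcal{L}^0$, and reaching $\epsilon$-accuracy needs $k=\mathcal{O}(\tfrac{1}{1-\rho}\log\tfrac1\epsilon)$ via $-\log\rho\ge 1-\rho$. Since $1-\rho=\min\{\Theta(\tfrac{1}{(1+C)\kappa_f}),\Theta(\tfrac{1}{(1+C)\kappa_g}),\Theta(\tfrac{1}{C\kappa_f\kappa_g})\}$, its reciprocal is $\Theta((1+C)(\kappa_f+\kappa_g)+C\kappa_f\kappa_g)$, giving the first bullet. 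The special cases then follow by algebra: as $C\to0$ the quadratic term drops and $\gamma\to\tfrac1\beta$, giving $\mathcal{O}((\kappa_f+\kappa_g)\log\tfrac1\epsilon)$; the condition $C\le\tfrac{\kappa_f+\kappa_g}{\kappa_f\kappa_g+\kappa_f+\kappa_g}$ rearranges to $C\kappa_f\kappa_g\le(1-C)(\kappa_f+\kappa_g)$, which forces $C<1$ and makes the quadratic term $\mathcal{O}(\kappa_f+\kappa_g)$, again yielding $\mathcal{O}((\kappa_f+\kappa_g)\log\tfrac1\epsilon)$; and $\vW=\tfrac{\vone\vone^\top}{n}$ makes $\vI-\vW$ a projection with $\beta=1$ and $\lambda_{\min}^{+}=1$, hence $\kappa_g=1$ and the bound reduces to $\mathcal{O}(\kappa_f\log\tfrac1\epsilon)$.
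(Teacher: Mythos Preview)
Your proposal follows essentially the same route as the paper: instantiate Theorem~\ref{thm:linear_convergence} with $\eta=1/L$ and the stated $\gamma$, verify feasibility, bound each branch of $\rho$, and translate the contraction into an iteration count. The eigenvalue bookkeeping, the two-branch analysis of the middle term of $\rho$, and the derivation of the four bullets all match the paper's argument.

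There is one small gap in your treatment of the first term of $\rho$. You propose to ``fix the hidden constant in $\alpha$ so that $a_1\alpha$ is a fixed fraction (say one half) of $s$.'' But in the branch $\gamma=\tfrac{1}{C\beta\kappa_f}$ the \emph{lower} endpoint already gives $a_1\alpha_{\text{lower}}=\tfrac{2C\beta\gamma}{C\beta\gamma+2}=\tfrac{2}{2\kappa_f+1}$, and one checks $\tfrac{2}{2\kappa_f+1}>\tfrac{s}{2}=\tfrac{2\kappa_f-1}{2\kappa_f^2}$ for every $\kappa_f\ge 1$; hence the choice $a_1\alpha=s/2$ lands \emph{below} the admissible interval for $\alpha$. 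The paper sidesteps this by choosing $\alpha$ concretely in each of the two $\gamma$-branches (at the lower endpoint when $\gamma=\tfrac{1}{C\beta\kappa_f}$, and at an explicit interior point when $\gamma=\tfrac{1}{(1+3C)\beta}$), showing in both cases that $a_1\alpha\le 1/\kappa_f$, and then exploiting the exact factorization $1-\mu\eta(2-\mu\eta)=(1-1/\kappa_f)^2$ to get
\[
\frac{1-\mu\eta(2-\mu\eta)}{1-a_1\alpha}\;\le\;\frac{(1-1/\kappa_f)^2}{1-1/\kappa_f}\;=\;1-\frac{1}{\kappa_f}.
\]
Your idea is salvageable---for instance, taking $\alpha=\alpha_{\text{lower}}$ gives $a_1\alpha\le\tfrac{2}{3}s$ uniformly in $\kappa_f$, which is enough for $1-\Theta(1/\kappa_f)$---but the fraction must be strictly larger than $1/2$, and the paper's explicit case split with the $(1-1/\kappa_f)^2$ identity is cleaner.
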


% \begin{corollary}
% \label{thm:kappa_dominate}
% Under the setting in Theorem~\ref{thm:linear_convergence}, when the condition number $\kappa=\frac{L}{\mu}$ dominates, we can choose $\eta=\frac{1}{L}, \gamma\propto \mathcal{O}(\frac{1}{C\beta\kappa})$ and $\alpha=\frac{C\beta\gamma}{2(1+C)}$ such that the convergence rate $\rho$ is in the order $1-\mathcal{O}(\frac{1}{(1+C)\kappa})$. 
% \end{corollary}

\begin{remark}
Under the setting in Theorem~\ref{thm:linear_convergence}, LEAD converges linearly to the $\mathcal{O}(\sigma^2)$ neighborhood of the optimum and converges linearly exactly to the optimum if full gradient is used, e.g., $\sigma=0$. The linear convergence of LEAD holds when $\eta<2/L$, 
but we omit the proof. 
\end{remark}

\begin{remark}[Arbitrary compression precision]
Pick any $\eta\in(0,{2}/({\mu+L})],$ based on the compression-related constant $C$ and the network-related constant $\beta$, we can select $\gamma$ and $\alpha$ in certain ranges to achieve the convergence. It suggests that LEAD supports unbiased compression with arbitrary precision, i.e., any $C>0$.
\end{remark}

\begin{corollary}[Consensus error] 
\label{thm:consensus}
Under the same setting in Theorem~\ref{thm:linear_convergence}
, let $\overbar{\vx}^k=\frac{1}{n}\sum_{i=1}^n\vx^k_i$ be the averaged model and $\vH^0=\vH^1$, then all agents achieve consensus at the rate 
\begin{align} 
    \frac{1}{n}\sum_{i=1}^{n}\EE\left\|\vx_i^k-\overbar{\vx}^k\right\|^2 \leq \frac{2\mathcal{L}^{0}}{n}\rho^{k} + \frac{2\sigma^2}{1-\rho}\eta^2.
\end{align}
where $\rho$ is defined as in Corollary~\ref{thm:complexity_bound} with appropriate parameter settings.
\end{corollary}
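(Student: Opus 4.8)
The plan is to read off the consensus bound from the one-step contraction already proved in Theorem~\ref{thm:linear_convergence}; almost all of the content is in telescoping that recursion and in dominating the consensus error by the Lyapunov function $\mathcal{L}^k$. First I would iterate the inequality $\frac{1}{n}\EE\mathcal{L}^{k+1}\le \rho\,\frac{1}{n}\EE\mathcal{L}^{k}+\eta^2\sigma^2$. Since $\rho<1$, summing the geometric tail with $\sum_{j=0}^{k-1}\rho^j\le \frac{1}{1-\rho}$ yields
\begin{align*}
\frac{1}{n}\EE\mathcal{L}^{k}\le \rho^{k}\,\frac{1}{n}\mathcal{L}^{0}+\frac{\eta^2\sigma^2}{1-\rho}.
\end{align*}
The extra hypothesis $\vH^0=\vH^1$ is exactly what makes $\mathcal{L}^0$ well defined and deterministic, so the telescoped bound can be anchored at the initialization (and $\EE\mathcal{L}^0=\mathcal{L}^0$).

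Next I would rewrite the consensus error as the norm of a projection. With $\overbar{\vX}^k=\frac{1}{n}\vone^\top\vX^k$ and $\vP=\vI-\frac{1}{n}\vone\vone^\top$ the orthogonal projection onto $\vone^\perp$, we have $\sum_{i}\|\vx_i^k-\overbar{\vx}^k\|^2=\|\vP\vX^k\|^2$. Because the optimum is consensual, $\vX^*=\vone(\vx^*)^\top$ lies in $\Ker\vP$, so $\vP\vX^k=\vP(\vX^k-\vX^*)$, and $\|\vP\|\le 1$ gives
\begin{align*}
\frac{1}{n}\sum_{i=1}^{n}\EE\|\vx_i^k-\overbar{\vx}^k\|^2\le \frac{1}{n}\EE\|\vX^k-\vX^*\|^2.
\end{align*}

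It then remains to bound $\|\vX^k-\vX^*\|^2$ by $\mathcal{L}^k$. All three terms defining $\mathcal{L}^k$ are nonnegative --- the dual term because $(\vI-\vW)^\dagger$ is positive semidefinite --- so $\mathcal{L}^k\ge (1-a_1\alpha)\|\vX^k-\vX^*\|^2$. The crucial arithmetic point is $1-a_1\alpha\ge \frac12$: the upper endpoint for $\alpha$ in Theorem~\ref{thm:linear_convergence} forces $a_1\alpha\le \frac{2-\beta\gamma}{4-\beta\gamma}$, and $\frac{2-\beta\gamma}{4-\beta\gamma}\le \frac12$ for every $\beta\gamma\ge 0$. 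Hence $\|\vX^k-\vX^*\|^2\le \frac{1}{1-a_1\alpha}\mathcal{L}^k\le 2\mathcal{L}^k$. Chaining the three displays gives
\begin{align*}
\frac{1}{n}\sum_{i=1}^{n}\EE\|\vx_i^k-\overbar{\vx}^k\|^2\le \frac{2}{n}\EE\mathcal{L}^k\le \frac{2\mathcal{L}^0}{n}\rho^{k}+\frac{2\sigma^2}{1-\rho}\eta^2,
\end{align*}
which is the claim.

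The telescoping and the projection estimate are routine. The step I expect to require the most care is the constant in the last display: the clean factor $2$ in the statement is not automatic from positivity of the Lyapunov terms but hinges on the algebraic fact $\frac{2-\beta\gamma}{4-\beta\gamma}\le\frac12$, i.e. on the specific upper bound chosen for $\alpha$, so I would verify that inequality (and the admissible range of $\beta\gamma$) explicitly, along with checking that the $\vH^0=\vH^1$ initialization really does make the one-step recursion valid down to $k=0$.
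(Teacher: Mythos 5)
Your proposal is correct and follows essentially the same route as the paper's own proof: rewriting the consensus error as the orthogonal projection of $\vX^k-\vX^*$ onto $\vone^\perp$ (so it is dominated by $\EE\|\vX^k-\vX^*\|^2$), bounding that by $2\EE\mathcal{L}^k$ via $a_1\alpha\le 1/2$, and telescoping the contraction from Theorem~\ref{thm:linear_convergence} down to $\mathcal{L}^0$. Your explicit verification that $a_1\alpha\le\frac{2-\beta\gamma}{4-\beta\gamma}\le\frac12$ is a slightly more detailed justification of the factor $2$ than the paper gives, but it is the same argument in substance.
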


\begin{theorem}[Diminishing stepsize]
\label{thm:diminish}
Let $\{\vX^k, \vH^k, \vD^k\}$ be the sequence generated from Alg.~\ref{algocomplete} and $\vX^*$ is the optimal solution with $\vD^*=-\nabla\vF(\vX^*)$. 
Under Assumptions~\ref{ass:mixing}-\ref{ass:smooth}, if  $\eta_k=\frac{2\theta_5}{\theta_3\theta_4\theta_5 k+2}$ and $\gamma_k=\theta_4\eta_k$, by taking $\alpha_k=\frac{C\beta\gamma_k}{2(1+C)}$, in total expectation we have
\begin{align}
%   \frac{1}{n}\EE\mathcal{L}^k\leq\frac{2\max\left\{{ \frac{1}{n}\mathcal{L}^0},\frac{2\sigma^2\theta_5}{\theta_3\theta_4}\right\}}{\theta_3\theta_4 \theta_5 k+2}. \\
%   \frac{1}{n}\EE\mathcal{L}^k \lesssim O(\frac{1}{k})
  \frac{1}{n}\sum_{i=1}^{n}\EE\left\|\vx_i^k-\vx^*\right\|^2  \lesssim \mathcal{O}\left(\frac{1}{k}\right)
\end{align}
where
% \begin{align}
%     \rho = \max \{1-\frac{\mu(2\eta-L\eta^2)-a_1\alpha}{1-a_1\alpha}, 1-\frac{1}{1+\lambda_{max}(M)}, 1-\alpha \}
% \end{align}
% \begin{align*}
% $\mathcal{L}^k &\coloneqq (1-a_1\alpha_k)\|\vX^{k}-\vX^*\|^2 + (2\eta_k^2/\gamma_k) \EE\|\vD^{k+1}-\vD^*\|^2_{(\vI-\vW)^\dagger}+ a_1\|\vH^k-\vX^*\|^2 \label{eqn_Lk}$
% \end{align*}
% and 
$\theta_1, \theta_2, \theta_3, \theta_4$ and $\theta_5$ are constants defined in the proof. The complexity bound for arriving at the $\epsilon$-accurate solution is $\mathcal{O}(\frac{1}{\epsilon})$.
\end{theorem}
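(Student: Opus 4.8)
The plan is to turn the single-step descent established in the proof of Theorem~\ref{thm:linear_convergence} into a \emph{time-varying} recursion and then invoke a standard decaying-stepsize induction. The analysis behind Theorem~\ref{thm:linear_convergence} never really uses constancy of $(\eta,\gamma,\alpha)$ across iterations: at each step $k$ it produces a one-step inequality
\[
\frac{1}{n}\EE\mathcal{L}^{k+1}\le \rho_k\,\frac{1}{n}\EE\mathcal{L}^k + \eta_k^2\sigma^2,
\]
where now $\mathcal{L}^k$ and $\rho_k$ carry the step-$k$ parameters through $a_1$, the dual weight $2\eta_k^2/\gamma_k$, and the three quantities inside the $\max$ defining $\rho$. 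First I would substitute the prescribed $\gamma_k=\theta_4\eta_k$ and $\alpha_k=\frac{C\beta\gamma_k}{2(1+C)}$ and verify that these sequences lie in the admissible windows of Theorem~\ref{thm:linear_convergence} for every $k$ (any finite index where this fails is absorbed into the final constant): since all parameters are proportional to $\eta_k$ and decrease to $0$, the upper bounds in those windows hold automatically, while the lower bound on $\alpha_k$ holds with equality by construction.

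The key algebraic reduction is to show $\rho_k\le 1-\theta_3\theta_4\eta_k$ for a contraction constant $\theta_3$ extracted as a common linear rate from the three branches of $\rho$. For small $\eta_k$ the first branch is $1-\mu\eta_k(2-\mu\eta_k)+O(\eta_k^2)=1-O(\eta_k)$, the second equals $1-\frac{\gamma_k}{2\lambda_{\max}((\vI-\vW)^\dagger)}=1-O(\eta_k)$, and the third is $1-\alpha_k=1-O(\eta_k)$; taking $\theta_3\theta_4$ to lower-bound the slowest of these three linear rates yields the claimed form. Substituting $\eta_k=\frac{2\theta_5}{\theta_3\theta_4\theta_5 k+2}$ rewrites it as $\eta_k=\frac{\nu}{k+\tau}$ with $\nu=\frac{2}{\theta_3\theta_4}$ and $\tau=\frac{2}{\theta_3\theta_4\theta_5}$, so the effective product is $(\theta_3\theta_4)\nu=2>1$. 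This strict inequality is precisely the condition under which the scalar recursion $v_{k+1}\le(1-\theta_3\theta_4\eta_k)v_k+\eta_k^2\sigma^2$ with $v_k\coloneqq\frac1n\EE\mathcal{L}^k$ admits an $O(1/k)$ bound: a one-line induction shows $v_k\le \frac{Q}{k+\tau}$ once $Q\ge \frac{\sigma^2\nu^2}{(\theta_3\theta_4)\nu-1}$, with $Q$ also dominating the initial $v_0$.

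The main obstacle I anticipate is that the Lyapunov function itself is \emph{not fixed}: the coefficients $(1-a_1\alpha_k)$, $2\eta_k^2/\gamma_k=2\eta_k/\theta_4$, and $a_1=\frac{4(1+C)}{C\beta\gamma_k+2}$ all drift with $k$, so the telescoping that is transparent in the constant-stepsize case is no longer automatic—the potential weighted by step-$(k{+}1)$ parameters on the left is not the same object as the potential weighted by step-$k$ parameters on the right. I would control this by observing that $a_1$ is monotone and bounded ($a_1\uparrow 2(1+C)$ as $\gamma_k\downarrow 0$) and the dual weight decreases, so the step-$(k{+}1)$ potential is dominated by the step-$k$ potential up to a factor $1+O(\eta_k-\eta_{k+1})=1+O(1/k^2)$; these multiplicative slacks are summable and fold into $Q$ without degrading the $O(1/k)$ rate.

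Finally, since $\vX^*=[\vx^*,\dots,\vx^*]^\top$ gives $\frac1n\sum_{i=1}^n\EE\|\vx_i^k-\vx^*\|^2=\frac1n\EE\|\vX^k-\vX^*\|^2\le \frac{1}{1-a_1\alpha_k}\,v_k$, and $1-a_1\alpha_k$ stays bounded away from $0$ because $\alpha_k\to 0$, the $O(1/k)$ bound on $v_k$ transfers directly to the claimed consensus-plus-optimality error, which in turn yields the $\mathcal{O}(1/\epsilon)$ complexity.
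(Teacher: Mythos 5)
Your proposal follows essentially the same route as the paper's proof: rerun the one-step inequality behind Theorem~\ref{thm:linear_convergence} with per-iteration parameters, reduce the three branches of $\rho_k$ to a common linear contraction $1-\theta_3\theta_4\eta_k$ (the paper writes this as $\rho_k\le 1-\theta_3\gamma_k$ with $\gamma_k=\theta_4\eta_k$ and $\theta_3=\min\{\theta_1,\theta_2\}$), and close with the standard induction for $v_{k+1}\le(1-\theta_3\theta_4\eta_k)v_k+\eta_k^2\sigma^2$ under $\eta_k=\nu/(k+\tau)$ with effective product $\theta_3\theta_4\nu=2>1$, finally converting to $\|\vx_i^k-\vx^*\|^2$ via $1-a_1\alpha_k\ge 1/2$; this matches the paper's induction with $D=\max\{A\mathcal{L}^0,2n\sigma^2/(\theta_3\theta_4)\}$ step for step.

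Two remarks. First, one asymptotic claim is slightly off as stated: the first branch of $\rho_k$ is $\frac{1-\mu\eta_k(2-\mu\eta_k)}{1-a_1\alpha_k}$, and since $a_1\alpha_k=\frac{2C\beta\gamma_k}{C\beta\gamma_k+2}=\Theta(\eta_k)$ rather than $O(\eta_k^2)$, extracting a linear rate is not automatic for small $\eta_k$; it requires $a_1\alpha_k$ to be dominated by $\mu\eta_k(2-\mu\eta_k)$ with slack. The paper enforces this via $\mu\eta_k(2-\mu\eta_k)\ge 2a_1\alpha_k$, which with $\gamma_k=\theta_4\eta_k$ amounts to $\theta_4<\mu/(C\beta)$ and $\eta_k\le\eta_*=\frac{2(\mu-C\beta\theta_4)}{\mu^2}$. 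Since $\theta_4,\theta_5$ are constants you are free to fix, your plan of ``verifying the admissible windows'' absorbs this, so it is a repairable imprecision rather than a gap. Second, your explicit treatment of the drifting Lyapunov weights is actually more careful than the paper: the weights $1-a_1\alpha_k$ and $a_1^{(k)}=\frac{4(1+C)}{C\beta\gamma_k+2}$ both \emph{increase} with $k$, so the step-$(k{+}1)$ potential is not literally dominated by the left-hand side of the step-$k$ inequality, a point the paper's displayed recursion $\EE\mathcal{L}^{k+1}\le(1-\theta_3\gamma_k)\EE\mathcal{L}^k+n\sigma^2\eta_k^2$ passes over silently. Your multiplicative-slack bound $1+O(\eta_k-\eta_{k+1})=1+O(1/k^2)$, summable and hence foldable into the constant without degrading the contraction margin $\theta_3\theta_4\nu=2>1$, is a correct way to make that step rigorous.
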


\begin{remark}
Compared with CHOCO-SGD, LEAD requires unbiased compression and the convergence under biased compression is not investigated yet. 
% This is an acceptable assumption considering that lots of biased compression can be converted to an unbiased one by appropriate random dithering. 
The analysis of CHOCO-SGD relies on the bounded gradient assumptions, i.e., $\|\nabla f_i(\vx)\|^2\leq G$, which is restrictive because it conflicts with the strong convexity while LEAD doesn't need this assumption.
% LEAD converges linearly, which is significantly faster than the sublinear convergence of CHOCO-SGD~\citep{ksj2019choco} for the full-gradient case. Moreover, LEAD only makes mild assumptions. 
Moreover, in the theorem of CHOCO-SGD, it requires a specific point set of $\gamma$ while LEAD only requires $\gamma$ to be within a rather large range. This may explain the advantages of LEAD over CHOCO-SGD in terms of robustness to parameter setting.
\end{remark}

\section{Numerical Experiment}
\label{sec:experiment}
We consider three machine learning problems --  $\ell_2$-regularized linear regression, logistic regression, and deep neural network. 
The proposed LEAD is compared with QDGD~\citep{reisizadeh2019exact}, DeepSqueeze~\citep{dblpDeepSqueeze}, CHOCO-SGD~\citep{ksj2019choco}, and two non-compressed algorithms DGD~\citep{yuan2016convergence} and NIDS~\citep{li2019decentralized}. 

\textbf{Setup.} 
We consider eight machines connected in a ring topology network. Each agent can only exchange information with its two 1-hop neighbors. The mixing weight is simply set as $1/3$. For compression, we use the unbiased $b$-bits quantization method with $\infty$-norm
\begin{align}
Q_{\infty}(\vx):=\left(\|\vx\|_{\infty} 2^{-(b-1)} \sign(\vx) \right)  \cdot \left \lfloor{\frac{2^{(b-1)}|\vx|}{\|\vx\|_{\infty}}+\vu}\right \rfloor,
\end{align}
% \vspace{-0.1in}
% \begin{align}
% Q(\vx):= \|\vx\|_{\infty} \cdot \text{sign}(\vx) \cdot \frac{1}{2^{(b-1)}-1} \cdot \left \lfloor{\frac{(2^{(b-1)}-1)|\vx|}{\|\vx\|_{\infty}}+\vu}\right \rfloor \nonumber
% \end{align}
where $\cdot$ is the Hadamard product, $|\vx|$ is the elementwise absolute value of $\vx$, and $\vu$ is a random vector uniformly distributed in $[0, 1]^d$. Only sign$(\vx)$, norm $\|\vx\|_{\infty}$, and integers in the bracket need to be transmitted. 
Note that this quantization method is similar to the quantization used in QSGD~\citep{NIPS2017_qsgd} and CHOCO-SGD~\citep{ksj2019choco}, but we use the $\infty$-norm scaling instead of the $2$-norm. This small change brings significant improvement on compression precision as justified both theoretically and empirically in Appendix~\ref{sec:quantization}.
In this section, we choose $2$-bit quantization and quantize the data blockwise (block size = $512$).

For all experiments, we tune the stepsize $\eta$ from $\{0.01, 0.05, 0.1, 0.5\}$. For QDGD, CHOCO-SGD and Deepsqueeze, $\gamma$ is tuned from $\{0.01, 0.1, 0.2, 0.4, 0.6, 0.8, 1.0\}$. Note that different notations are used in their original papers. Here we uniformly denote the stepsize as $\eta$ and the additional parameter in these algorithms as $\gamma$ for simplicity.
For LEAD, we simply fix $\alpha=0.5$ and $\gamma=1.0$ for all experiments since we find LEAD is robust to parameter settings as we validate in the parameter sensitivity analysis in Appendix~\ref{sec:para_analysys}. This indicates the minor effort needed for tuning LEAD. Detailed parameter settings for all experiments are summarized in Appendix~\ref{apend:para_setting}.

\textbf{Linear regression.}
We consider the problem: $f(\vx)=\sum_{i=1}^n (\|\vA_i\vx-\vb_i\|^2+\lambda \|\vx\|^2)$. Data matrices $\vA_i \in \mathbb{R}^{200 \times 200}$ and the true solution $\vx'$ is randomly synthesized. The values $\vb_i$ are generated by adding Gaussian noise to $\vA_i\vx'$. We let $\lambda=0.1$ and the optimal solution of the linear regression problem be $\vx^*$. We use full-batch gradient to exclude the impact of gradient variance. 
The performance is showed in Fig.~\ref{fig:linear_regression}. The distance to $\vx^*$ in Fig.~\ref{fig:ls_a} and the consensus error in Fig.~\ref{fig:ls_c} verify that LEAD converges exponentially to the optimal consensual solution. It significantly outperforms most baselines and matches NIDS well under the same number of iterations. Fig.~\ref{fig:ls_b} demonstrates the benefit of compression when considering the communication bits. Fig.~\ref{fig:ls_d} shows that the compression error vanishes for both LEAD and CHOCO-SGD while the compression error is pretty large for QDGD and DeepSqueeze because they directly compress the local models. 

\begin{figure*}[!ht]
\vspace{-0.1in}
\centering
\subfloat[$\|\vX^k-\vX^*\|_F$]{
\begin{minipage}[t]{0.4\textwidth}
    \centering
   \includegraphics[width=1.0\textwidth]{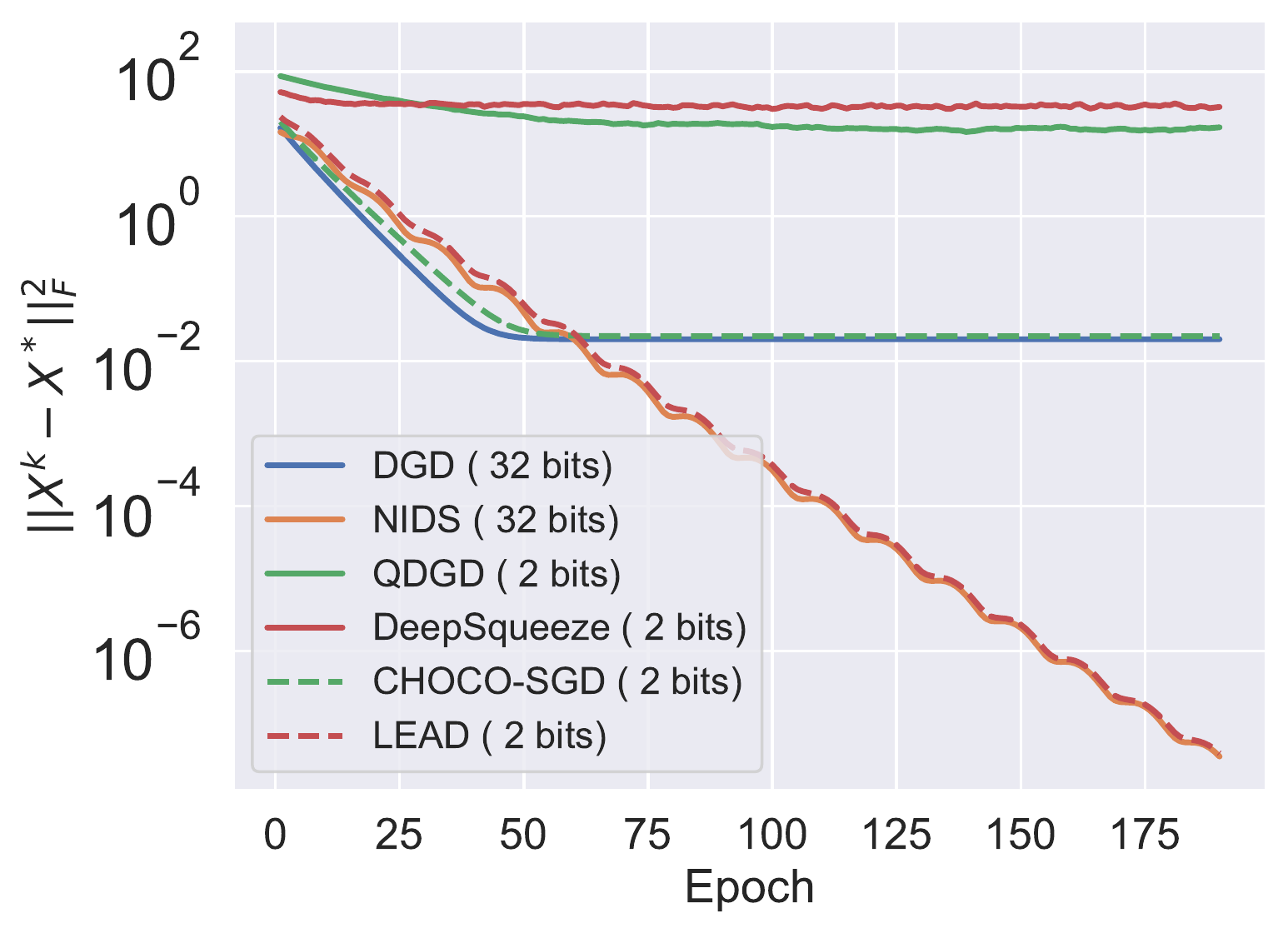}
    \vspace{-0.35in}
    \label{fig:ls_a}
\end{minipage}
}
\subfloat[$\|\vX^k-\vX^*\|_F$]{
\begin{minipage}[t]{0.42\textwidth}
    \centering
    \includegraphics[width=1.0\textwidth]{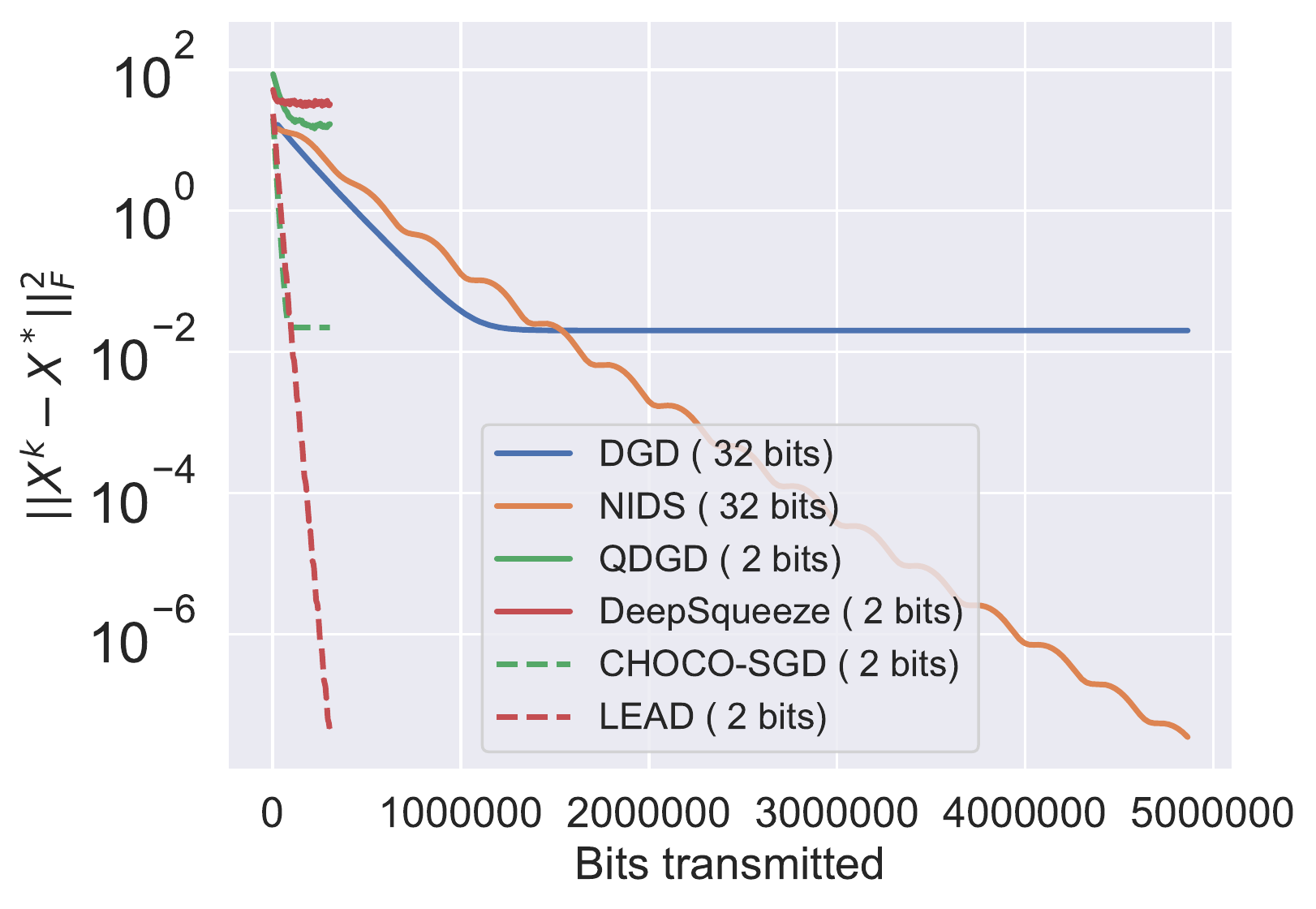}
    \vspace{-0.35in}
    \label{fig:ls_b}
\end{minipage}
}
% \vspace{-0.1in}
\\
\subfloat[$\|\vX^k-{\vone_{n\times 1}\overbar\vX^k}\|_F$]{
\begin{minipage}[t]{0.4\textwidth}
    \centering
    \includegraphics[width=1.0\textwidth]{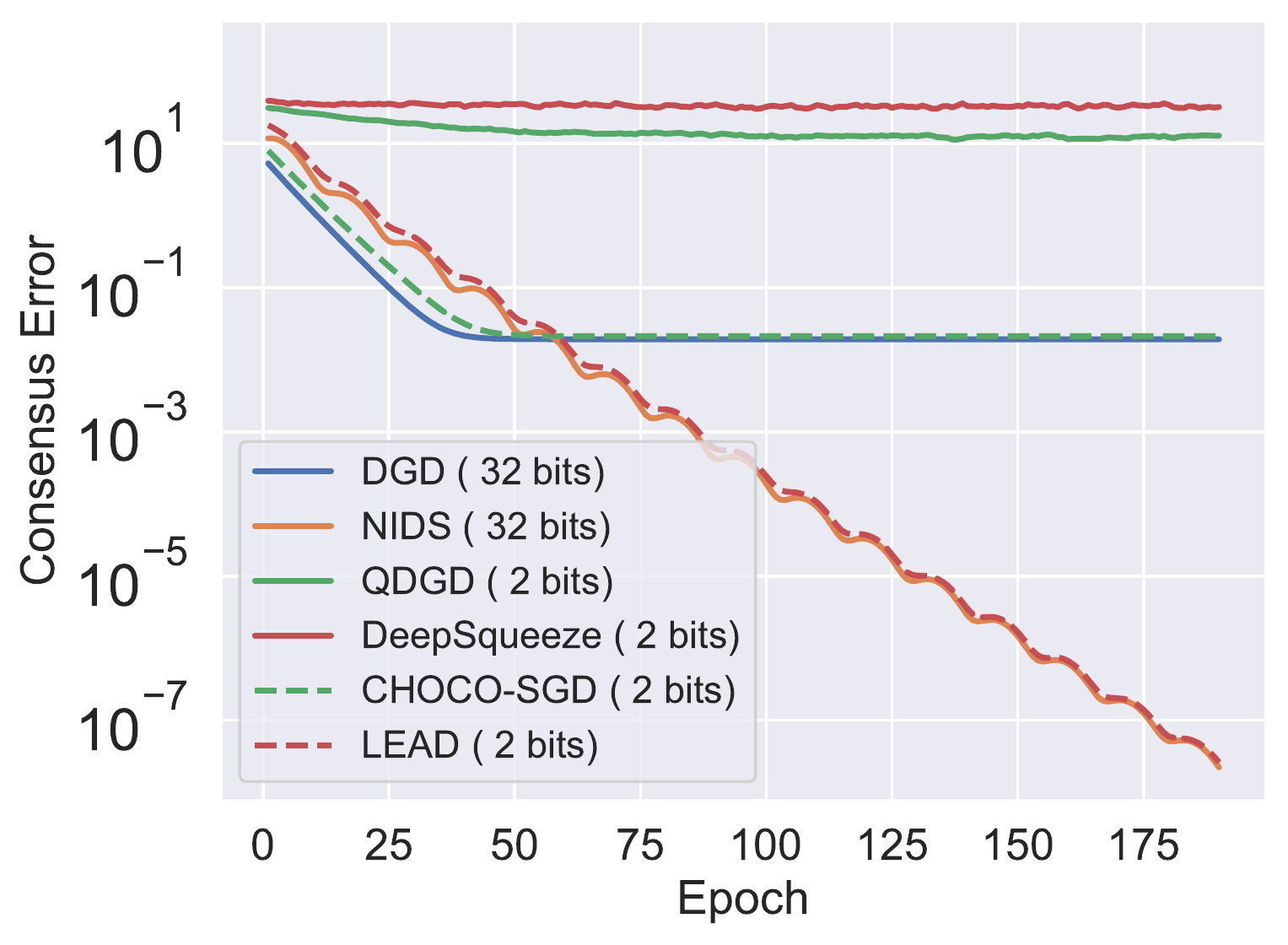}
    \vspace{-0.35in}
    \label{fig:ls_c}
\end{minipage}
}
\subfloat[Compression error]{
\begin{minipage}[t]{0.4\textwidth}
    \centering
    \includegraphics[width=1.0\textwidth]{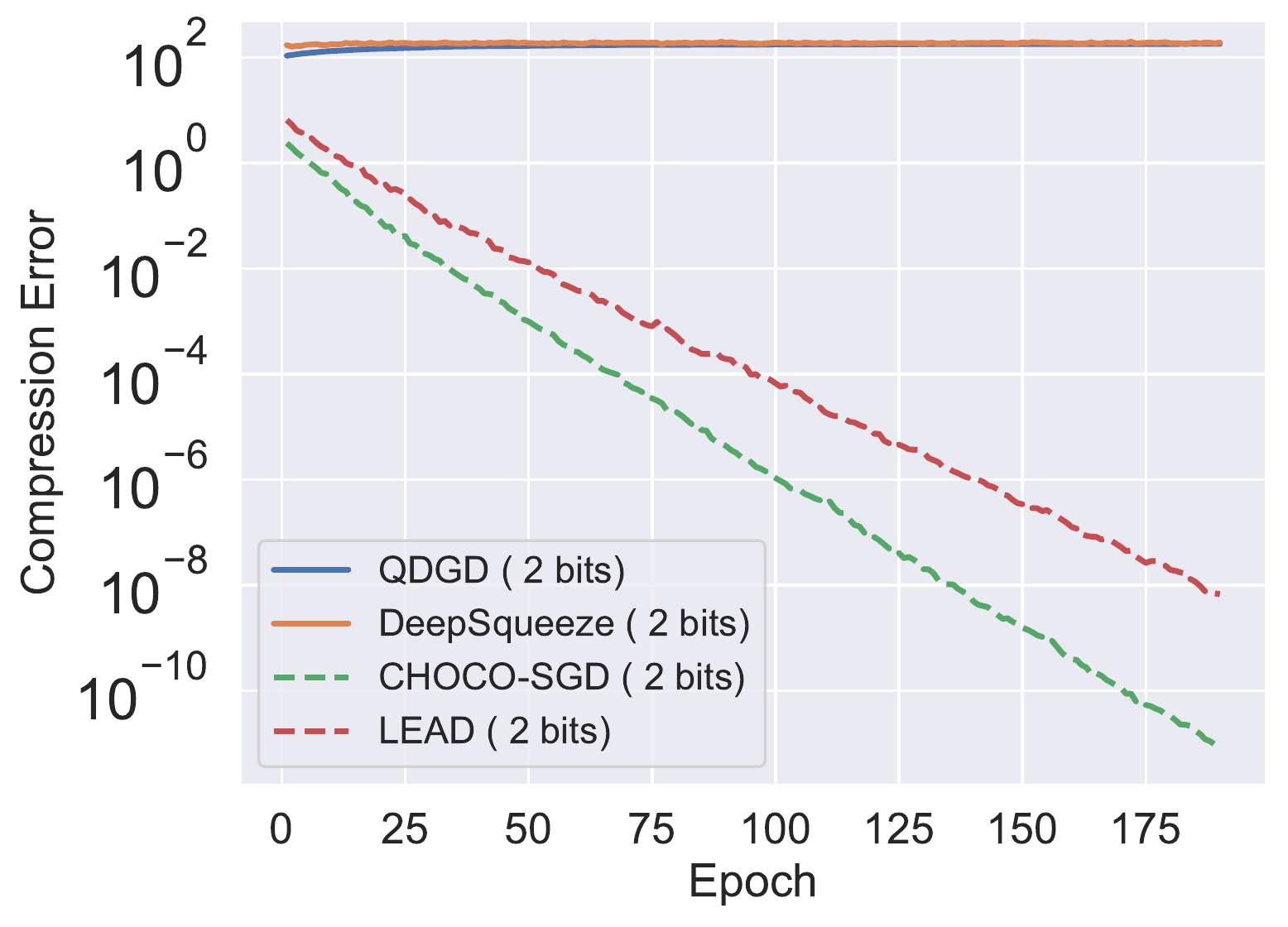}
    \vspace{-0.35in}
    \label{fig:ls_d}
\end{minipage}
}

\caption{Linear regression problem.}
\label{fig:linear_regression}
% \vspace{-5pt}
\end{figure*}

\textbf{Logistic regression.}
We further consider a logistic regression problem on the MNIST dataset. The regularization parameter is $10^{-4}$. We consider both \textit{homogeneous} and \textit{heterogeneous} data settings. In the homogeneous setting, the data samples are randomly shuffled before being uniformly partitioned among all agents such that the data distribution from each agent is very similar. In the heterogeneous setting, the samples are first sorted by their labels and then partitioned among agents. 
Due to the space limit, we mainly present the results in heterogeneous setting here and defer the homogeneous setting to Appendix~\ref{append:addition_experiment}. The results using full-batch gradient and mini-batch gradient (the mini-batch size is $512$ for each agent) are showed in Fig.~\ref{fig:logistic_heter_full} and Fig.~\ref{fig:logistic_heter_mini} respectively and both settings shows the faster convergence and higher precision of LEAD.

%%%%%%%%%%%%%%%%%%%%%%%%%%%%%%%%%%%%%%%%%%%%%%%%%
%%%%%%%%%% Logitstic regression: unshuffled case
%%%%%%%%%%%%%%%%%%%%%%%%%%%%%%%%%%%%%%%%%%%%%%%%%
\begin{figure*}[!ht]
\vspace{-0.2in}
\centering
\subfloat[Loss $f(\overbar \vX^k)$]{
\begin{minipage}[t]{0.4\textwidth}
    \centering
    \includegraphics[width=1.0\textwidth]{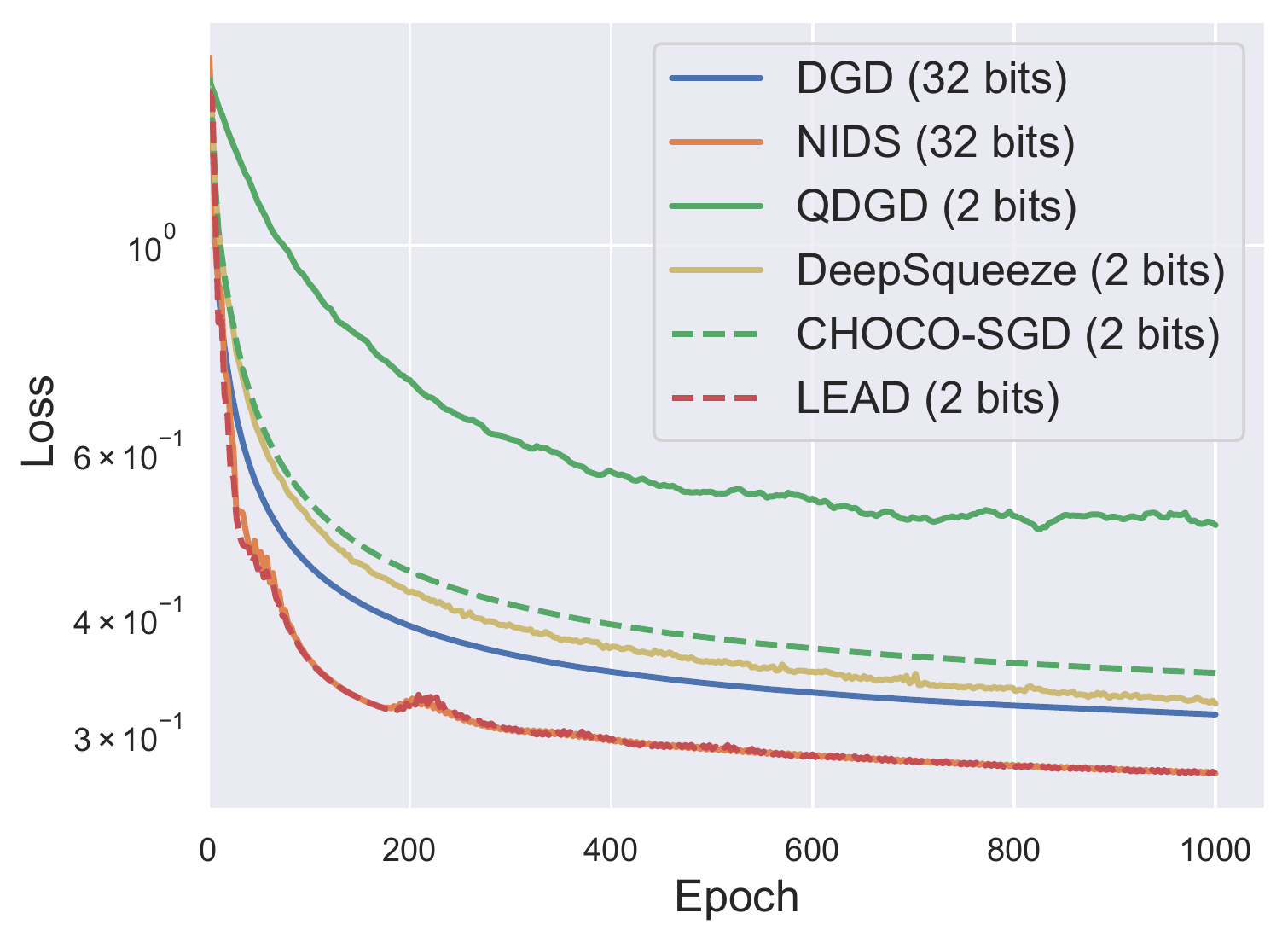}
    \vspace{-0.35in}
    \label{fig:lgu_c}
\end{minipage}
}
\subfloat[Loss $f(\overbar \vX^k)$]{
\begin{minipage}[t]{0.4\textwidth}
    \centering
    \includegraphics[width=1.0\textwidth]{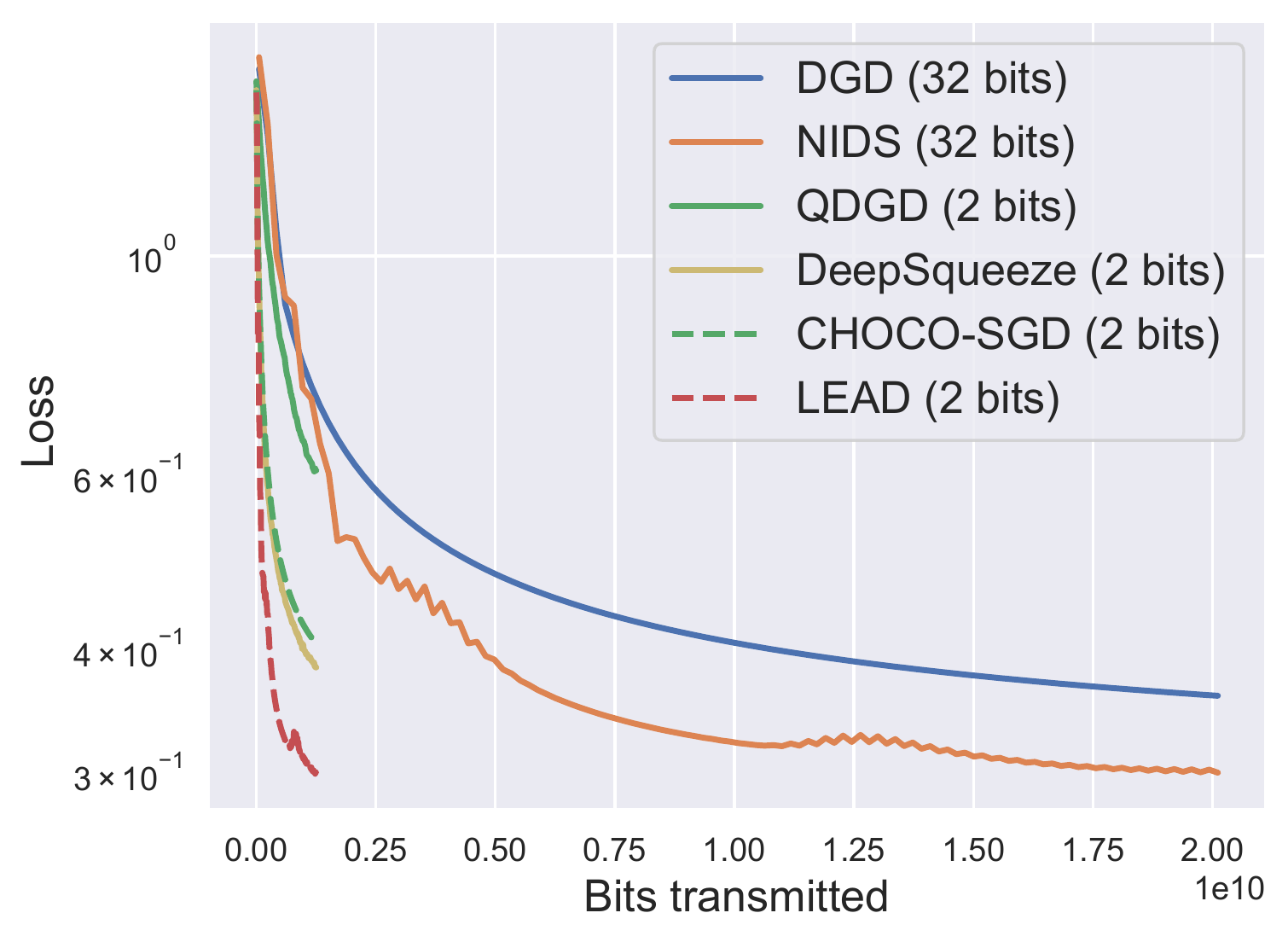}  
    \vspace{-0.35in}
    \label{fig:lgu_d}
\end{minipage}
}
% \vspace{-0.1in}
\caption{Logistic regression problem in the heterogeneous case (full-batch gradient).}
\label{fig:logistic_heter_full}
\end{figure*}
% \vspace{-1in}

\begin{figure*}[!ht]
\vspace{-0.2in}
\begin{center}
\subfloat[Loss $f(\overbar \vX^k)$]{
\begin{minipage}[t]{0.4\textwidth}
    \centering
    \includegraphics[width=1.0\textwidth]{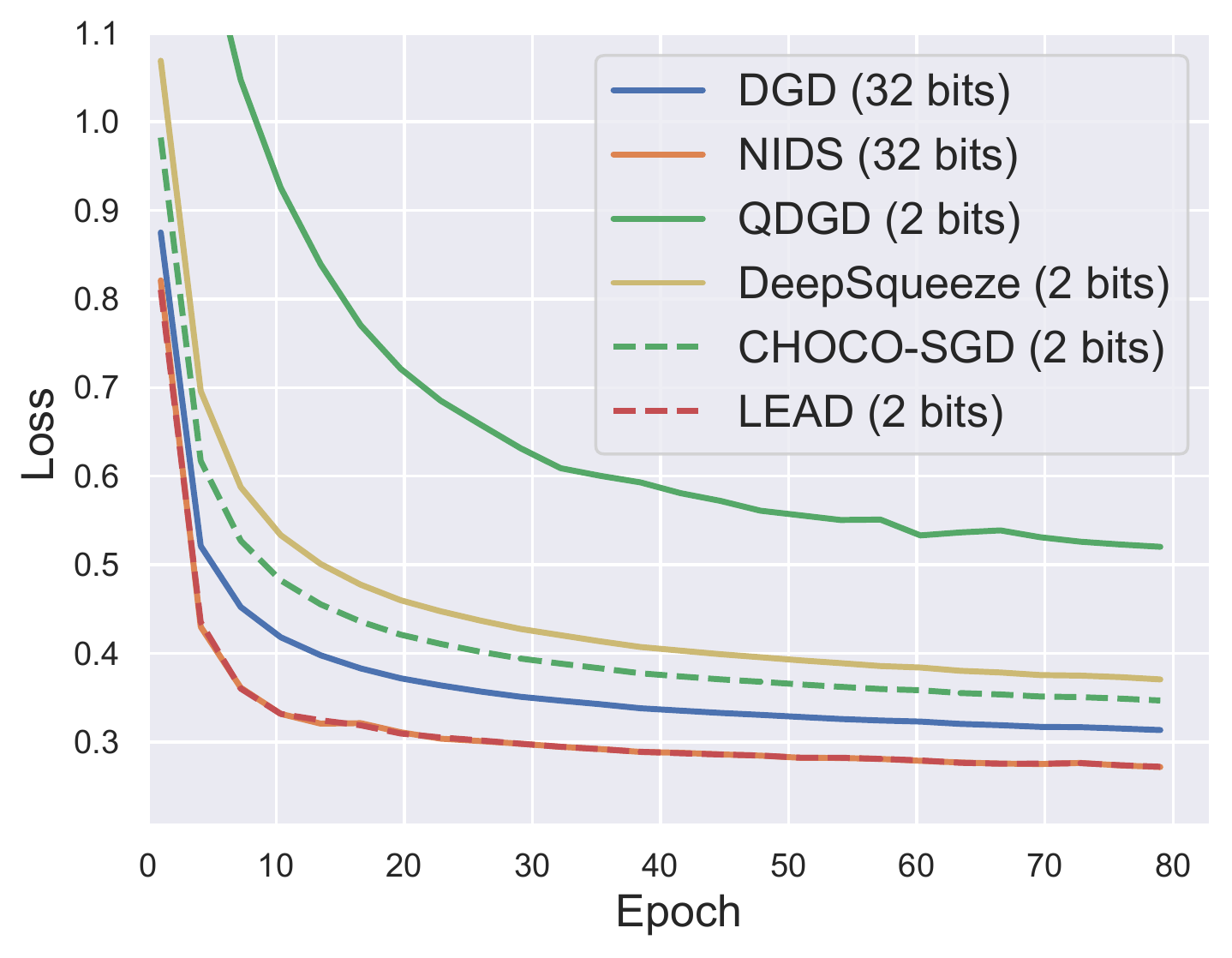}
    \vspace{-0.35in}
\end{minipage}
}
\subfloat[Loss $f(\overbar \vX^k)$]{
\begin{minipage}[t]{0.4\textwidth}
    \centering
    \includegraphics[width=1.0\textwidth]{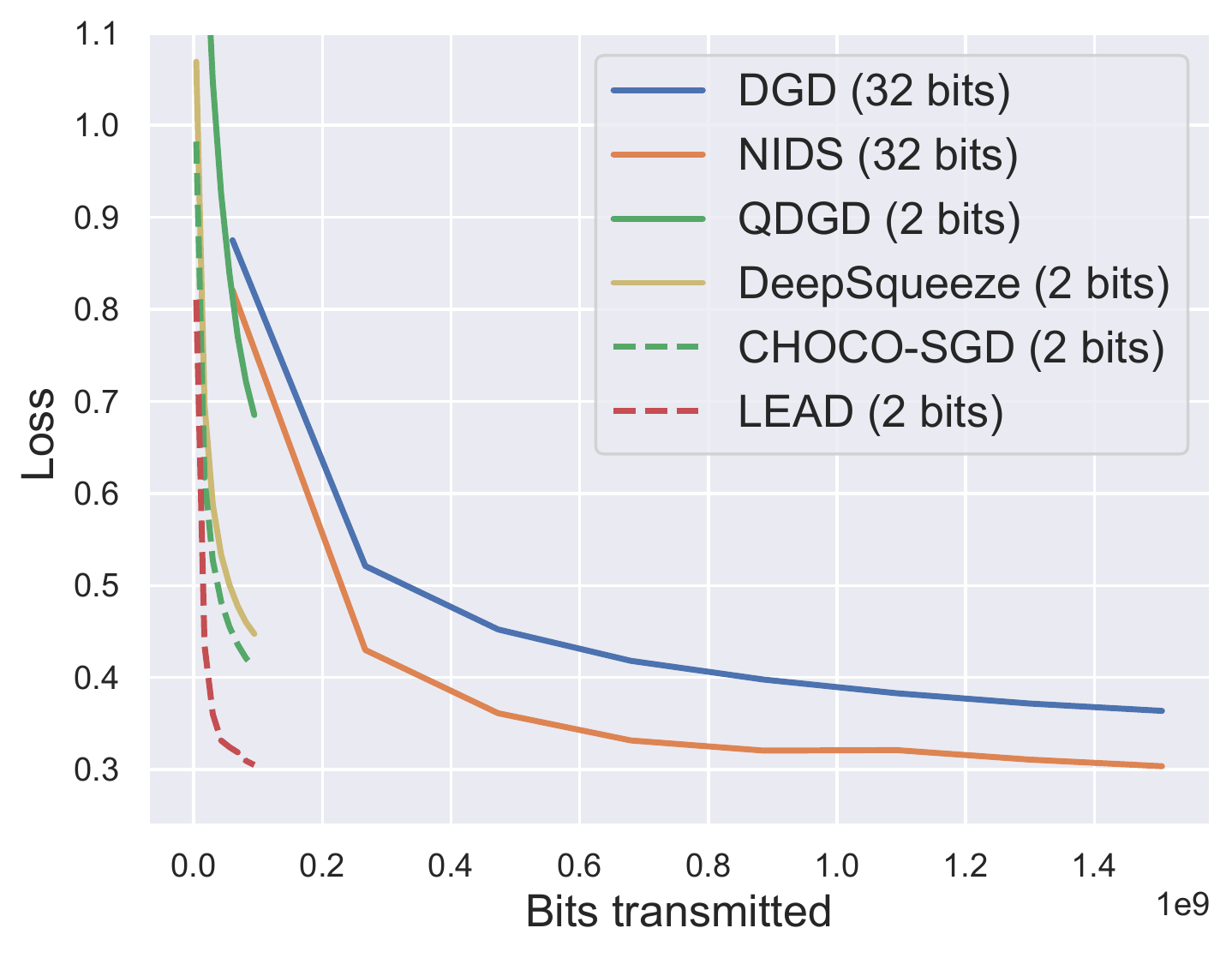}
    \vspace{-0.35in}
\end{minipage}
}
\end{center}
\vspace{-0.1in}
\caption{Logistic regression in the heterogeneous case (mini-batch gradient).}
\label{fig:logistic_heter_mini}
\end{figure*}

\begin{figure*}[!ht]
\vspace{-0.2in}
\begin{center}
\subfloat[Loss $f(\overbar \vX^k)$]{
\begin{minipage}[t]{0.4\textwidth}
    \centering
    \includegraphics[width=1.0\textwidth]{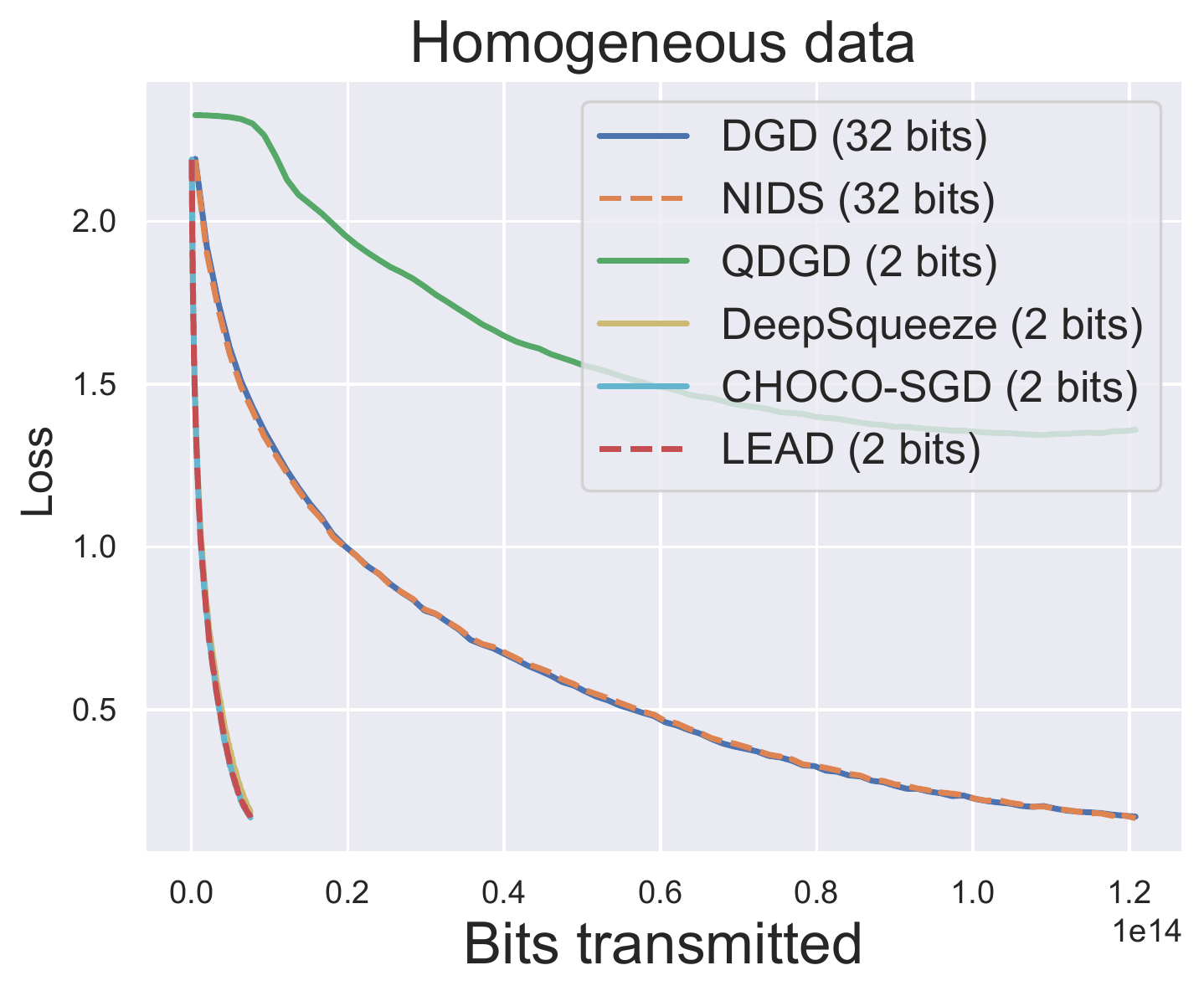}
    \vspace{-0.25in}
\end{minipage}
}
\subfloat[Loss $f(\overbar \vX^k)$]{
\begin{minipage}[t]{0.4\textwidth}
    \centering
    \includegraphics[width=1.0\textwidth]{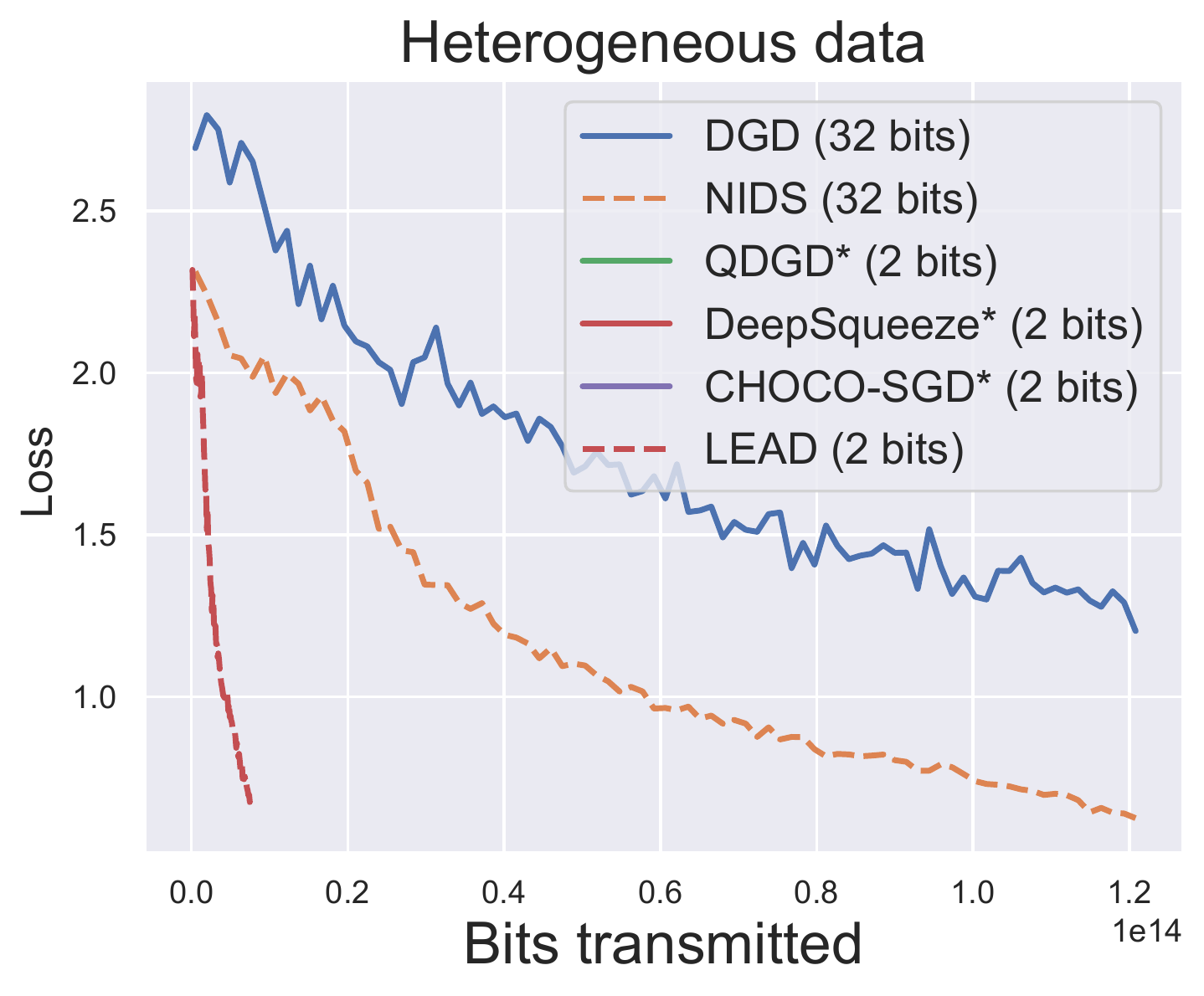}
    \vspace{-0.25in}
\end{minipage}
}
\end{center}
\vspace{-0.1in}
\caption{Stochastic optimization on deep neural network ($*$ means divergence).} 
\label{fig:alexnet_heter}
\vspace{-0.1in}
\end{figure*}

\textbf{Neural network.} We empirically study the performance of LEAD in optimizing deep neural network by training AlexNet ($240$ MB) on CIFAR10 dataset. The mini-batch size is $64$ for each agents. Both the homogeneous and heterogeneous case are showed in Fig.~\ref{fig:alexnet_heter}. In the homogeneous case, CHOCO-SGD, DeepSqueeze and LEAD perform similarly and outperform the non-compressed variants in terms of communication efficiency, but CHOCO-SGD and DeepSqueeze need more efforts for parameter tuning because their convergence is sensitive to the setting of $\gamma$. 
In the heterogeneous cases, LEAD achieves the fastest and most stable convergence. Note that in this setting, sufficient information exchange is more important for convergence because models from different agents are moving to significantly diverse directions. In such case, DGD only converges with smaller stepsize and its communication compressed variants, including QDGD, DeepSqueeze and CHOCO-SGD, diverge in all parameter settings we try.

In summary, our experiments verify our theoretical analysis and show that LEAD is able to handle data heterogeneity very well. Furthermore, the performance of LEAD is robust to parameter settings and needs less effort for parameter tuning, which is critical in real-world applications. 
\section{Conclusion}
In this paper, we investigate the communication compression in decentralized optimization. Motivated by primal-dual algorithms, a novel decentralized algorithm with compression, LEAD, is proposed to achieve faster convergence rate and to better handle heterogeneous data while enjoying the benefit of efficient communication. The nontrivial analyses on the coupled dynamics of inexact primal and dual updates as well as compression error establish the linear convergence of LEAD when full gradient is used and the linear convergence to the $\mathcal{O}(\sigma^2)$ neighborhood of the optimum when stochastic gradient is used.
Extensive experiments validate the theoretical analysis and demonstrate the state-of-the-art efficiency and robustness of LEAD. LEAD is also applicable to non-convex problems as empirically verified in the neural network experiments but we leave the non-convex analysis as the future work. 
\newpage

\section*{Acknowledgements}
Xiaorui Liu and Dr. Jiliang Tang are supported by the National Science Foundation (NSF) under grant numbers CNS-1815636, IIS-1928278, IIS-1714741, IIS-1845081, IIS-1907704, and IIS-1955285. Yao Li and Dr. Ming Yan are supported by NSF grant DMS-2012439 and Facebook Faculty Research Award (Systems for ML). Dr. Rongrong Wang is supported by NSF grant CCF-1909523.

\bibliography{section/ref}
\bibliographystyle{iclr2021_conference}

\newpage
\appendix

% {\Large{\textbf{Supplemental Materials}}}
%{\Large{\textbf{Appendix}}}
\def\tmptitle{Contents of Appendix}
\etocsettocstyle{\Large\textbf{\tmptitle}\par\normalsize
}{\bigskip}
\etocdepthtag.toc{mtappendix}
\etocsettagdepth{mtchapter}{none}
\etocsettagdepth{mtappendix}{subsection}
\tableofcontents

\newpage

\section{LEAD in agent's perspective}
\label{sec:efficient}

In the main paper, we described the algorithm with matrix notations for concision. Here we further provide a complete algorithm description from the agents' perspective.
\begin{algorithm}[H]
\caption{LEAD in Agent's Perspective}
\textbf{input:} stepsize $\eta$, compression parameters ($\alpha,~\gamma$), initial values $\vx_i^0$,~$\vh_i^1$,~$\vz_i,~\forall i\in\{ 1,2,\dots,n \}$\\ %, $\vD^1 = (\vI-\vW) \vZ$ for some $\vZ$\\ 
\noindent\textbf{output:} $\vx_i^K, ~\forall i\in\{1,2,\dots,n\}$ or $\frac{\sum_{i=1}^n \vx_i^K}{n}$
\begin{algorithmic}[1]
\setstretch{1.3}
% \begin{algorithmic}
\For{each agent $i\in\{1,2,\dots, n\}$}
\State $\vd_i^1 = \vz_i - \sum_{j\in \EuScript{N}_i\cup \{i\} } w_{ij}\vz_j$
\State $(\vh_w)_i^1 = \sum_{j\in \EuScript{N}_i\cup \{i\} } w_{ij}(\vh_w)_j^1$
\State $\vx_i^1 = \vx_i^0 - \eta \nabla f_i(\vx_i^0;\xi_i^0)$
\EndFor
\For{$k=1,2,\dots, K-1$} in parallel for all agents $i\in\{1,2,\dots,n\}$
    \State compute $\nabla f_i(\vx_i^k; \xi_i^k)$  \hfill $\vartriangleright$ Gradient computation
    \State $\vy_i^k = \vx_i^k- \eta \nabla f_i(\vx_i^k;\xi^k_i)-\eta \vd_i^k$
    \State $\vq_i^k = {\text{Compress}} (\vy_i^k-\vh_i^k)$ \hfill $\vartriangleright$ Compression
    \State $\hat \vy_i^k = \vh_i^k + \vq_i^k$
    \For{neighbors $j\in\EuScript{N}_i$}
        \State Send $\vq_i^k$ and receive $\vq_j^k$ \hfill $\vartriangleright$ Communication
    \EndFor
    \State ${(\hat \vy_w)}_i^{k} = {(\vh_w)}_i^{k} + \sum_{j\in \EuScript{N}_i\cup \{i\} } w_{ij}\vq_j^k $
    \State $\vh_i^{k+1} = (1-\alpha)\vh_i^k + \alpha \hat \vy_i^{k} $
    \State ${(\vh_w)}_i^{k+1} = (1-\alpha){(\vh_w)}_i^{k} + \alpha {(\hat\vy_w)}_i^{k} $
    \State $\vd_i^{k+1} = \vd_i^k + \frac{\gamma}{2\eta} \big(\hat \vy_i^{k}-{(\hat \vy_w)}_i^{k}\big )$ 
    \State $\vx_i^{k+1} = \vx_i^k - \eta \nabla f_i(\vx_i^k; \xi_i^k) - \eta \vd_i^{k+1}$ \hfill $\vartriangleright$ Model update
\EndFor
\end{algorithmic}
\label{alg:agent}
\end{algorithm}

\section{Connections with exiting works}
\label{append:connects}
The non-compressed variant of LEAD in Alg.~\ref{algocomplete} recovers NIDS~\citep{li2019decentralized}, $D^2$~\citep{pmlr-v80-tang18a} and Exact Diffusion~\citep{yuan2018exact} as shown in Proposition~\ref{thm:equivalence_nids}. In Corollary~\ref{thm:reduce_nids}, we show that the convergence rate of LEAD exactly recovers the rate of NIDS when $C=0, \gamma=1$ and $\sigma=0$.

\begin{proposition}[Connection to NIDS, $D^2$ and Exact Diffusion]
\label{thm:equivalence_nids}
When there is no communication compression (i.e., $\hat\vY^k=\vY^k$) and $\gamma=1$, Alg.~\ref{algocomplete} recovers $D^2$:
\begin{align}
    \vX^{k+1} = \frac{\vI+\vW}{2}\Big(2\vX^{k} - \vX^{k-1} - \eta \nabla \vF(\vX^k;\xi^k) + \eta \nabla \vF(\vX^{k-1};\xi^{k-1}) \Big).  \label{equ:nids}
\end{align}
Furthermore, if the stochastic estimator of the gradient $\nabla\vF(\vX^k; \xi^k)$ is replaced by the full gradient, it recovers NIDS and Exact Diffusion with specific settings. %with an adjusted mixing matrix $\widetilde \vW = \frac{\vI+\vW}{2}$
%\begin{align}
%    \vX^{k+1} = \frac{\vI+\vW}{2} \Big (2\vX^{k} - \vX^{k-1} - \eta \nabla \vF(\vX^k; \xi^k) + \eta \nabla \vF(\vX^{k-1}; \xi^{k-1}) \Big ). \label{equ:d2}
%\end{align}
\end{proposition}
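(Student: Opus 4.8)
The plan is to prove the equivalence by a direct algebraic elimination of the dual variable $\vD^k$, since under the stated hypotheses Alg.~\ref{algocomplete} becomes a deterministic three-line recursion with no approximation. First I would specialize the \textsc{Comm} outputs to the no-compression regime: with $\hat\vY^k=\vY^k$ we also get $\hat\vY^k_w=\vW\vY^k$, so $\hat\vY^k-\hat\vY^k_w=(\vI-\vW)\vY^k$. Setting $\gamma=1$ in Line 6 then gives $\vD^{k+1}=\vD^k+\frac{1}{2\eta}(\vI-\vW)\vY^k$, and the algorithm collapses to
\begin{align*}
\vY^k &= \vX^k-\eta\nabla\vF(\vX^k;\xi^k)-\eta\vD^k, \\
\vD^{k+1} &= \vD^k+\tfrac{1}{2\eta}(\vI-\vW)\vY^k, \\
\vX^{k+1} &= \vX^k-\eta\nabla\vF(\vX^k;\xi^k)-\eta\vD^{k+1}.
\end{align*}

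The key step is to remove $\vD$. Subtracting the first update from the third yields $\vX^{k+1}-\vY^k=-\eta(\vD^{k+1}-\vD^k)=-\frac{1}{2}(\vI-\vW)\vY^k$, hence $\vX^{k+1}=\frac{\vI+\vW}{2}\vY^k$, valid for every $k\ge 1$. It then remains to express $\vY^k$ through the primal iterates only. Using the third update one step earlier gives $\eta\vD^k=\vX^{k-1}-\eta\nabla\vF(\vX^{k-1};\xi^{k-1})-\vX^k$; substituting this into the first line cancels $\vD^k$ and produces $\vY^k=2\vX^k-\vX^{k-1}-\eta\nabla\vF(\vX^k;\xi^k)+\eta\nabla\vF(\vX^{k-1};\xi^{k-1})$. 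Plugging this into $\vX^{k+1}=\frac{\vI+\vW}{2}\vY^k$ gives exactly~\eqref{equ:nids}.

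Next I would verify the base case, which is where the only real subtlety sits: the substitution for $\eta\vD^k$ uses the primal update at iteration $k-1$, so it is automatic for $k\ge 2$ but must be checked at $k=1$. Here the initialization $\vX^1=\vX^0-\eta\nabla\vF(\vX^0;\xi^0)$ together with the choice $\vZ=\vzero$ (so $\vD^1=(\vI-\vW)\vZ=\vzero$) makes $\eta\vD^1=\vX^0-\eta\nabla\vF(\vX^0;\xi^0)-\vX^1=\vzero$ hold, matching the standard plain-gradient start of $D^2$. I would also remark that the membership $\vD^k\in\Range(\vI-\vW)$ guaranteed by the algorithm is exactly what keeps the two recursions aligned for all later $k$.

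Finally, for the NIDS and Exact Diffusion claims I would replace $\nabla\vF(\vX^k;\xi^k)$ by the full gradient $\nabla\vF(\vX^k)$ in~\eqref{equ:nids} and match the resulting two-step recursion against the published closed forms of those algorithms. I expect that work to be purely cosmetic reparametrization: NIDS and Exact Diffusion are written with their own mixing matrices (e.g.\ $\tilde\vW=\frac{\vI+\vW}{2}$) and step/averaging conventions, so the ``specific settings'' amount to identifying $\frac{\vI+\vW}{2}$ with their mixing operator and confirming the initialization matches. The main obstacle is therefore not analytic but bookkeeping, namely keeping the $k=1$ initialization consistent and reconciling the differing notational conventions of the three algorithms, rather than any nontrivial estimate.
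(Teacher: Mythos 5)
Your proposal is correct and follows essentially the same route as the paper's proof: both eliminate the dual variable via the identity $\eta\vD^k=\vX^{k-1}-\vX^k-\eta\nabla\vF(\vX^{k-1};\xi^{k-1})$ obtained from Line~7 at the previous iteration, with your intermediate step $\vX^{k+1}=\frac{\vI+\vW}{2}\vY^k$ being only a cosmetic reorganization of the same algebra. Your explicit check of the $k=1$ base case (requiring $\vD^1=\vzero$, e.g.\ $\vZ=\vzero$) is a small point the paper leaves implicit, and is a welcome addition rather than a deviation.
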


\begin{corollary}[Consistency with NIDS]
\label{thm:reduce_nids}
When $C=0$ (no communication compression), $\gamma=1$ and $\sigma=0$ (full gradient), LEAD has the convergence consistent with NIDS with $\eta\in(0,2/(\mu+L)]$:
\begin{align}
    \mathcal{L}^{k+1} \leq \max \left\{1- \mu(2\eta-\mu\eta^2), 1-\frac{1}{2\lambda_{\max}((\vI-\vW)^{\dagger})}\right\} \mathcal{L}^k.
\end{align}
%\yl{when $C=0,$ i.e., there is no compression, $\alpha$ can take $0$ and $\gamma$ can be $1$, in this case $\rho=\max\{1-\mu(2\eta-L\eta^2), 1-\frac{1}{2\lambda_{\max}((\vI-\vW)^{\dagger})}\},$ which recovers NIDS.}
\end{corollary}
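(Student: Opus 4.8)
The plan is to obtain Corollary~\ref{thm:reduce_nids} as the specialization of Theorem~\ref{thm:linear_convergence} to $C=0$, $\gamma=1$, and $\sigma=0$, tracking how every compression-induced quantity collapses. First I would substitute these values into the constants of Theorem~\ref{thm:linear_convergence}: with $C=0$ one gets $a_1=4(1+0)/(0+2)=2$, the additive term $\eta^2\sigma^2$ vanishes, and since there is no randomness left (full gradient) the expectation and the common factor $1/n$ drop from both sides. I would then check feasibility of the chosen parameters. The upper bounds on $\gamma$ reduce to $\min\{2/\beta,\,+\infty\}=2/\beta$, and because Assumption~\ref{ass:mixing} forces $\lambda_n(\vW)>-1$ and hence $\beta=\lambda_{\max}(\vI-\vW)<2$, the choice $\gamma=1$ is admissible; the lower bound $C\beta\gamma/[2(1+C)]$ on $\alpha$ becomes $0$, so $\alpha=0$ is admissible as well.

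The key structural observation is that when $C=0$ the compression is exact, $\hat\vY^k=\vY^k$, so in the communication procedure the state variable $\vH$ never enters $\hat\vY^k$; consequently the $(\vX^k,\vD^k)$ dynamics decouple completely from $\vH^k$ and, by Proposition~\ref{thm:equivalence_nids}, coincide with the two-step NIDS iteration. This means that the summand $a_1\|\vH^k-\vX^*\|^2$ in $\mathcal{L}^k$ and the factor $1-\alpha$ in $\rho$ are artifacts introduced solely to absorb compression error and can be discarded. I emphasize that one may \emph{not} simply take $\alpha\to0$ in the three-term rate of Theorem~\ref{thm:linear_convergence}, because the term $1-\alpha\to1$ would destroy the contraction; instead the correct move is to re-run the one-step estimate with the compression error set to zero, retaining only the two-term Lyapunov function
\begin{align*}
\mathcal{L}^k = \|\vX^k-\vX^*\|^2 + 2\eta^2\|\vD^k-\vD^*\|^2_{(\vI-\vW)^\dagger}.
\end{align*}

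Concretely, I would start from the error recursions $\vX^{k+1}-\vX^* = (\vX^k-\vX^*)-\eta(\nabla\vF(\vX^k)-\nabla\vF(\vX^*))-\eta(\vD^{k+1}-\vD^*)$ and $\vD^{k+1}-\vD^* = (\vD^k-\vD^*)+\frac{1}{2\eta}(\vI-\vW)(\vY^k-\vX^*)$, using $\vD^*=-\nabla\vF(\vX^*)$ and $\vD^k,\vD^*\in\Range(\vI-\vW)$ so that the $(\vI-\vW)^\dagger$ norm is well defined. The primal contribution is governed by the gradient-map contraction: for $\eta\in(0,2/(\mu+L)]$ and $L$-smooth $\mu$-strongly convex $f_i$ one has $\|(\vX^k-\vX^*)-\eta(\nabla\vF(\vX^k)-\nabla\vF(\vX^*))\|^2\le(1-\mu\eta)^2\|\vX^k-\vX^*\|^2$, and $(1-\mu\eta)^2=1-\mu\eta(2-\mu\eta)$ is exactly the first rate factor. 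The dual/consensus contribution supplies the spectral factor $1-1/[2\lambda_{\max}((\vI-\vW)^\dagger)]$ through the $(\vI-\vW)^\dagger$-weighted term with weight $2\eta^2$; this is the same algebraic combination carried out in the proof of Theorem~\ref{thm:linear_convergence}, now with all $C$-dependent terms absent. Combining the two and setting $\alpha=0$ yields the claimed contraction with $\rho=\max\{1-\mu\eta(2-\mu\eta),\,1-1/[2\lambda_{\max}((\vI-\vW)^\dagger)]\}$, matching NIDS.

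The main obstacle is justifying the removal of the $\vH$ term rigorously rather than heuristically: within the proof of Theorem~\ref{thm:linear_convergence} one must verify that every cross term coupling the $(\vX,\vD)$ descent to $\|\vH^k-\vX^*\|^2$ carries a factor of $C$ (these arise only when the compression error is bounded via Assumption~\ref{ass:compression}), so that at $C=0$ the $(\vX,\vD)$-subsystem is self-contained and the two-term Lyapunov function alone contracts. A secondary, purely bookkeeping point is confirming that the base primal rate delivered by the Theorem~\ref{thm:linear_convergence} estimate is sharp, i.e.\ exactly $1-\mu\eta(2-\mu\eta)=(1-\mu\eta)^2$ over the full range $\eta\in(0,2/(\mu+L)]$ (which is where the cross term $\langle\nabla\vF(\vX^k)-\nabla\vF(\vX^*),\vX^k-\vX^*\rangle$ must be retained rather than crudely lower bounded by $\mu\|\vX^k-\vX^*\|^2$), so that the recovered rate coincides with that of NIDS and not a looser expression.
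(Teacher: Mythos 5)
Your proposal is correct and takes essentially the same route as the paper: the paper's own one-line proof likewise specializes the proof of Theorem~\ref{thm:linear_convergence}, exploiting that $a_1$ is a free parameter there and setting $a_1=0$ (which deletes the $a_1\|\vH^k-\vX^*\|^2$ term from $\mathcal{L}^k$, exactly your two-term Lyapunov function) together with $\gamma=1$, $C=0$, $\sigma=0$. The only cosmetic difference is that the paper picks $\alpha=1$ so the entry $1-\alpha$ in $\rho$ becomes $0$, whereas you pick $\alpha=0$ and justify discarding that entry via the decoupling of the $\vH$-block --- equivalent choices once $a_1=0$, since the $1-\alpha$ factor then weights nothing, and your feasibility checks ($\gamma=1<2/\beta$, the co-coercivity-based factor $(1-\mu\eta)^2=1-\mu\eta(2-\mu\eta)$ on $\eta\in(0,2/(\mu+L)]$) match the paper's.
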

See the proof in~\ref{proof:consistent_nids}.

% \label{apdx:proof_equivalence_nids}
\begin{proof}[Proof of Proposition~\ref{thm:equivalence_nids}]
Let $\gamma=1$ and $\hat \vY^k=\vY^k$. Combing Lines 4 and 6 of Alg.~\ref{algocomplete} gives 
\begin{align}
    \vD^{k+1} &= \vD^k + \frac{\vI-\vW}{2\eta}(\vX^k-\eta \nabla \vF(\vX^k;\xi^k)-\eta \vD^k). \label{equ:nids_1}
\end{align}
Based on Line 7, we can represent $\eta \vD^{k}$ from the previous iteration as
\begin{align}
    \eta \vD^{k} = \vX^{k-1} - \vX^{k} - \eta \nabla \vF(\vX^{k-1}; \xi^{k-1}).  \label{equ:nids_2}
\end{align}
Eliminating both $\vD^k$ and $\vD^{k+1}$ by substituting~\eqref{equ:nids_1}-\eqref{equ:nids_2} into Line 7, we obtain
\begin{align}
    \vX^{k+1} &= \vX^k - \eta \nabla \vF(\vX^k;\xi^k) - \Big (\eta \vD^{k}+\frac{\vI-\vW}{2}(\vX^k-\eta \nabla \vF(\vX^k;\xi^k)-\eta \vD^k) \Big) \quad(\text{from}~\eqref{equ:nids_1})\nonumber\\
    &= \frac{\vI+\vW}{2}(\vX^k - \eta \nabla \vF(\vX^k;\xi^k)) - \frac{\vI+\vW}{2} \eta \vD^{k} \nonumber\\
    &= \frac{\vI+\vW}{2} (\vX^k - \eta \nabla \vF(\vX^k;\xi^k) ) - \frac{\vI+\vW}{2} ( \vX^{k-1} - \vX^{k} - \eta \nabla \vF(\vX^{k-1};\xi^{k-1}) ) \quad(\text{from}~\eqref{equ:nids_2}) \nonumber\\
    &= \frac{\vI+\vW}{2} (2\vX^k -\vX^{k-1} - \eta \nabla \vF(\vX^k;\xi^k) + \eta \nabla \vF(\vX^{k-1};\xi^{k-1})  ),
\end{align}
which is exactly $D^2$. It also recovers Exact Diffusion with $\vA=\frac{\vI+\vW}{2}$ and $\vM = \eta\vI$ in Eq. (97) of~\citep{yuan2018exact}.
\end{proof}

\section{Compression method}
\label{sec:quantization}

\subsection{p-norm b-bits quantization}
\begin{theorem}[p-norm b-bit quantization]
\label{thm:pnorm_quantization}
Let us define the quantization operator as 
\begin{align}
\label{equ:pnorm_bits}
% Q(\vx):= \|\vx\|_{\infty} \cdot \text{sign}(\vx) \cdot \frac{1}{2^{(b-1)}-1} \cdot \left \lfloor{\frac{(2^{(b-1)}-1)|\vx|}{\|\vx\|_{\infty}}+\vu}\right \rfloor \nonumber
Q_p(\vx):= \left(\|\vx\|_{p} \sign(\vx)  2^{-(b-1)}\right) \cdot \left \lfloor{\frac{2^{b-1}|\vx|}{\|\vx\|_{p}}+\vu}\right \rfloor %\nonumber
\end{align}
where $\cdot$ is the Hadamard product, $|\vx|$ is the elementwise absolute value and $\vu$ is a random dither vector uniformly distributed in $[0, 1]^d$. $Q_p(\vx)$ is unbiased, i.e., $\EE Q_p(\vx)=\vx$, and the compression variance is upper bounded by
\begin{align}
\EE \|\vx-Q_p(\vx)\|^2 \leq \frac{1}{4} \|\sign(\vx) 2^{-(b-1)}\|^2 \|\vx\|_{p}^2,
\end{align}
which suggests that $\infty$-norm provides the smallest upper bound for the compression variance 
due to $\|\vx\|_p \leq \|\vx\|_q, \forall \vx$ if $1\leq q \leq p \leq\infty$.
\end{theorem}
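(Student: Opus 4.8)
The plan is to reduce everything to a one-dimensional dithered-rounding lemma and then reassemble coordinate-wise. Fix $\vx$ and write $s = \|\vx\|_p$, which is a deterministic scalar once $\vx$ is fixed. Because $Q_p$ acts entrywise given this common scale factor, and because the dither entries $u_j$ are independent, both the expectation and the squared error decompose across coordinates. For a fixed coordinate $j$ with $x_j \neq 0$, set $a_j = 2^{b-1}|x_j|/s \geq 0$; then the $j$-th entry of $Q_p(\vx)$ equals $s\,\sign(x_j)\,2^{-(b-1)}\,\lfloor a_j + u_j\rfloor$, while $x_j$ itself equals $s\,\sign(x_j)\,2^{-(b-1)}\,a_j$. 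Thus the whole problem collapses to understanding the scalar random variable $\lfloor a + u\rfloor$ with $u$ uniform on $[0,1]$.

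The key step is the dithered-rounding lemma. Writing $a = m + f$ with $m = \lfloor a\rfloor \in \mathbb{Z}$ and fractional part $f \in [0,1)$, I would observe that $\lfloor a + u\rfloor = m + \lfloor f + u\rfloor$, where $\lfloor f + u\rfloor \in \{0,1\}$ and equals $1$ exactly when $u \geq 1 - f$. Since $u$ is uniform, $\lfloor f+u\rfloor$ is a Bernoulli random variable with parameter $f$. This immediately gives $\EE \lfloor a+u\rfloor = m + f = a$ (unbiasedness) and $\Var(\lfloor a+u\rfloor) = f(1-f) \leq 1/4$, the maximum being attained at $f = 1/2$.

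Unbiasedness of $Q_p$ then follows instantly: on a nonzero coordinate, $\EE[Q_p(\vx)]_j = s\,\sign(x_j)\,2^{-(b-1)}\,\EE\lfloor a_j + u_j\rfloor = s\,\sign(x_j)\,2^{-(b-1)}\,a_j = x_j$, and on a zero coordinate both sides vanish. For the variance, I would write $x_j - [Q_p(\vx)]_j = s\,\sign(x_j)\,2^{-(b-1)}\,(a_j - \lfloor a_j + u_j\rfloor)$, square (so the $\sign$ disappears), take expectation, and invoke the $1/4$ bound to obtain $\EE(x_j - [Q_p(\vx)]_j)^2 \leq \tfrac14\, 2^{-2(b-1)}\, s^2$ on each nonzero coordinate. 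Summing over the nonzero coordinates contributes a multiplicative factor equal to the number of nonzero entries, which is precisely $\|\sign(\vx)\|^2$; hence $\EE\|\vx - Q_p(\vx)\|^2 \leq \tfrac14\,\|\sign(\vx)\,2^{-(b-1)}\|^2\,\|\vx\|_p^2$, as claimed. The comparison across norms is then immediate, since the bound depends on $p$ only through $\|\vx\|_p^2$ and $\|\vx\|_p$ is nonincreasing in $p$, so $p=\infty$ yields the smallest value.

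I expect the only genuine subtlety to be the dithered-rounding lemma, specifically the clean identification of $\lfloor f+u\rfloor$ as a Bernoulli variable with parameter $f$ and the resulting exact variance $f(1-f)$; once that is in hand the remainder is bookkeeping. The remaining care points are: pulling the deterministic scale $s$ outside the expectation, handling zero coordinates separately (where both bias and error vanish, so they drop out of the count $\|\sign(\vx)\|^2$), and invoking independence of the dither across coordinates to justify summing the per-coordinate mean-squared errors into $\EE\|\vx-Q_p(\vx)\|^2$.
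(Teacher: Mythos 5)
Your proposal is correct and follows essentially the same route as the paper's proof: a per-coordinate analysis in which the dithered rounding $\lfloor a+u\rfloor$ is a two-point (Bernoulli) random variable, giving unbiasedness and the exact per-coordinate error $f(1-f)\vv_i^2$ --- the paper writes this as $(s_2-s)_i(s-s_1)_i\vv_i^2$ with floor/ceiling notation --- bounded by $\tfrac14\vv_i^2$ and summed over the nonzero coordinates. Your Bernoulli/fractional-part phrasing is a slightly cleaner packaging of the identical argument, including the separate treatment of coordinates where the scaled magnitude is an integer or $x_j=0$.
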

\begin{remark}
For the compressor defined in~\eqref{equ:pnorm_bits}, we have the following the compression constant 
$$C=\sup_{\vx} \frac{\|\sign(\vx) 2^{-(b-1)}\|^2 \|\vx\|_{p}^2}{4\|\vx\|^2}.$$
\end{remark}

\begin{proof}
Let denote $\vv=\|\vx\|_{p}  \sign(\vx)  2^{-(b-1)}$, $s=\frac{2^{b-1}|\vx|}{\|\vx\|_{p}}$, $s_1=\left\lfloor{\frac{2^{b-1}|\vx|}{\|\vx\|_{p}}}\right \rfloor$ and $s_2=\left\lceil{\frac{2^{b-1}|\vx|}{\|\vx\|_{p}}}\right \rceil$. We can rewrite $\vx$ as $\vx = s \cdot \vv$.

For any coordinate $i$ such that $s_i=(s_1)_i$, we have $Q_p(\vx_i)=(s_1)_i \vv_i$ with probability $1$. Hence $\EE Q_p(\vx)_i=s_i\vv_i=\vx_i$ and
%\begin{align}
% \EE Q(\vx)= \|\vx\|_{p} \cdot \text{sign}(\vx) \cdot 2^{-(b-1)} \cdot \frac{2^{b-1}|\vx|}{\|\vx\|_{p}} = \text{sign}(\vx)\cdot \|\vx\| = \vx,
%\EE Q(\vx)= s \cdot \vv \cdot 1 + (s+1)\cdot \vv \cdot 0 = s\cdot \vv = \vx,
%\end{align}
%and
\begin{align*}
\EE (\vx_i-Q_p(\vx)_i)^2=(\vx_i-s_i\vv_i)^2=0.
%\cdot 1 +  \|\vx-(s+1) \cdot \vv\|^2\cdot 0 = 0.
\end{align*}

For any coordinate $i$ such that $s_i\not=(s_1)_i$, we have $(s_2)_i-(s_1)_i=1$ and $Q_p(\vx)_i$ satisfies
\begin{equation*}
Q_p(\vx)_i=\left\{\begin{aligned}
& (s_1)_i\vv_i,\ \ \text{w.p.}\  (s_2)_i-s_i,\\
& (s_2)_i \vv_i,\ \ \text{w.p.}\ s_i-(s_1)_i.\\
\end{aligned}\right.
\end{equation*}

Thus, we derive 
\begin{align*}
    \EE Q_p(\vx)_i &=  \vv_i (s_1)_i(s_2-s)_i + \vv_i (s_2)_i(s-s_1)_i = \vv_i s_i(s_2-s_1)_i =  \vv_i s_i = \vx_i,
\end{align*}
and
\begin{align*}
    \EE [\vx_i-Q_p(\vx)_i]^2  &= (\vx_i- \vv_i  (s_1)_i )^2 (s_2-s)_i + (\vx_i-\vv_i (s_2)_i)^2 (s-s_1)_i \\
    &=(s_2-s_1)_i \vx_i^2 + \big ((s_1)_i(s_2)_i(s_1-s_2)_i+s_i((s_2)_i^2-(s_1)_i^2)\big) \vv_i^2 -2s_i(s_2-s_1)_i\vx_i\vv_i  \\
    &=\vx_i^2 + \big(-(s_1)_i(s_2)_i+s_i(s_2+s_1)_i\big) \vv_i^2 -2s_i \vx_i\vv_i \\ 
    &=(\vx_i - s_i \vv_i)^2 + \big (-(s_1)_i(s_2)_i+s_i(s_2+s_1)_i -s_i^2\big) \vv_i^2 \\
    &=(\vx_i - s_i \vv_i)^2  + (s_2-s)_i(s-s_1)_i \vv_i^2 \\
    &=(s_2-s)_i(s-s_1)_i \vv_i^2 \\
    &\leq \frac{1}{4} \vv_i^2.
\end{align*}

Considering both cases, we have $\EE Q(\vx)=\vx$ and
\begin{equation*}
    \begin{aligned}
    \EE\|\vx-Q_p(\vx)\|^2&=\sum_{\{s_i=(s_1)_i\}}\EE [\vx_i-Q_p(\vx)_i]^2+\sum_{\{s_i\not=(s_1)_i\}}\EE [\vx_i-Q_p(\vx)_i]^2\\
    &\leq 0+\frac{1}{4}\sum_{\{s_i\not=(s_1)_i\}}\vv_i^2\\
    &\leq \frac{1}{4}\|\vv\|^2\\
    &=\frac{1}{4}\|\sign(\vx) 2^{-(b-1)}\|^2\|\vx\|_p^2.
    \end{aligned}
\end{equation*}
% Because $\|\vx\|_p \leq \|\vx\|_q, \forall \vx$ if $1\leq q \leq p$, choosing $p=\infty$ provides smallest upper bound for compression variance, i.e., compression error in expectation. 
\end{proof}
% \subsection{Compression error}

\subsection{Compression error}

\begin{figure*}[!ht]
% \begin{minipage}[t]{0.5\textwidth}
    \centering
    \includegraphics[width=0.5\textwidth]{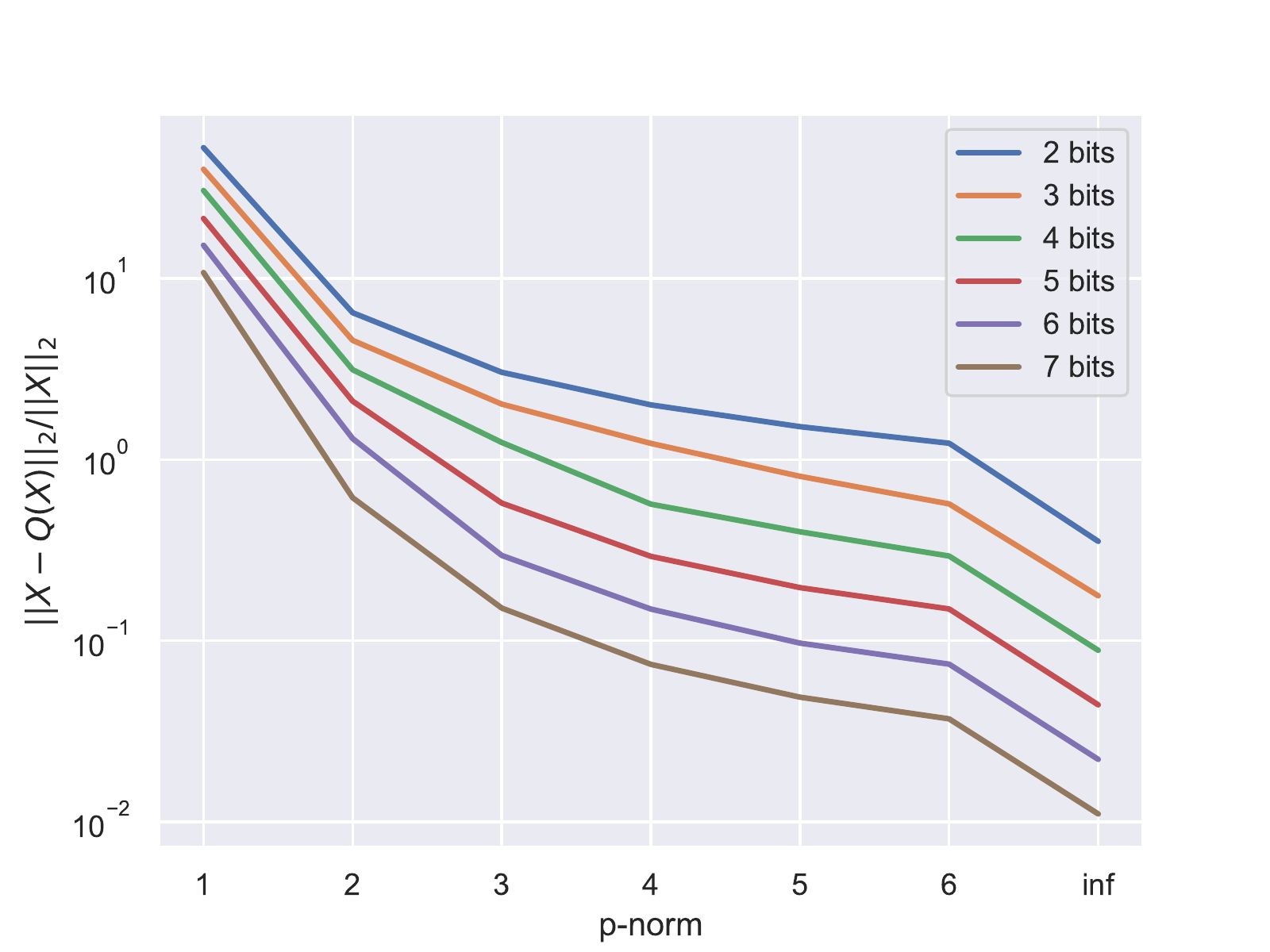}
    \caption{Relative compression error $\frac{\|\vx-Q(\vx)\|_2}{\|\vx\|_2}$ for p-norm b-bit quantization}
    \label{fig:pnorm_quan}
% \end{minipage}
\end{figure*}

\begin{figure*}[!ht]
% \begin{minipage}[t]{0.5\textwidth}
    \centering
    \includegraphics[width=0.5\textwidth]{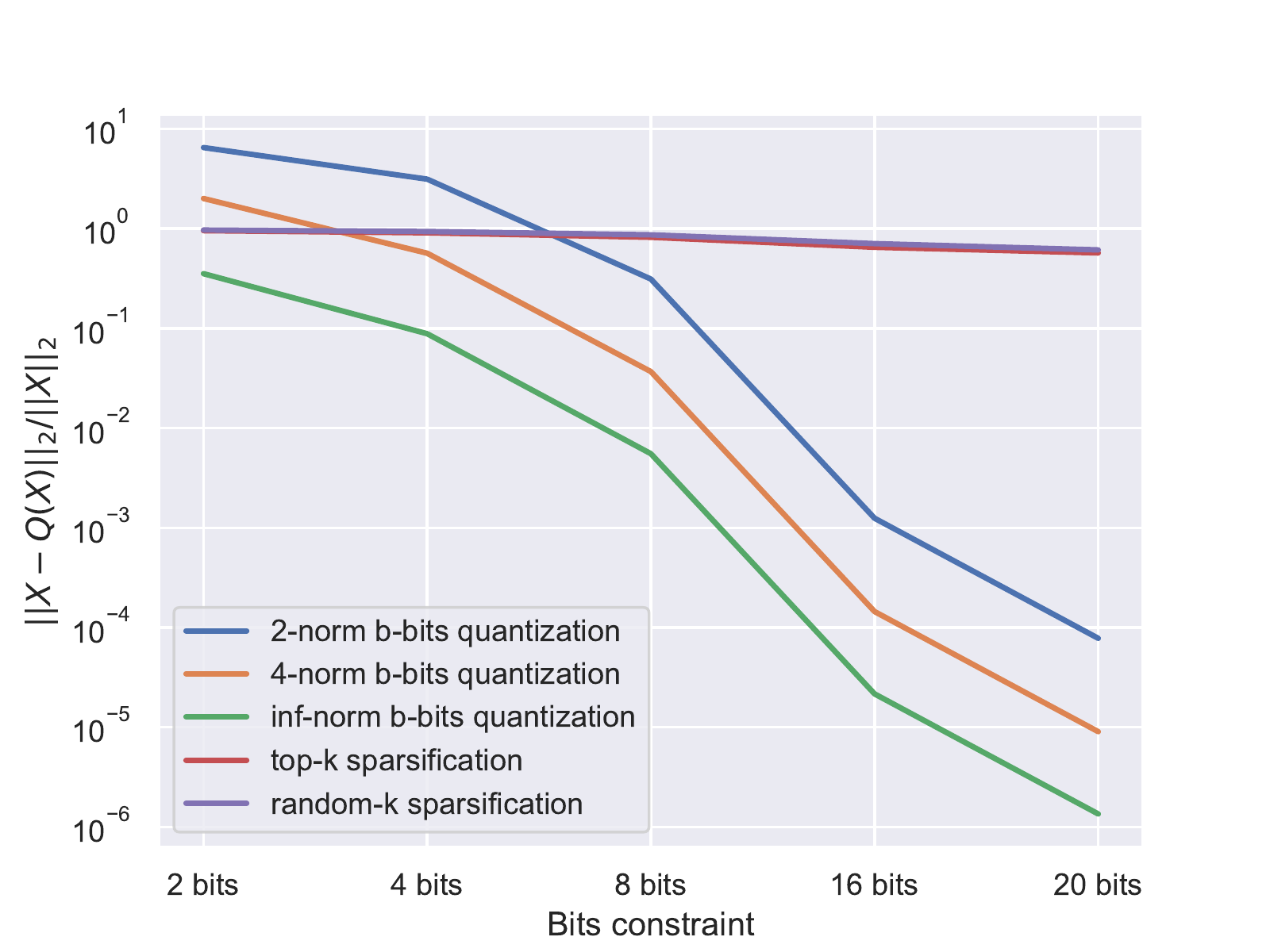}
    \caption{Comparison of compression error $\frac{\|\vx-Q(\vx)\|_2}{\|\vx\|_2}$ between different compression methods}
    % \label{fig:pnorm_quan}
% \end{minipage}
\label{fig:compare_quan_all}
\end{figure*}

To verify Theorem~\ref{thm:pnorm_quantization},  we compare the compression error of the quantization method defined in~\eqref{equ:pnorm_bits} with different norms ($p=1, 2,3,\dots, 6, \infty$). Specifically, we uniformly generate 100 random vectors in $\RR^{10000}$ and compute the average compression error. The result shown in Figure~\ref{fig:pnorm_quan} verifies our proof in Theorem~\ref{thm:pnorm_quantization} that the compression error decreases when $p$ increases. This suggests that $\infty$-norm provides the best compression precision under the same bit constraint. 

Under similar setting, we also compare the compression error with other popular compression methods, such as top-k and random-k sparsification. The x-axes represents the average bits needed to represent each element of the vector. The result is showed in Fig.~\ref{fig:compare_quan_all}. Note that intuitively top-k methods should perform better than random-k method, but the top-k method needs extra bits to transmitted the index while random-k method can avoid this by using the same random seed. Therefore, top-k method doesn't outperform random-k too much under the same communication budget. The result in Fig.~\ref{fig:compare_quan_all} suggests that $\infty$-norm b-bits quantization provides significantly better compression precision than others under the same bit constraint.

\section{Experiments}

\subsection{Parameter sensitivity}
\label{sec:para_analysys}
In the linear regression problem, the convergence of LEAD under different parameter settings of $\alpha$ and $\gamma$ are tested. The result showed in Figure~\ref{fig:para_linear_regression} indicates that LEAD performs well in most settings and is robust to the parameter setting. Therefore, in this paper, we simply set $\alpha=0.5$ and $\gamma=1.0$ for LEAD in all experiment, which indicates the minor effort needed for parameter tuning.

\begin{figure*}[!ht]
% \vspace{-0.2in}
\centering
\subfloat[$\gamma=0.4$]{
\begin{minipage}[t]{0.4\textwidth}
    \centering
   \includegraphics[width=1.0\textwidth]{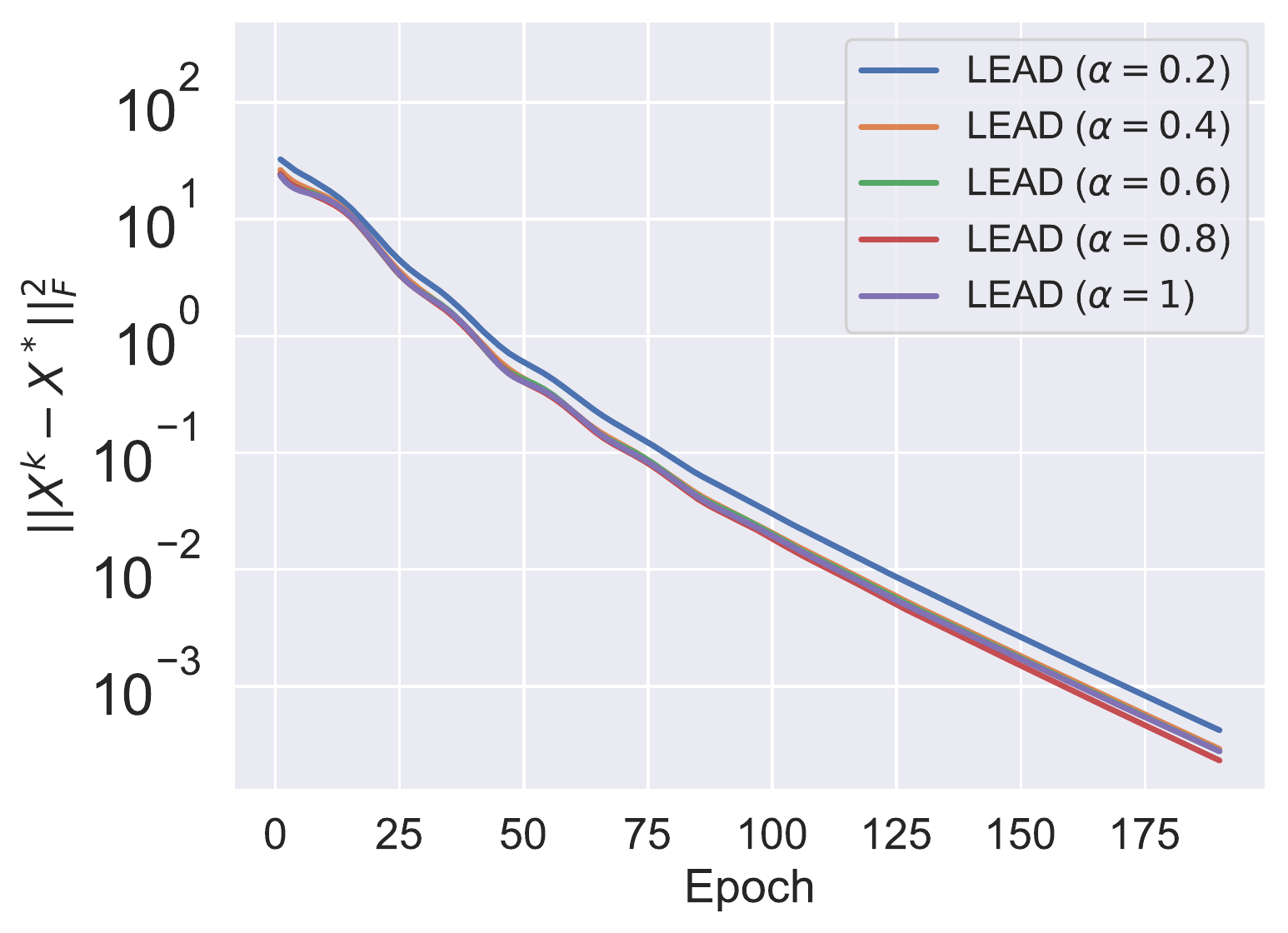}
    \vspace{-0.35in}
    % \label{fig:para_g0.4}
\end{minipage}
}
\subfloat[$\gamma=0.6$]{
\begin{minipage}[t]{0.4\textwidth}
    \centering
    \includegraphics[width=1.0\textwidth]{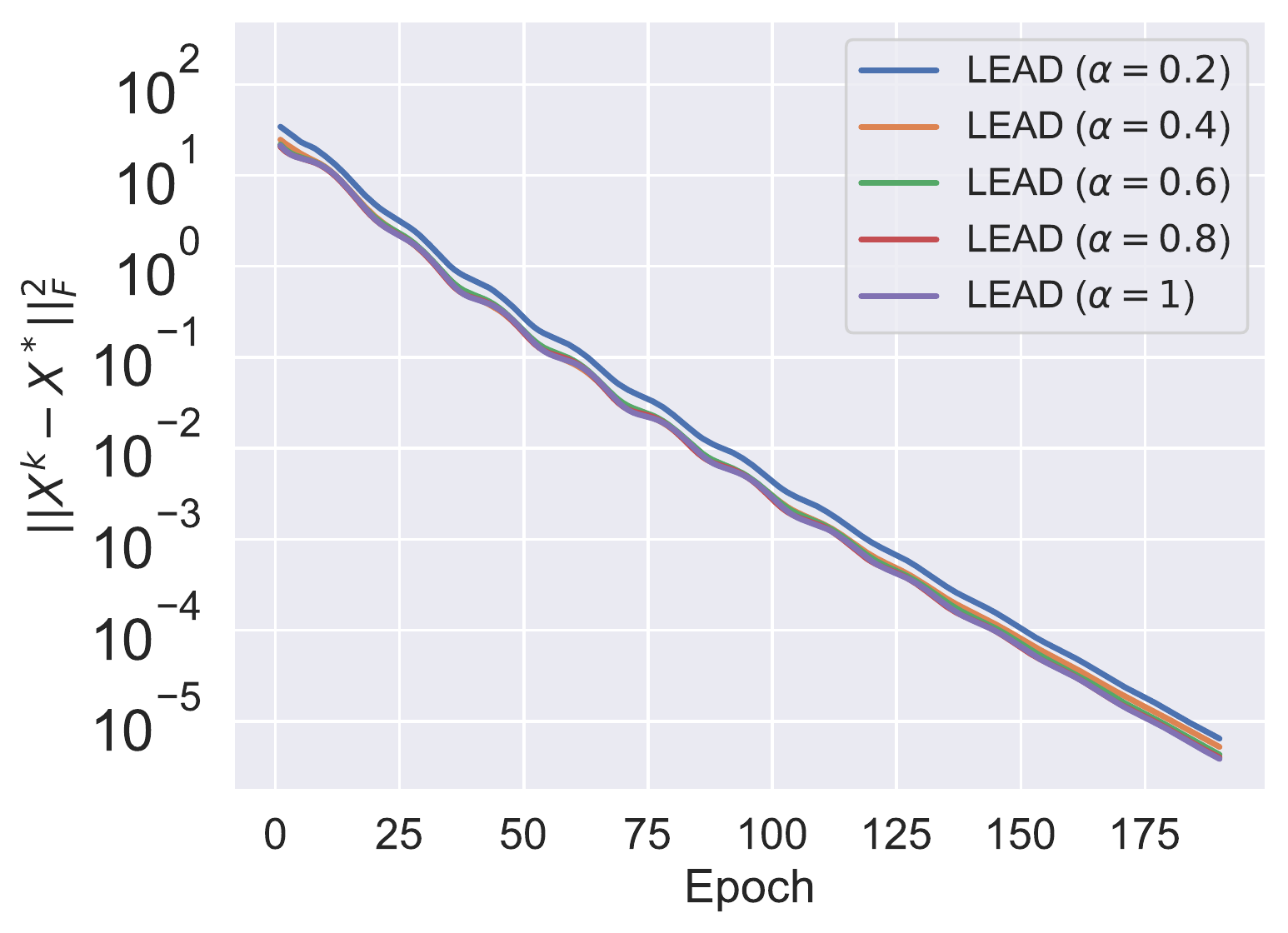}
    \vspace{-0.35in}
    % \label{fig:para_g0.6}
\end{minipage}
}
% \vspace{-0.1in}
\\
\subfloat[$\gamma=0.8$]{
\begin{minipage}[t]{0.4\textwidth}
    \centering
    \includegraphics[width=1.0\textwidth]{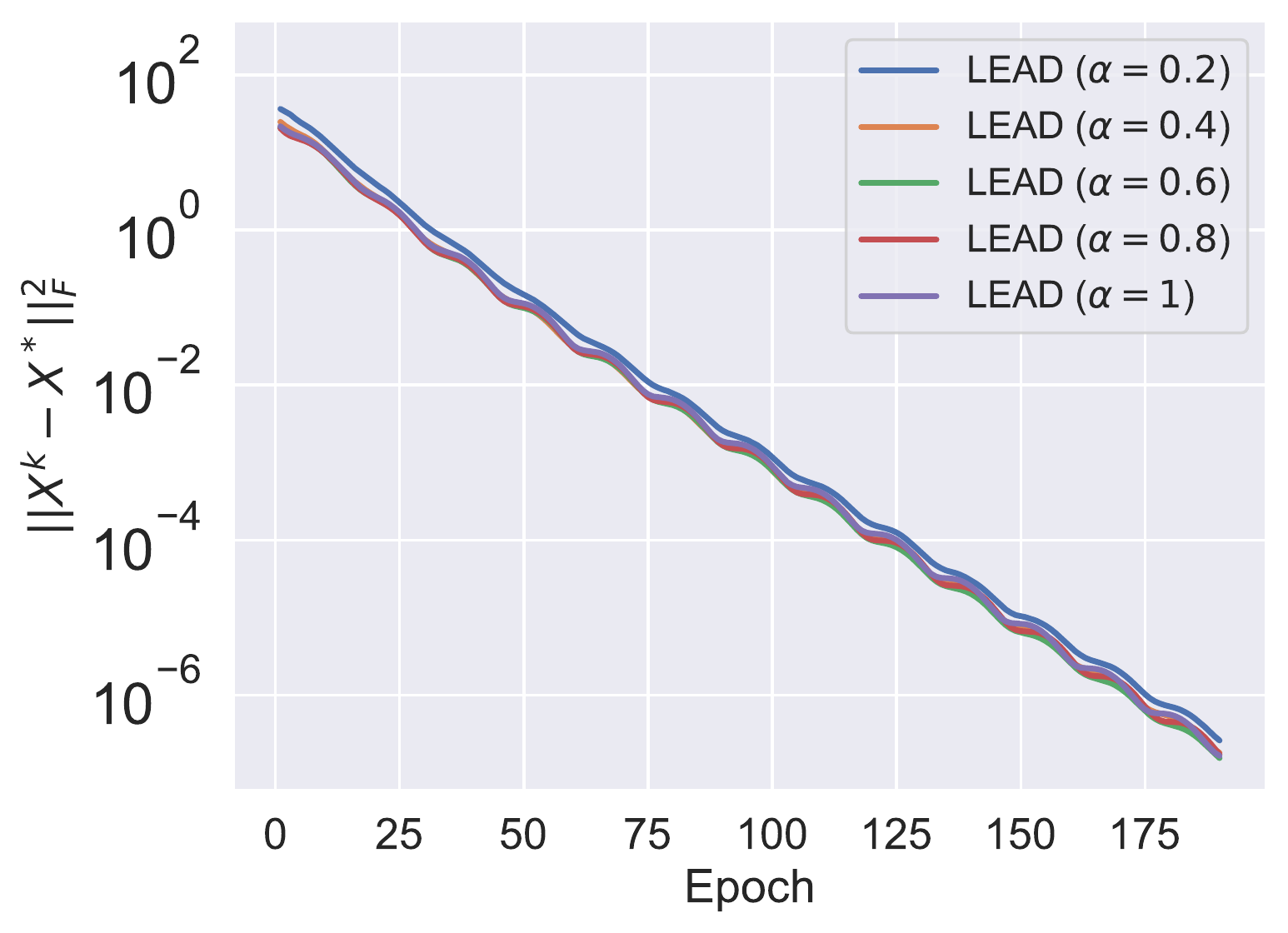}
    \vspace{-0.35in}
    % \label{fig:para_g0.6}
\end{minipage}
}
\subfloat[$\gamma=1.0$]{
\begin{minipage}[t]{0.4\textwidth}
    \centering
    \includegraphics[width=1.0\textwidth]{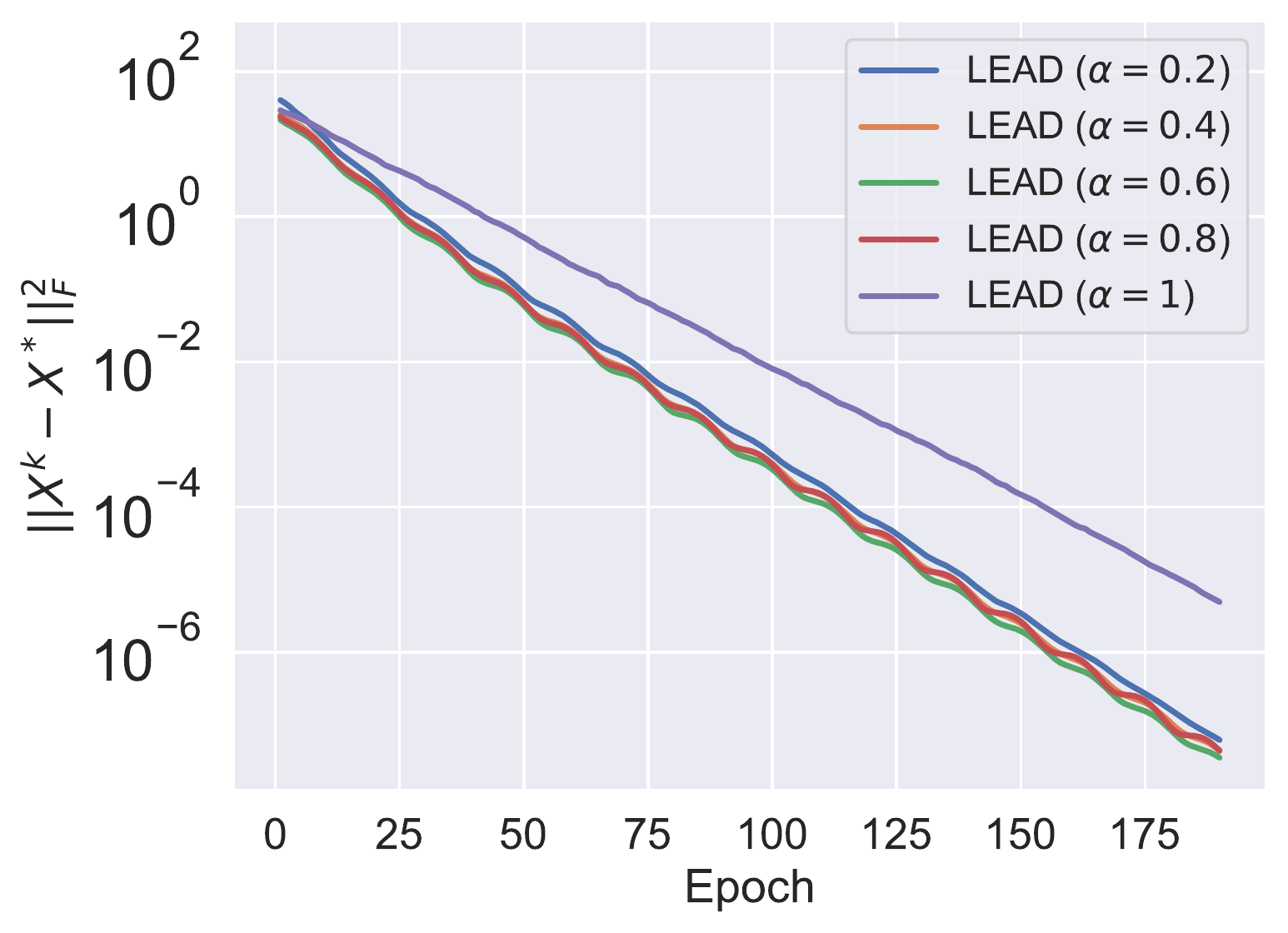}
    \vspace{-0.35in}
    % \label{fig:para_g1.0}
\end{minipage}
}
\caption{Parameter analysis on linear regression problem.}
% \end{center}
\label{fig:para_linear_regression}
\vspace{-5pt}
\end{figure*}

\subsection{Experiments in homogeneous setting}
\label{append:addition_experiment}
The experiments on logistic regression problem in homogeneous case are showed in Fig.~\ref{fig:logistic_homo_full} and Fig.~\ref{fig:logistic_homo_mini}. It shows that DeepSqueeze, CHOCO-SGD and LEAD converges similarly while DeepSqueeze and CHOCO-SGD require to tune a smaller $\gamma$ for convergence as showed in the parameter setting in Section~\ref{apend:para_setting}. Generally, a smaller $\gamma$ decreases the model propagation between agents since $\gamma$ changes the effective mixing matrix and this may cause slower convergence. However, in the setting where data from different agents are very similar, the models move to close directions such that the convergence is not affected too much.

\begin{figure*}[!ht]
\begin{center}
\subfloat[Loss $f(\overbar \vX^k)$]{
\begin{minipage}[t]{0.4\textwidth}
    \centering
    \includegraphics[width=1.0\textwidth]{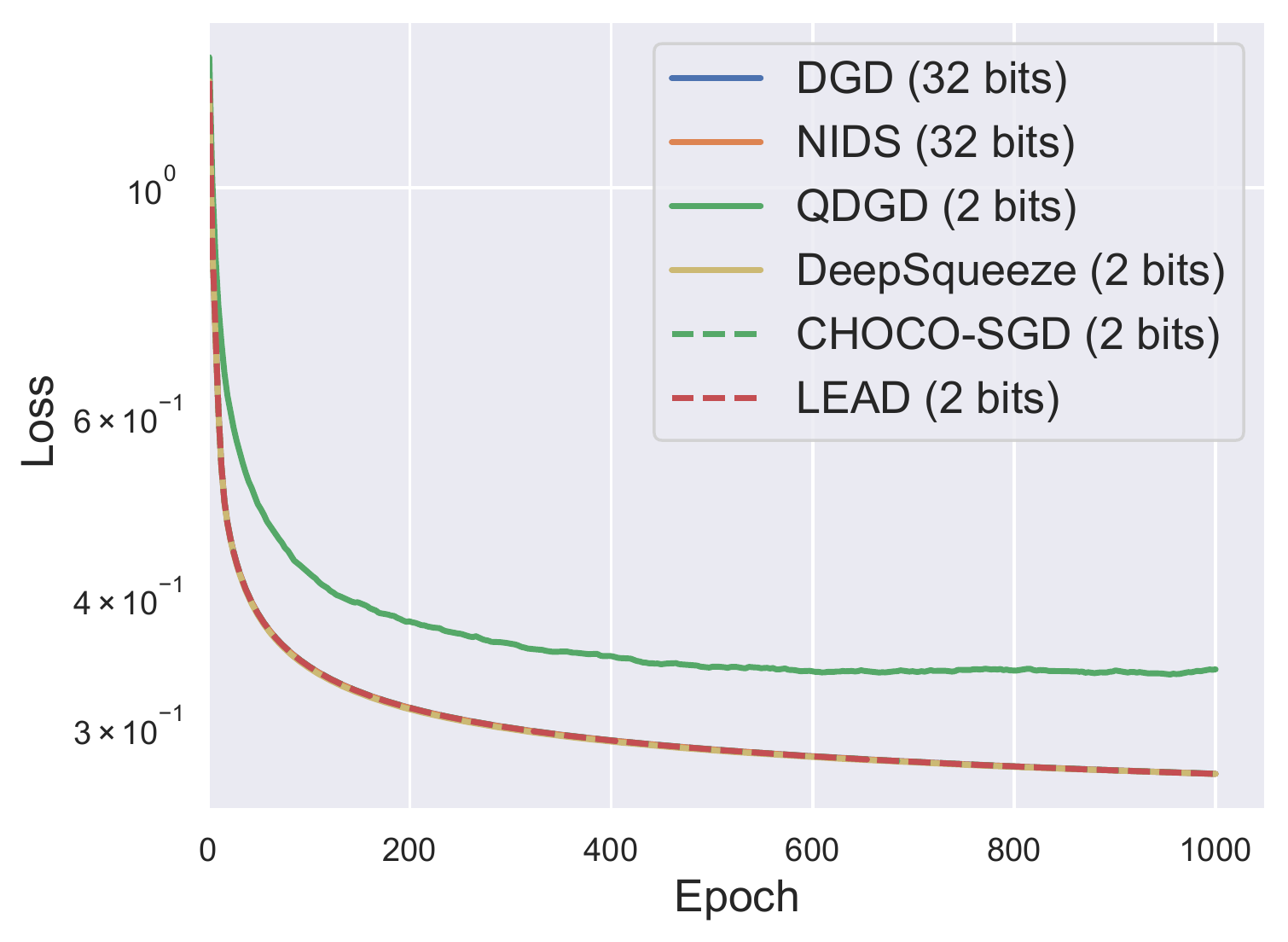}
    \vspace{-0.35in}
\end{minipage}
}
\subfloat[Loss $f(\overbar \vX^k)$]{
\begin{minipage}[t]{0.4\textwidth}
    \centering
    \includegraphics[width=1.0\textwidth]{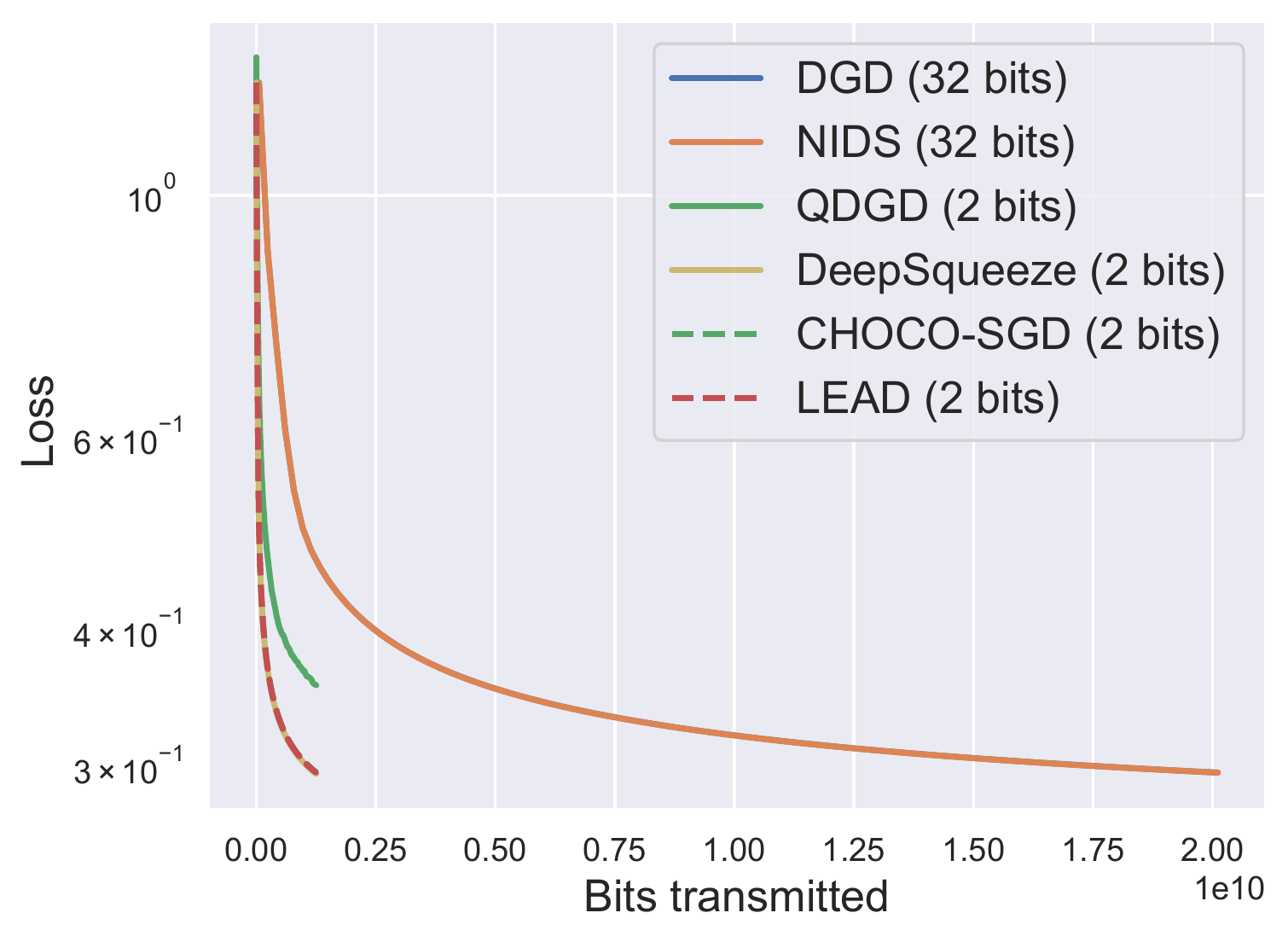}
    \vspace{-0.35in}
\end{minipage}
}
\end{center}
\vspace{-0.1in}
\caption{Logistic regression in the homogeneous case (full-batch gradient)}
\label{fig:logistic_homo_full}
\end{figure*}

\begin{figure*}[!ht]
\begin{center}
\subfloat[Loss $f(\overbar \vX^k)$]{
\begin{minipage}[t]{0.4\textwidth}
    \centering
    \includegraphics[width=1.0\textwidth]{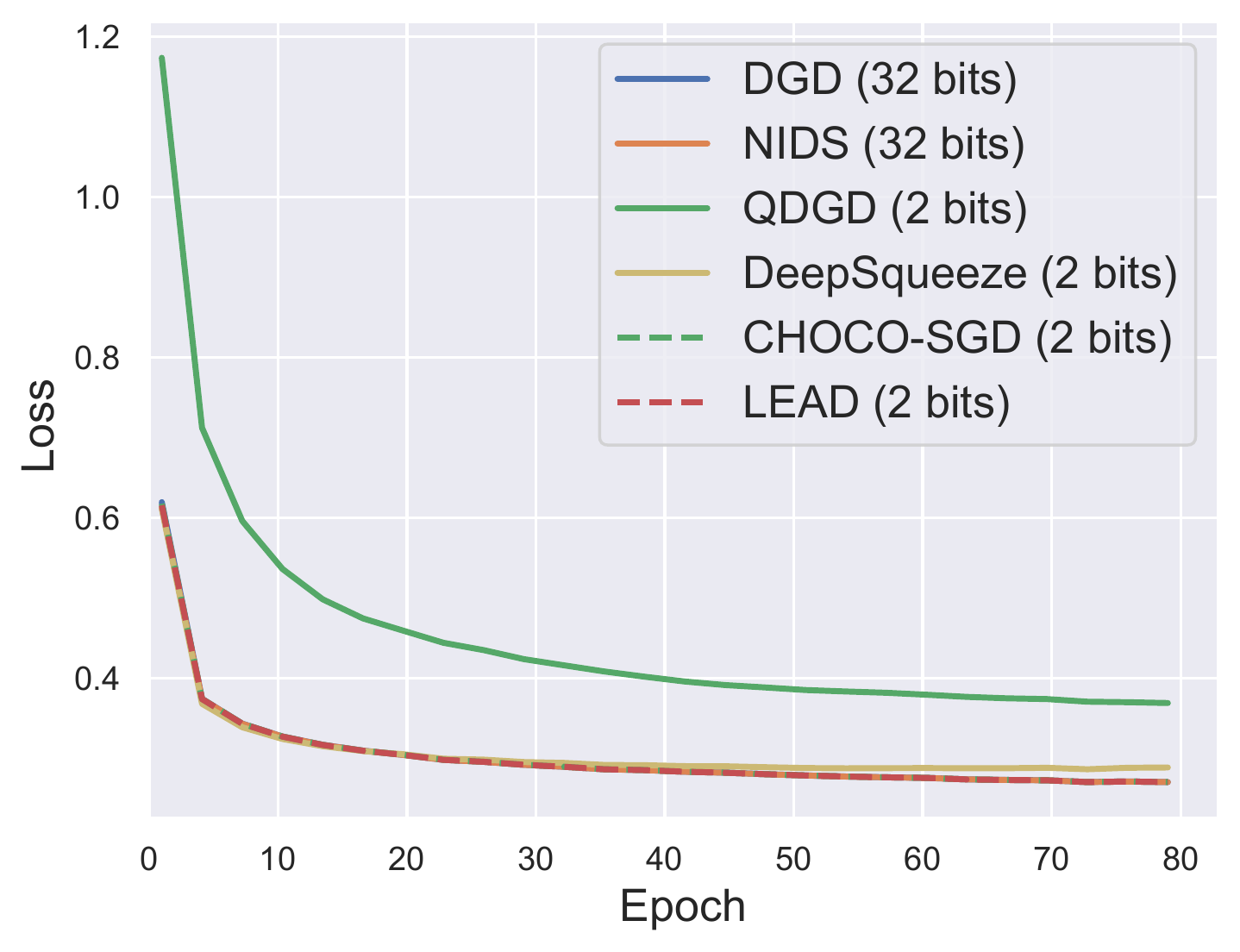}
    \vspace{-0.35in}
\end{minipage}
}
\subfloat[Loss $f(\overbar \vX^k)$]{
\begin{minipage}[t]{0.4\textwidth}
    \centering
    \includegraphics[width=1.0\textwidth]{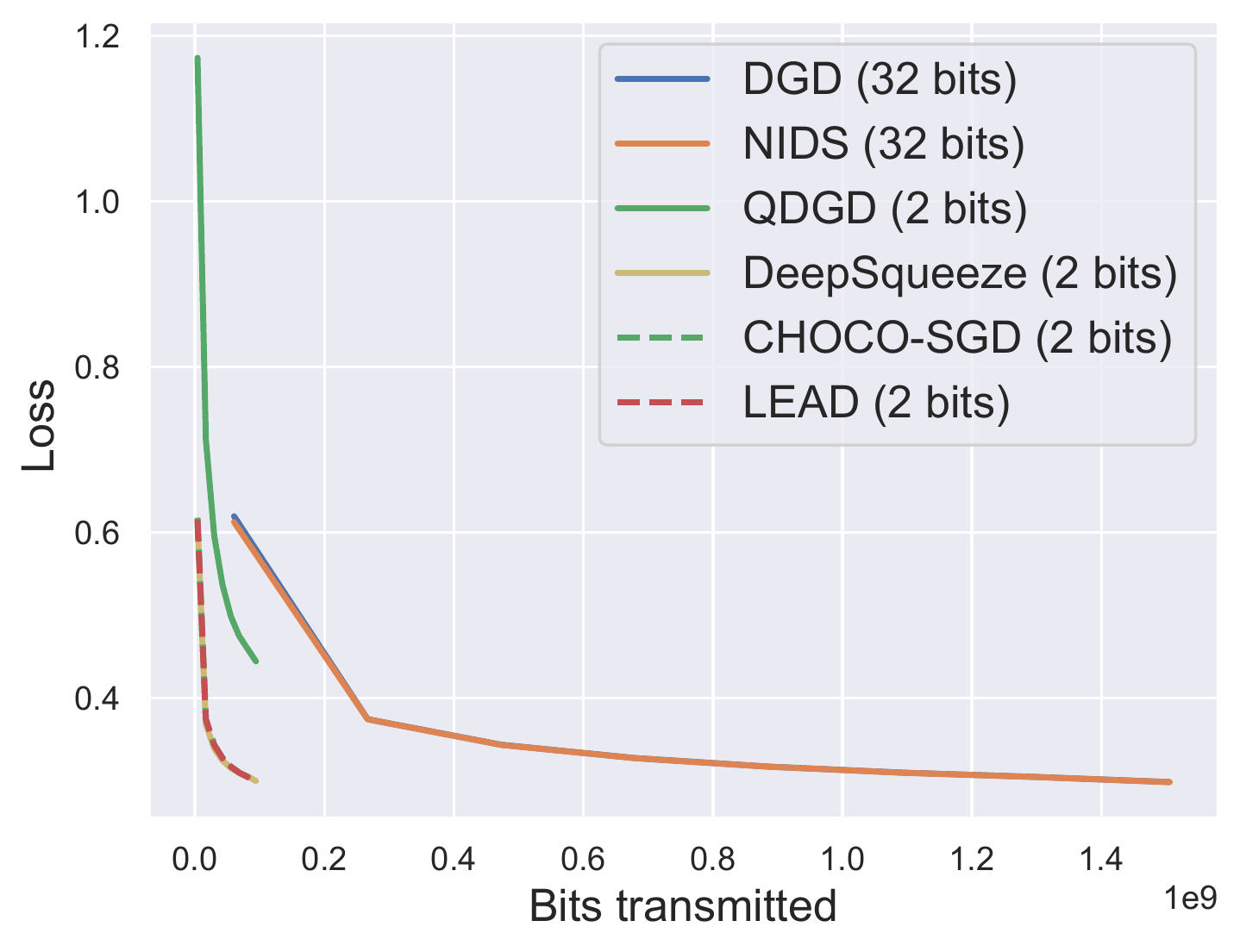}
    \vspace{-0.35in}
\end{minipage}
}
\end{center}
\vspace{-0.1in}
\caption{Logistic regression in the homogeneous case (mini-batch gradient)}
\label{fig:logistic_homo_mini}
\end{figure*}

\subsection{Parameter settings}
\label{apend:para_setting}

The best parameter settings we search for all algorithms and experiments are summarized in Tables~\ref{ls_para_table}--~\ref{deepnet_para_table}. QDGD and DeepSqueeze are more sensitive to $\gamma$ and CHOCO-SGD is slight more robust. LEAD is most robust to parameter settings and it works well for the setting $\alpha=0.5$ and $\gamma=1.0$ in all experiments in this paper.

\begin{table*}[!ht]
\centering
\begin{tabular}{|c|c|c|c|}\hline
    Algorithm  & $\eta$ & $\gamma$ & $\alpha$  \\\hline 
    DGD & $0.1$ & -  & - \\\hline
    NIDS & $0.1$ & - & - \\\hline
    QDGD & $0.1$ & $0.2$ & - \\\hline
    DeepSqueeze & $0.1$ & $0.2$ & - \\\hline
    CHOCO-SGD & $0.1$ & $0.8$ & - \\\hline
    LEAD & $0.1$ & $1.0$ & $0.5$ \\\hline
\end{tabular}
\caption{Parameter settings for the linear regression problem.}
\label{ls_para_table}
\end{table*}

\begin{table*}[!ht]
\begin{center}
\begin{minipage}[t]{0.45\textwidth}
\centering
\begin{tabular}{|c|c|c|c|}\hline
    Algorithm  & $\eta$ & $\gamma$ & $\alpha$  \\\hline 
    DGD & $0.1$ & -  & - \\\hline
    NIDS & $0.1$ & - & - \\\hline
    QDGD & $0.1$ & $0.4$ & - \\\hline
    DeepSqueeze & $0.1$ & $0.4$ & - \\\hline
    CHOCO-SGD & $0.1$ & $0.6$ & - \\\hline
    LEAD & $0.1$ & $1.0$ & $0.5$ \\\hline
\end{tabular}
\caption*{Homogeneous case}
\end{minipage}
\begin{minipage}[t]{0.45\textwidth}
\centering
\begin{tabular}{|c|c|c|c|}\hline
    Algorithm  & $\eta$ & $\gamma$ & $\alpha$  \\\hline 
    DGD & $0.1$ & -  & - \\\hline
    NIDS & $0.1$ & - & - \\\hline
    QDGD & $0.1$ & $0.2$ & - \\\hline
    DeepSqueeze & $0.1$ & $0.6$ & - \\\hline
    CHOCO-SGD & $0.1$ & $0.6$ & - \\\hline
    LEAD & $0.1$ & $1.0$ & $0.5$ \\\hline
\end{tabular}
\caption*{Heterogeneous case}
\end{minipage}
\end{center}
\caption{Parameter settings for the logistic regression problem (full-batch gradient).}
\label{lr_full_para_table}
\end{table*}

\begin{table*}[!ht]
\begin{center}
\begin{minipage}[t]{0.45\textwidth}
\centering
\begin{tabular}{|c|c|c|c|}\hline
    Algorithm  & $\eta$ & $\gamma$ & $\alpha$  \\\hline 
    DGD & $0.1$ & -  & - \\\hline
    NIDS & $0.1$ & - & - \\\hline
    QDGD & $0.05$ & $0.2$ & - \\\hline
    DeepSqueeze & $0.1$ & $0.6$ & - \\\hline
    CHOCO-SGD & $0.1$ & $0.6$ & - \\\hline
    LEAD & $0.1$ & $1.0$ & $0.5$ \\\hline
\end{tabular}
\caption*{Homogeneous case}
\end{minipage}
\begin{minipage}[t]{0.45\textwidth}
\centering
\begin{tabular}{|c|c|c|c|}\hline
    Algorithm  & $\eta$ & $\gamma$ & $\alpha$  \\\hline 
    DGD & $0.1$ & -  & - \\\hline
    NIDS & $0.1$ & - & - \\\hline
    QDGD & $0.05$ & $0.2$ & - \\\hline
    DeepSqueeze & $0.1$ & $0.6$ & - \\\hline
    CHOCO-SGD & $0.1$ & $0.6$ & - \\\hline
    LEAD & $0.1$ & $1.0$ & $0.5$ \\\hline
\end{tabular}
\caption*{Heterogeneous case}
\end{minipage}
\end{center}
\caption{Parameter settings for the logistic regression problem (mini-batch gradient).}
\label{lr_mini_para_table}
\end{table*}

\begin{table*}[!ht]
\begin{center}
\begin{minipage}[t]{0.45\textwidth}
\centering
\begin{tabular}{|c|c|c|c|}\hline
    Algorithm  & $\eta$ & $\gamma$ & $\alpha$  \\\hline 
    DGD & $0.1$ & -  & - \\\hline
    NIDS & $0.1$ & - & - \\\hline
    QDGD & $0.05$ & $0.1$ & - \\\hline
    DeepSqueeze & $0.1$ & $0.2$ & - \\\hline
    CHOCO-SGD & $0.1$ & $0.6$ & - \\\hline
    LEAD & $0.1$ & $1.0$ & $0.5$ \\\hline
\end{tabular}
\caption*{Homogeneous case}
\end{minipage}
\begin{minipage}[t]{0.45\textwidth}
\centering
\begin{tabular}{|c|c|c|c|}\hline
    Algorithm  & $\eta$ & $\gamma$ & $\alpha$  \\\hline 
    DGD & $0.05$ & -  & - \\\hline
    NIDS & $0.1$ & - & - \\\hline
    QDGD & * & * & - \\\hline
    DeepSqueeze & * & * & - \\\hline
    CHOCO-SGD & * & * & - \\\hline
    LEAD & $0.1$ & $1.0$ & $0.5$ \\\hline
\end{tabular}
\caption*{Heterogeneous case}
\end{minipage}
\end{center}
\caption{Parameter settings for the deep neural network. (* means divergence for all options we try)}
\label{deepnet_para_table}
\end{table*}

% \section{Appendix}
% \section{Theory}
\section{Proofs of the theorems}
\label{sec:allproof}
\subsection{Illustrative flow}\label{apdx:flow}
% The following flow graph depicts the relation between parameters and clarifies the range of conditional expectation.
The following flow graph depicts the relation between iterative variables and clarifies the range of conditional expectation. $\{\mathcal{G}_k\}_{k=0}^{\infty}$ and $\{\mathcal{F}_k\}_{k=0}^{\infty}$ are two $\sigma-$algebras generated by the gradient sampling and the stochastic compression respectively. They satisfy
$$\mathcal{G}_0\subset\mathcal{F}_0\subset\mathcal{G}_1\subset\mathcal{F}_1\subset\cdots\subset\mathcal{G}_k\subset\mathcal{F}_k\subset\cdots$$
%where $\EE_{\mathcal{G}}$ stands for the conditional expectation on stochastic gradient and $\EE_{\mathcal{F}}$ is on stochastic quantization method.  

%\yl{we can replace $\nabla \vF(\vx)$ by $\vG$ and add one more sequence of $\sigma-$algebra $\mathcal{g}_k$}

\begin{tikzcd}
{(\vX^1, \vD^1, \vH^1)} \arrow[rd, "{\tiny\nabla \vF(\vX^1;\xi^1)\in\mathcal{G}_0}" description] & {(\vX^2, \vD^2, \vH^2)} \arrow[rd, "{\tiny \nabla \vF(\vX^2;\xi^2)\in\mathcal{G}_1}" description]              & { (\vX^3, \vD^3, \vH^3)} \arrow[rd, "\cdots" description]                                                   & {(\vX^k, \vD^k, \vH^k)} \arrow[rd, "{\tiny \nabla \vF(\vX^k;\xi^k)\in\mathcal{G}_{k-1}}" description]                     & \cdots                                \\
                                                                                       &  \vY^1 \arrow[u, "\tiny \vE^1" description] \arrow[r, "1\text{st round}"', dashed] \arrow[d, Rightarrow] &  \vY^2 \arrow[u, "\tiny \vE^2" description] \arrow[d, Rightarrow] \arrow[r, "\cdots" description, dashed] & \vY^{k-1} \arrow[u, "\tiny \vE^{k-1}" description] \arrow[d, Rightarrow] \arrow[r, "(k-1)\text{th round}"', dashed] & \vY^{k} \arrow[u] \arrow[d, Rightarrow] \\
                                                                                       & \mathcal{F}_0 \arrow[r, "\subset"]                                                                   & \mathcal{F}_1 \arrow[r, "\cdots" description]                                                         & \mathcal{F}_{k-2}  \arrow[r, "\subset"]                                                                         & \mathcal{F}_{k-1}                    
\end{tikzcd}

The solid and dashed arrows in the top flow illustrate the dynamics of the algorithm, while in the bottom, the arrows stand for the relation between successive $\mathcal{F}$-$\sigma$-algebras. The downward arrows determine the range of $\mathcal{F}$-$\sigma$-algebras. E.g., up to $\vE^k,$ all random variables are in $\mathcal{F}_{k-1}$ and up to $\nabla\vF(\vX^k;\xi^k)$, all random variables are in $\mathcal{G}_{k-1}$ with $\mathcal{G}_{k-1}\subset\mathcal{F}_{k-1}.$ 
%The conditional expectation of $\vE^k$ in the $k$th iteration is $\EE_{\mathcal{F}}\vE^k$ while $\EE_{\mathcal{G}}\nabla\vF(\vX^k,\xi^k)$ stands for the conditional expectation on the gradient sampling in the $k$th iteration. 
Throughout the appendix, without specification, $\EE$ is the expectation conditioned on the corresponding stochastic estimators given the context.

\subsection{Two central Lemmas}
% \begin{lemma}[Fundamental Equality of the algorithmic framework]
\begin{lemma}[Fundamental equality]
\label{thm:equality}
Let $\vX^*$ be the optimal solution, $\vD^*\coloneqq-\nabla\vF(\vX^*)$ and $\vE^k$ denote the compression error in the $k$th iteration, that is $\vE^k = \vQ^k-(\vY^k-\vH^k) = \hat \vY^k - \vY^k$. From Alg.~\ref{algocomplete}, we have
\begin{align}
& \|\vX^{k+1}-\vX^*\|^2 + ({\eta^2}/{\gamma}) \|\vD^{k+1}-\vD^*\|^2_\vM  \nonumber \\
=& \|\vX^{k}-\vX^*\|^2 + ({\eta^2}/{\gamma}) \|\vD^{k}-\vD^*\|^2_\vM - ({\eta^2}/{\gamma}) \|\vD^{k+1}-\vD^k\|^2_\vM - \eta^2 \|\vD^{k+1}-\vD^*\|^2  \nonumber \\
&- 2\eta \langle \vX^k-\vX^*, \nabla \vF(\vX^k;\xi^k)-\nabla \vF(\vX^*) \rangle + \eta^2 \|\nabla \vF(\vX^{k};\xi^k) - \nabla \vF(\vX^*)\|^2 + 2\eta \langle \vE^k, \vD^{k+1} - \vD^* \rangle,\nonumber
\end{align}
where $\vM\coloneqq 2(\vI-\vW)^{\dagger} - \gamma \vI$ and $\gamma<{2}/{\lambda_{\max}(\vI-\vW)}$ ensures the positive definiteness of $\vM$ over $\text{range}(\vI-\vW)$.
\end{lemma}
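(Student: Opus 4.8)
The plan is to establish the identity by a direct, purely algebraic expansion of $\|\vX^{k+1}-\vX^*\|^2$: first peel off the two gradient terms and the compression-error term exactly as they appear on the right-hand side, and then reorganize the remaining dual contributions into the $\vM$-weighted norms via a pseudoinverse/projection argument. I would begin by rewriting the primal update (Line 7 of Alg.~\ref{algocomplete}) in ``error coordinates''. Using $\vD^*=-\nabla\vF(\vX^*)$, so that $-\eta\nabla\vF(\vX^*)=\eta\vD^*$, one obtains
\begin{align*}
\vX^{k+1}-\vX^* = (\vX^k-\vX^*) - \eta\big(\nabla\vF(\vX^k;\xi^k)-\nabla\vF(\vX^*)\big) - \eta(\vD^{k+1}-\vD^*).
\end{align*}
Squaring and expanding the norm immediately produces $\|\vX^k-\vX^*\|^2$, the term $-2\eta\langle\vX^k-\vX^*,\nabla\vF(\vX^k;\xi^k)-\nabla\vF(\vX^*)\rangle$, the term $\eta^2\|\nabla\vF(\vX^k;\xi^k)-\nabla\vF(\vX^*)\|^2$, a term $\eta^2\|\vD^{k+1}-\vD^*\|^2$, and a single cross term $-2\eta\big\langle (\vX^k-\vX^*)-\eta(\nabla\vF(\vX^k;\xi^k)-\nabla\vF(\vX^*)),\,\vD^{k+1}-\vD^*\big\rangle$ that carries all the work.

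For this cross term I would substitute, via the definition of $\vY^k$ in Line 4 and $\hat{\vY}^k=\vY^k+\vE^k$, the identity $(\vX^k-\vX^*)-\eta(\nabla\vF(\vX^k;\xi^k)-\nabla\vF(\vX^*)) = \hat{\vY}^k-\vE^k+\eta\vD^k-\vX^*-\eta\vD^*$. Two facts then finish the conversion. First, the dual invariant $\vD^k,\vD^*\in\text{range}(\vI-\vW)$ holds for all $k$: it follows by induction from $\vD^1=(\vI-\vW)\vZ$ and the update $\vD^{k+1}=\vD^k+\frac{\gamma}{2\eta}(\vI-\vW)\hat{\vY}^k$, since $\hat{\vY}^k-\hat{\vY}^k_w=(\vI-\vW)\hat{\vY}^k$. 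Hence $\vD^{k+1}-\vD^*\in\text{range}(\vI-\vW)=\vone^\perp$, which kills $\langle\vX^*,\vD^{k+1}-\vD^*\rangle=0$ (as $\vX^*$ is consensual) and also gives the projection identity $\vD^{k+1}-\vD^*=(\vI-\vW)(\vI-\vW)^{\dagger}(\vD^{k+1}-\vD^*)$. Second, the $\vE^k$ piece yields precisely $2\eta\langle\vE^k,\vD^{k+1}-\vD^*\rangle$. Using the dual update $(\vI-\vW)\hat{\vY}^k=\frac{2\eta}{\gamma}(\vD^{k+1}-\vD^k)$ together with the projection identity and the symmetry of $\vI-\vW$, the $\hat{\vY}^k$ piece becomes $-\frac{4\eta^2}{\gamma}\langle\vD^{k+1}-\vD^k,(\vI-\vW)^{\dagger}(\vD^{k+1}-\vD^*)\rangle$, while the $\eta\vD^k-\eta\vD^*$ piece gives $-2\eta^2\langle\vD^k-\vD^*,\vD^{k+1}-\vD^*\rangle$.

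Finally I would collect the three dual terms $\eta^2\|\vD^{k+1}-\vD^*\|^2$, $-\frac{4\eta^2}{\gamma}\langle\vD^{k+1}-\vD^k,(\vI-\vW)^{\dagger}(\vD^{k+1}-\vD^*)\rangle$, and $-2\eta^2\langle\vD^k-\vD^*,\vD^{k+1}-\vD^*\rangle$, and verify they equal $\frac{\eta^2}{\gamma}\|\vD^k-\vD^*\|_\vM^2-\frac{\eta^2}{\gamma}\|\vD^{k+1}-\vD^*\|_\vM^2-\frac{\eta^2}{\gamma}\|\vD^{k+1}-\vD^k\|_\vM^2-\eta^2\|\vD^{k+1}-\vD^*\|^2$. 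Writing $\vM=2(\vI-\vW)^{\dagger}-\gamma\vI$ splits this check into two pieces: the $-\gamma\vI$ parts reduce to the elementary three-point identity $\|\vb-\va\|^2-\|\vb\|^2=\|\va\|^2-2\langle\va,\vb\rangle$ (with $\va=\vD^{k+1}-\vD^*$, $\vb=\vD^k-\vD^*$), and the $2(\vI-\vW)^{\dagger}$ parts collapse via the same polarization applied in the $(\vI-\vW)^{\dagger}$ inner product, namely $\langle\vb,\vP\vb\rangle-\langle\va,\vP\va\rangle-\langle\va-\vb,\vP(\va-\vb)\rangle=-2\langle\va-\vb,\vP\va\rangle$ with $\vP=(\vI-\vW)^{\dagger}$. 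I expect this last reorganization to be the main bookkeeping obstacle: one must carefully distinguish the plain from the $(\vI-\vW)^{\dagger}$-weighted inner products and correctly invoke the projection identity, which is valid only because $\vD^{k+1}-\vD^*$ lies in $\text{range}(\vI-\vW)$. Everything else is a mechanical expansion, and the positive definiteness of $\vM$ over $\text{range}(\vI-\vW)$ (ensured by $\gamma<2/\lambda_{\max}(\vI-\vW)$) is only needed afterward to read the $\vM$-terms as genuine squared norms, not for the equality itself.
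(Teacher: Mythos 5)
Your proposal is correct: the error-coordinate form of Line~7, the vanishing of $\langle \vX^*,\vD^{k+1}-\vD^*\rangle$ via $\range(\vI-\vW)=\vone^\perp$, the conversion of the $\hat\vY^k$ piece through $(\vI-\vW)\hat\vY^k=\frac{2\eta}{\gamma}(\vD^{k+1}-\vD^k)$ and the projection identity, and the final collection step all check out (I verified the last one by splitting $\vM=2(\vI-\vW)^{\dagger}-\gamma\vI$ exactly as you suggest, with $\va=\vD^{k+1}-\vD^*$, $\vb=\vD^k-\vD^*$). Your route is a genuine reorganization of the paper's argument rather than a copy of it: the paper starts from $2\eta\langle\vX^k-\vX^*,\nabla\vF(\vX^k;\xi^k)-\nabla\vF(\vX^*)\rangle$ and channels all dual-variable work through two auxiliary facts --- identity \eqref{equ:equlity_1}, which expresses $\frac{\vI-\vW}{2\eta}(\vX^{k+1}-\vX^*)$ in terms of $\vD^{k+1}-\vD^k$ and $\vE^k$, and Lemma~\ref{thm:xde_equality}, which turns $\langle\vX^{k+1}-\vX^*,\vD^{k+1}-\vD^*\rangle$ and $\langle\vX^{k+1}-\vX^*,\vD^{k+1}-\vD^k\rangle$ into $\vM$-inner products --- whereas you expand $\|\vX^{k+1}-\vX^*\|^2$ once and apply the pseudoinverse maneuver directly to $\hat\vY^k$, never forming either auxiliary statement. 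Both proofs rest on the same three pillars (the range invariant for $\vD^k$ and $\vD^*$, the projection identity $\vD^{k+1}-\vD^*=(\vI-\vW)(\vI-\vW)^{\dagger}(\vD^{k+1}-\vD^*)$, and three-point/polarization identities), so what differs is the factorization, and each choice buys something: your inline version is shorter and more self-contained for this lemma alone, while the paper's modularization pays off downstream, since \eqref{lemma5_a} of Lemma~\ref{thm:xde_equality} is reused verbatim in the proof of Lemma~\ref{thm:inequality} (see \eqref{equ:h_equ2}); with your organization that identity would need to be re-derived there. Two minor points: your induction establishes $\vD^k\in\range(\vI-\vW)$, but $\vD^*\in\range(\vI-\vW)$ additionally requires the optimality condition $\vone^\top\nabla\vF(\vX^*)=\vzero$ together with $\range(\vI-\vW)=\vone^\perp$ (the paper also leaves this implicit, stating it as a hypothesis of Lemma~\ref{thm:xde_equality}); and your closing observation that positive definiteness of $\vM$ is needed only to read the $\vM$-terms as norms, not for the equality itself, is correct and slightly sharper than the paper's phrasing.
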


% \begin{lemma}[State inequality of the compression procedure]
\begin{lemma}[State inequality]
\label{thm:inequality}
Let the same assumptions in Lemma~\ref{thm:equality} hold. From Alg.~\ref{algocomplete}, if we take the expectation over the compression operator conditioned on the $k$-th iteration, we have
\begin{align}
\mathbb{E} \|\vH^{k+1} &-\vX^*\|^2 \leq (1-\alpha) \|\vH^{k}-\vX^*\|^2 + \alpha \mathbb{E}\|\vX^{k+1}-\vX^*\|^2 + \alpha \eta^2 \EE\|\vD^{k+1}-\vD^k\|^2 \nonumber \\ &+  \frac{2\alpha \eta^2}{\gamma} \EE\| \vD^{k+1}-\vD^k \|^2_{\vM}
+ \alpha^2 \mathbb{E} \|\vE^k\|^2 - \alpha\gamma \mathbb{E} \| \vE^k\|^2_{\vI-\vW} - \alpha(1-\alpha) \| \vY^k-\vH^k \|^2.\nonumber
\end{align}
\end{lemma}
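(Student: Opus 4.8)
The plan is to start directly from the momentum (state) update $\vH^{k+1}=(1-\alpha)\vH^k+\alpha\hat\vY^k$ in the \textsc{Comm} subroutine of Alg.~\ref{algocomplete} and apply the three-point identity
\[
\|(1-\alpha)\va+\alpha\vb\|^2=(1-\alpha)\|\va\|^2+\alpha\|\vb\|^2-\alpha(1-\alpha)\|\va-\vb\|^2
\]
with $\va=\vH^k-\vX^*$ and $\vb=\hat\vY^k-\vX^*$. This reduces the lemma to controlling, in conditional expectation over the $k$-th compression, the terms $\alpha\,\EE\|\hat\vY^k-\vX^*\|^2$ and $-\alpha(1-\alpha)\,\EE\|\vH^k-\hat\vY^k\|^2$, since the factor $(1-\alpha)\|\vH^k-\vX^*\|^2$ already matches the target. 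Throughout I would use only the unbiasedness $\EE\vE^k=\vzero$ from Assumption~\ref{ass:compression}; the $C$-contraction property is not needed here (it enters only later, when Lemma~\ref{thm:equality} and this lemma are combined).

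The central step is the decomposition $\hat\vY^k-\vX^*=(\vX^{k+1}-\vX^*)+\eta(\vD^{k+1}-\vD^k)+\vE^k$, obtained from $\hat\vY^k=\vY^k+\vE^k$ together with $\vX^{k+1}=\vY^k-\eta(\vD^{k+1}-\vD^k)$, which follows by subtracting Lines 4 and 7 (they share the term $\vX^k-\eta\nabla\vF(\vX^k;\xi^k)$). The key observation is that the first two summands add up to $\vY^k-\vX^*$, a quantity measurable \emph{before} the compression; hence when I expand the square and take $\EE$, every cross term pairing $\vE^k$ against $\vX^{k+1}-\vX^*$ or $\eta(\vD^{k+1}-\vD^k)$ collapses to $\langle\vY^k-\vX^*,\EE\vE^k\rangle=0$. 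The three diagonal terms give $\EE\|\vX^{k+1}-\vX^*\|^2$, $\eta^2\EE\|\vD^{k+1}-\vD^k\|^2$, and $\EE\|\vE^k\|^2$, so the only surviving cross term is $2\eta\,\EE\langle\vX^{k+1}-\vX^*,\vD^{k+1}-\vD^k\rangle$.

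Evaluating that cross term is the main obstacle, and it is where the weight matrix $\vM=2(\vI-\vW)^\dagger-\gamma\vI$ appears. I would substitute $\vX^{k+1}-\vX^*=(\vY^k-\vX^*)-\eta(\vD^{k+1}-\vD^k)$ and then exploit $\vD^{k+1}-\vD^k=\tfrac{\gamma}{2\eta}(\vI-\vW)\hat\vY^k\in\Range(\vI-\vW)$ (from Line 6 with $\hat\vY_w=\vW\hat\vY$), the consensus identity $(\vI-\vW)\vX^*=\vzero$, and symmetry of $\vI-\vW$, to turn inner products into pseudoinverse-weighted quadratics; concretely $\langle\vD^{k+1}-\vD^k,\hat\vY^k\rangle=\tfrac{2\eta}{\gamma}\|\vD^{k+1}-\vD^k\|^2_{(\vI-\vW)^\dagger}$ because the dual increment lies in the range and is orthogonal to $\ker(\vI-\vW)$. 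This produces $2\eta\langle\vX^{k+1}-\vX^*,\vD^{k+1}-\vD^k\rangle=\tfrac{4\eta^2}{\gamma}\|\vD^{k+1}-\vD^k\|^2_{(\vI-\vW)^\dagger}-2\eta^2\|\vD^{k+1}-\vD^k\|^2-2\eta\langle\vE^k,\vD^{k+1}-\vD^k\rangle$, and the first two pieces combine, by the very definition of $\vM$, into exactly $\tfrac{2\eta^2}{\gamma}\|\vD^{k+1}-\vD^k\|^2_\vM$. For the leftover, writing $\vD^{k+1}-\vD^k=\tfrac{\gamma}{2\eta}(\vI-\vW)(\vY^k+\vE^k)$ and using $\EE\vE^k=\vzero$ gives $\EE\langle\vE^k,\vD^{k+1}-\vD^k\rangle=\tfrac{\gamma}{2\eta}\EE\|\vE^k\|^2_{\vI-\vW}$, so this contributes precisely $-\gamma\,\EE\|\vE^k\|^2_{\vI-\vW}$.

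It remains to dispatch the last term. Since $\vH^k-\hat\vY^k=-(\vY^k-\vH^k)-\vE^k$, the cross term again vanishes under $\EE\vE^k=\vzero$, leaving $\EE\|\vH^k-\hat\vY^k\|^2=\|\vY^k-\vH^k\|^2+\EE\|\vE^k\|^2$ and hence $-\alpha(1-\alpha)\|\vY^k-\vH^k\|^2-\alpha(1-\alpha)\EE\|\vE^k\|^2$. Collecting the two $\EE\|\vE^k\|^2$ contributions, namely $\alpha\EE\|\vE^k\|^2$ from the diagonal and $-\alpha(1-\alpha)\EE\|\vE^k\|^2$ here, yields exactly $\alpha^2\EE\|\vE^k\|^2$. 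Assembling all pieces reproduces the claimed bound, in fact as an \emph{equality}, so the stated inequality follows a fortiori. I expect the $\vM$-term computation of the third paragraph (the range/pseudoinverse bookkeeping) to be the only delicate part; every other step is the three-point identity and repeated cancellation via $\EE\vE^k=\vzero$.
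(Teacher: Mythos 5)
Your proof is correct and takes essentially the same route as the paper's: the paper first peels off $\alpha^2\,\EE\|\vE^k\|^2$ via unbiasedness and then applies the three-point identity to $(1-\alpha)(\vH^k-\vX^*)+\alpha(\vY^k-\vX^*)$, whereas you apply the identity directly with $\hat\vY^k$ and recover the coefficient $\alpha^2=\alpha-\alpha(1-\alpha)$ by cancellation, but the substance is identical --- the decomposition $\vX^{k+1}=\vY^k-\eta(\vD^{k+1}-\vD^k)$, the $\vM$-weighted cross-term identity (the paper's Lemma~\ref{thm:xde_equality}, eq.~\eqref{lemma5_a}), and $2\eta\,\EE\langle\vE^k,\vD^{k+1}-\vD^k\rangle=\gamma\,\EE\|\vE^k\|^2_{\vI-\vW}$ (Lemma~\ref{thm:expect_error}), which you simply re-derive inline rather than cite. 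Your observation that the bound in fact holds with equality is also consistent with the paper, whose own combination of its two intermediate identities is an equality written as ``$\leq$''.
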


\subsection{Proof of Lemma~\ref{thm:equality}}
\label{apdx:proof_lemma_equality}
Before proving Lemma~\ref{thm:equality}, we let $\vE^k =\hat \vY^k - \vY^k$ and introduce the following three Lemmas. 
\begin{lemma}
Let $\vX^*$ be the consensus solution. %denotes the compression error which implies $\hat \vY^k = \vY^k + \vE^k$. 
Then, from Line 4-7 of Alg.~\ref{algocomplete}, %algorithmic framework~\eqref{equ:frame_1}$\sim$\eqref{equ:frame_4}, 
we obtain
\begin{align}
\frac{\vI-\vW}{2\eta} (\vX^{k+1} - \vX^*) = \left(\frac{I}{\gamma} - \frac{\vI-\vW}{2}\right) (\vD^{k+1}-\vD^k) - \frac{\vI-\vW}{2\eta} \vE^k.   \label{equ:equlity_1}
\end{align}
\end{lemma}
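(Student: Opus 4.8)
The plan is to eliminate the (stochastic) gradient term by combining the two model updates, and then rewrite the result purely in terms of the dual increment $\vD^{k+1}-\vD^k$ and the compression error $\vE^k$. Two facts from the algorithm description do all the work: the communicated compressed quantity satisfies $\hat\vY^k_w=\vW\hat\vY^k$, so that Line 6 collapses to $\vD^{k+1}=\vD^k+\frac{\gamma}{2\eta}(\vI-\vW)\hat\vY^k$; and $\hat\vY^k=\vY^k+\vE^k$ by the definition $\vE^k=\hat\vY^k-\vY^k$.

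First I would subtract Line 4 from Line 7. Both lines contain $\vX^k-\eta\nabla\vF(\vX^k;\xi^k)$, so these terms cancel and leave $\vX^{k+1}-\vY^k=-\eta(\vD^{k+1}-\vD^k)$, that is $\vX^{k+1}=\vY^k-\eta(\vD^{k+1}-\vD^k)$. Multiplying through by $\frac{\vI-\vW}{2\eta}$ gives $\frac{\vI-\vW}{2\eta}\vX^{k+1}=\frac{\vI-\vW}{2\eta}\vY^k-\frac{\vI-\vW}{2}(\vD^{k+1}-\vD^k)$, which still carries a $\vY^k$ term that I must re-express.

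Next I would use the collapsed form of Line 6. Writing $\vD^{k+1}-\vD^k=\frac{\gamma}{2\eta}(\vI-\vW)(\vY^k+\vE^k)$ and solving for $(\vI-\vW)\vY^k$ yields $\frac{\vI-\vW}{2\eta}\vY^k=\frac{1}{\gamma}(\vD^{k+1}-\vD^k)-\frac{\vI-\vW}{2\eta}\vE^k$. Substituting this into the previous display and collecting the two terms proportional to $\vD^{k+1}-\vD^k$ produces $\frac{\vI-\vW}{2\eta}\vX^{k+1}=\left(\frac{\vI}{\gamma}-\frac{\vI-\vW}{2}\right)(\vD^{k+1}-\vD^k)-\frac{\vI-\vW}{2\eta}\vE^k$.

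The only remaining point is to replace $\vX^{k+1}$ by $\vX^{k+1}-\vX^*$ on the left-hand side. Since $\vX^*$ is a consensus solution it has identical rows, and $\vW\vone=\vone$ from Assumption~\ref{ass:mixing} gives $\vW\vX^*=\vX^*$, hence $(\vI-\vW)\vX^*=\vzero$. Therefore $\frac{\vI-\vW}{2\eta}\vX^{k+1}=\frac{\vI-\vW}{2\eta}(\vX^{k+1}-\vX^*)$, and the claimed identity~\eqref{equ:equlity_1} follows. There is no genuine obstacle here; the derivation is entirely algebraic, and the only things one must not overlook are invoking $\hat\vY^k_w=\vW\hat\vY^k$ to simplify the dual update and using the consensus annihilation $(\vI-\vW)\vX^*=\vzero$ to insert $\vX^*$.
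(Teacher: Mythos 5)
Your proof is correct and takes essentially the same route as the paper's: both are purely algebraic eliminations combining Lines 4, 6, and 7 with $\hat\vY^k=\vY^k+\vE^k$ and the consensus annihilation $(\vI-\vW)\vX^*=\vzero$. The only (cosmetic) difference is bookkeeping — you cancel the gradient term first via $\vX^{k+1}=\vY^k-\eta(\vD^{k+1}-\vD^k)$ and then solve Line 6 for $\frac{\vI-\vW}{2\eta}\vY^k$, whereas the paper expands $\vD^{k+1}$ through $\vY^k$ and inserts $\vX^{k+1}$ and $\vX^*$ by adding and subtracting inside the dual update; the ingredients and conclusion are identical.
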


\begin{proof}From the iterations in Alg.~\ref{algocomplete}, we have
\begin{align}
    \vD^{k+1}&=\vD^k+\frac{\gamma}{2\eta} (\vI-\vW) \hat \vY^k \qquad{(\text{from} \mbox{ Line 6})}\nonumber\\
    &=\vD^k+\frac{\gamma}{2\eta} (\vI-\vW) (\vY^k+\vE^k) \nonumber\\
    &=\vD^k+\frac{\gamma}{2\eta} (\vI-\vW) (\vX^k-\eta\nabla \vF(\vX^k;\xi^k) - \eta \vD^k + \vE^k) \qquad{(\text{from} \mbox{ Line 4})}\nonumber\\
    &=\vD^k+\frac{\gamma}{2\eta} (\vI-\vW) (\vX^k-\eta\nabla \vF(\vX^k;\xi^k) -\eta \vD^{k+1} -\vX^* + \eta (\vD^{k+1}-\vD^k) + \vE^k)\nonumber\\
    &=\vD^k+\frac{\gamma}{2\eta} (\vI-\vW) (\vX^{k+1} -\vX^*) + \frac{\gamma}{2} (\vI-\vW) (\vD^{k+1}-\vD^k) + \frac{\gamma}{2\eta} (\vI-\vW) \vE^k,\nonumber
\end{align}
where the fourth equality holds due to $(\vI-\vW)\vX^*=\vzero$ and the last equality comes from Line 7 of Alg.~\ref{algocomplete}. Rewriting this equality, and we obtain~\eqref{equ:equlity_1}.
\end{proof}

\begin{lemma} 
\label{thm:xde_equality}
Let $\vD^*=-\nabla\vF(\vX^*)\in\Span\{\vI-\vW\}$, we have
\begin{align}
\langle \vX^{k+1}-\vX^*, \vD^{k+1} - \vD^k \rangle =& \frac{\eta}{\gamma} \|\vD^{k+1}-\vD^k \|^2_{\vM} - \langle \vE^k, \vD^{k+1} - \vD^k \rangle, \label{lemma5_a}\\
\langle \vX^{k+1}-\vX^*, \vD^{k+1} - \vD^* \rangle =& \frac{\eta}{\gamma} \langle \vD^{k+1}-\vD^k, \vD^{k+1} - \vD^* \rangle_{\vM} - \langle \vE^k, \vD^{k+1} - \vD^* \rangle,\label{lemma5_b}
\end{align}
where $\vM=2(\vI-\vW)^{\dagger} - \gamma \vI$ and $\gamma<
%\frac{2}{\beta}=
{2}/{\lambda_{\max}(\vI-\vW)}$ ensures the positive definiteness of $\vM$ over $\textnormal{span}\{\vI-\vW\}$. 
\end{lemma}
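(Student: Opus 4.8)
The plan is to derive both identities directly from the structural equation~\eqref{equ:equlity_1}, which already expresses $\frac{\vI-\vW}{2\eta}(\vX^{k+1}-\vX^*)$ in terms of $\vD^{k+1}-\vD^k$ and $\vE^k$. The guiding idea is to pair this equation with the pseudoinverse $(\vI-\vW)^{\dagger}$, so that the factor $\vI-\vW$ sitting in front of $\vX^{k+1}-\vX^*$ is converted into the bare inner products appearing on the left-hand sides of \eqref{lemma5_a}--\eqref{lemma5_b}.

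First I would record the membership facts that make the pseudoinverse behave like an inverse here. From Line 6 of Alg.~\ref{algocomplete}, $\vD^{k+1}-\vD^k=\frac{\gamma}{2\eta}(\vI-\vW)\hat\vY^k\in\Range(\vI-\vW)$; moreover $\vD^{k+1}\in\Range(\vI-\vW)$ (this is preserved by the update from the initialization $\vD^1=(\vI-\vW)\vZ$) and $\vD^*\in\Span\{\vI-\vW\}$ by hypothesis, so $\vD^{k+1}-\vD^*\in\Range(\vI-\vW)$ as well. Since $\vI-\vW$ is symmetric, $(\vI-\vW)(\vI-\vW)^{\dagger}$ is the orthogonal projector onto $\Range(\vI-\vW)$, hence $(\vI-\vW)(\vI-\vW)^{\dagger}\vu=\vu$ for every $\vu\in\Range(\vI-\vW)$.

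For \eqref{lemma5_a} I would take the inner product of both sides of \eqref{equ:equlity_1} with $2\eta(\vI-\vW)^{\dagger}(\vD^{k+1}-\vD^k)$. On the left, moving $\vI-\vW$ onto the pseudoinverse by symmetry and then applying the projector identity collapses the product to $\langle \vX^{k+1}-\vX^*,\vD^{k+1}-\vD^k\rangle$. On the right, the $\vE^k$-term collapses in exactly the same way to $-\langle \vE^k,\vD^{k+1}-\vD^k\rangle$, while the term carrying $\frac{\vI}{\gamma}-\frac{\vI-\vW}{2}$ produces $\frac{2\eta}{\gamma}\|\vD^{k+1}-\vD^k\|^2_{(\vI-\vW)^{\dagger}}-\eta\|\vD^{k+1}-\vD^k\|^2$, which is precisely $\frac{\eta}{\gamma}\|\vD^{k+1}-\vD^k\|^2_{\vM}$ by the definition $\vM=2(\vI-\vW)^{\dagger}-\gamma\vI$. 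For \eqref{lemma5_b} I would repeat this computation verbatim, pairing instead with $2\eta(\vI-\vW)^{\dagger}(\vD^{k+1}-\vD^*)$; the only change is that the quadratic term becomes the $\vM$-weighted cross term $\frac{\eta}{\gamma}\langle\vD^{k+1}-\vD^k,\vD^{k+1}-\vD^*\rangle_{\vM}$, using again that $\vD^{k+1}-\vD^*\in\Range(\vI-\vW)$.

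The only genuine care needed is in the pseudoinverse bookkeeping: one must confirm that both $\vD^{k+1}-\vD^k$ and $\vD^{k+1}-\vD^*$ really lie in $\Range(\vI-\vW)$ so that the projector acts as the identity, and keep track of the symmetry of $\vI-\vW$ and $(\vI-\vW)^{\dagger}$ when sliding them across inner products. The asserted positive definiteness of $\vM$ over $\Span\{\vI-\vW\}$ under $\gamma<2/\lambda_{\max}(\vI-\vW)$ is not used in the algebraic derivation itself; it holds because on $\Range(\vI-\vW)$ the eigenvalues of $2(\vI-\vW)^{\dagger}$ equal $2/\lambda_i(\vI-\vW)\geq 2/\lambda_{\max}(\vI-\vW)>\gamma$, and it matters only to guarantee that $\|\cdot\|_{\vM}$ is a legitimate (semi)norm on the relevant subspace.
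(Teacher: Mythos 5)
Your proposal is correct and is essentially the paper's own proof: the paper likewise starts from equation~\eqref{equ:equlity_1}, inserts $(\vI-\vW)(\vI-\vW)^{\dagger}$ into the inner product (writing $\langle \vX^{k+1}-\vX^*, \vD^{k+1}-\vD^k\rangle = \langle (\vI-\vW)(\vX^{k+1}-\vX^*), (\vI-\vW)^{\dagger}(\vD^{k+1}-\vD^k)\rangle$ and substituting), which is the same computation as your pairing with $2\eta(\vI-\vW)^{\dagger}(\vD^{k+1}-\vD^k)$ read in the opposite order. Your bookkeeping of the range memberships, the symmetry of $\vI-\vW$, and the observation that positive definiteness of $\vM$ plays no role in the algebra all match the paper's argument.
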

\begin{proof}
Since $\vD^{k+1}\in\Span\{\vI-\vW\}$ for any $k$, we have
\begin{align*}
 & \langle \vX^{k+1}-\vX^*, \vD^{k+1} - \vD^k \rangle \nonumber\\
=& \langle (\vI-\vW)(\vX^{k+1}-\vX^*), (\vI-\vW)^{\dagger} (\vD^{k+1} - \vD^k) \rangle \nonumber\\
=& \left\langle {\eta\over\gamma} ({2}\vI- \gamma(\vI-\vW))(\vD^{k+1}-\vD^k) - (\vI-\vW)\vE^k, (\vI-\vW)^{\dagger} (\vD^{k+1} - \vD^k)\right\rangle  \qquad{(\text{from}~\eqref{equ:equlity_1})}\nonumber \\
=& \left\langle {\eta\over\gamma} (2(\vI-\vW)^{\dagger}- \gamma\vI\Big)(\vD^{k+1}-\vD^k) - \vE^k, \vD^{k+1} - \vD^k \right\rangle \nonumber\\
=& \frac{\eta}{\gamma} \|\vD^{k+1}-\vD^k \|^2_{\vM} - \langle \vE^k, \vD^{k+1} - \vD^k \rangle.
\end{align*}
Similarly, we have
\begin{align*}
 & \langle \vX^{k+1}-\vX^*, \vD^{k+1} - \vD^* \rangle \\
=& \langle (\vI-\vW)(\vX^{k+1}-\vX^*), (\vI-\vW)^{\dagger} (\vD^{k+1} - \vD^*) \rangle \\
=& \left\langle {\eta\over\gamma} (2\vI- \gamma(\vI-\vW))(\vD^{k+1}-\vD^k) - (\vI-\vW)\vE^k, (\vI-\vW)^{\dagger} (\vD^{k+1} - \vD^*) \right\rangle \\
=& \left\langle {\eta\over\gamma} (2(\vI-\vW)^{\dagger}- \vI)(\vD^{k+1}-\vD^k) - \vE^k, \vD^{k+1} - \vD^* \right\rangle \\
=& \frac{\eta}{\gamma} \langle \vD^{k+1}-\vD^k, \vD^{k+1} - \vD^* \rangle_{\vM} - \langle \vE^k, \vD^{k+1} - \vD^* \rangle.
\end{align*}
To make sure that $\vM$ is positive definite over $\Span\{\vI-\vW\}$, we need $\gamma<%\frac{2}{\beta}=
{2}/{\lambda_{\max}(\vI-\vW)}.$
\end{proof}

\begin{lemma} 
\label{thm:expect_error}
Taking the expectation conditioned on the compression in the $k$th iteration,
%\yl{Just a remark, in $k$th iteration, we generate $\vY^k, \vE^k, \vH^{k+1},\vD^{k+1}$ and $\vX^{k+1}$, taking expectation conditioned on the $k$th compression means we treat all variables generated before $\vE^k$ deterministic such as $\vY^{k}$, $\vX^k, \vH^k$ and $\vD^{k}$} 
we have
 \begin{align}
 2\eta \mathbb{E} \langle \vE^k, \vD^{k+1} - \vD^* \rangle & = 2\eta \mathbb{E} \left\langle \vE^k, \vD^{k}+\frac{\gamma}{2\eta}(\vI-\vW)\vY^k + \frac{\gamma}{2\eta}(\vI-\vW) \vE^k - \vD^* \right\rangle  \nonumber\\
 & = \gamma \mathbb{E} \langle \vE^k, (\vI-\vW) \vE^k \rangle = \gamma \mathbb{E} \|\vE^k\|^2_{\vI-\vW}, \nonumber\\
 2\eta \mathbb{E} \langle \vE^k, \vD^{k+1} - \vD^k \rangle & = 2\eta \mathbb{E} \left\langle \vE^k, \frac{\gamma}{2\eta}(\vI-\vW) \vY^k + \frac{\gamma}{2\eta}(\vI-\vW) \vE^k \right\rangle  \nonumber\\
 & = \gamma \mathbb{E} \langle \vE^k, (\vI-\vW) \vE^k \rangle = \gamma \mathbb{E} \|\vE^k\|^2_{\vI-\vW}. \nonumber
 \end{align}
\end{lemma}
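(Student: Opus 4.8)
The plan is to substitute the dual recursion and then exploit the unbiasedness of the compressor. First I would rewrite the update in Line 6 of Alg.~\ref{algocomplete} in a form that exposes $\vE^k$. Using the identity $\hat\vY^k_w = \vW\hat\vY^k$ established in the \textsc{Comm} procedure together with $\hat\vY^k = \vY^k + \vE^k$, Line 6 becomes $\vD^{k+1} = \vD^k + \frac{\gamma}{2\eta}(\vI-\vW)\hat\vY^k = \vD^k + \frac{\gamma}{2\eta}(\vI-\vW)\vY^k + \frac{\gamma}{2\eta}(\vI-\vW)\vE^k$. Subtracting $\vD^*$ (respectively $\vD^k$) from both sides immediately yields the first equality in each of the two displayed lines of the lemma, so no estimation is involved in that step.

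For the second equality, I would expand the inner product into its constituent terms and take the conditional expectation. The key structural observation is that the expectation here is taken over the compression operator in the $k$th iteration only: conditioned on the $\sigma$-algebra $\mathcal{G}_{k-1}$ (which already contains the gradient sample at step $k$, as depicted in the flow graph of Appendix~\ref{apdx:flow}), the quantities $\vX^k, \vD^k, \vH^k$ and hence $\vY^k = \vX^k - \eta\nabla\vF(\vX^k;\xi^k) - \eta\vD^k$ are all deterministic, and the sole source of randomness is $\vE^k$. By Assumption~\ref{ass:compression} the compressed difference $\vQ^k = Q(\vY^k - \vH^k)$ is unbiased, so $\EE\vE^k = \EE[\vQ^k - (\vY^k - \vH^k)] = \vzero$.

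With this in hand I would treat the expansion $\langle\vE^k, \vD^{k+1}-\vD^*\rangle = \langle\vE^k, \vD^k-\vD^*\rangle + \frac{\gamma}{2\eta}\langle\vE^k,(\vI-\vW)\vY^k\rangle + \frac{\gamma}{2\eta}\langle\vE^k,(\vI-\vW)\vE^k\rangle$. The first two summands are linear in $\vE^k$ paired against the deterministic factors $\vD^k-\vD^*$ and $(\vI-\vW)\vY^k$, so they vanish in expectation by $\EE\vE^k = \vzero$; only the quadratic term survives, giving $\frac{\gamma}{2\eta}\EE\|\vE^k\|^2_{\vI-\vW}$. Multiplying by $2\eta$ produces $\gamma\EE\|\vE^k\|^2_{\vI-\vW}$. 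The identity for $\vD^{k+1}-\vD^k$ follows from exactly the same decomposition with one fewer cross term.

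I do not anticipate a substantive obstacle: the content is entirely bookkeeping. The one point requiring care is the conditioning---one must verify that every factor other than $\vE^k$ is $\mathcal{G}_{k-1}$-measurable so that the linear-in-$\vE^k$ terms genuinely drop out, rather than inadvertently treating a compression-dependent quantity (such as $\vD^{k+1}$ itself) as deterministic. Writing $\vD^{k+1}$ explicitly through $\vE^k$ before taking the expectation, as in the first equality, is precisely what avoids that pitfall.
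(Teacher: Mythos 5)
Your proposal is correct: substituting the Line~6 update $\vD^{k+1}=\vD^k+\frac{\gamma}{2\eta}(\vI-\vW)(\vY^k+\vE^k)$ and then using $\EE[\vE^k\mid\mathcal{G}_{k-1}]=\vzero$ (Assumption~\ref{ass:compression}) to kill the terms linear in $\vE^k$ is exactly the bookkeeping the paper has in mind, which is why it labels the proof ``straightforward and omitted.'' Your explicit care about the conditioning---writing $\vD^{k+1}$ through $\vE^k$ before taking expectation, so that only $\mathcal{G}_{k-1}$-measurable factors pair with $\vE^k$---is the one point worth stating, and you state it correctly.
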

\begin{proof}
The proof is straightforward and omitted here.
\end{proof}

%%%%%%%%%%%% Lemma 1 %%%%%%%%%
\begin{proof}[Proof of Lemma~\ref{thm:equality}] From Alg.~\ref{algocomplete}, we have
\begin{align}
 & 2\eta \langle \vX^k-\vX^*, \nabla \vF(\vX^k;\xi^k)-\nabla \vF(\vX^*) \rangle \nonumber\\
=& 2 \langle \vX^k-\vX^*, \eta \nabla \vF(\vX^k;\xi^k)-\eta\nabla \vF(\vX^*) \rangle \nonumber\\
=& 2 \langle \vX^k-\vX^*, \vX^k - \vX^{k+1} - \eta (\vD^{k+1} - \vD^*) \rangle \qquad {(\text{from} \mbox{ Line 7})}\nonumber\\
=& 2 \langle \vX^k-\vX^*, \vX^k - \vX^{k+1} \rangle - 2\eta \langle \vX^k-\vX^*, \vD^{k+1} - \vD^*\rangle \nonumber\\
=& 2 \langle \vX^k-\vX^*, \vX^k - \vX^{k+1}\rangle - 2\eta \langle \vX^k-\vX^{k+1}, \vD^{k+1} - \vD^*\rangle - 2\eta \langle \vX^{k+1}-\vX^*, \vD^{k+1} - \vD^*\rangle \nonumber\\
=& 2 \langle \vX^k-\vX^* - \eta (\vD^{k+1} - \vD^*) , \vX^k - \vX^{k+1} \rangle - 2\eta  \langle \vX^{k+1}-\vX^*, \vD^{k+1} - \vD^* \rangle \nonumber\\
=& 2 \langle \vX^{k+1}-\vX^* + \eta (\nabla \vF(\vX^{k};\xi^k) - \nabla \vF(\vX^*)), \vX^k - \vX^{k+1} \rangle - 2\eta \langle \vX^{k+1}-\vX^*, \vD^{k+1}-\vD^*\rangle \qquad {(\text{from} \mbox{ Line 7})}\nonumber\\
=& 2 \langle \vX^{k+1}-\vX^*, \vX^k - \vX^{k+1} \rangle + 2\eta \langle \nabla \vF(\vX^{k};\xi^k) - \nabla \vF(\vX^*), \vX^k - \vX^{k+1} \rangle \nonumber \\ 
&- 2\eta \langle \vX^{k+1}-\vX^*, \vD^{k+1}-\vD^* \rangle.  \label{lemma2_proof_1}
\end{align}
Then we consider the terms on the right hand side of~\eqref{lemma2_proof_1} separately.
Using $2\langle \vA-\vB, \vB-\vC \rangle = \|\vA-\vC\|^2 - \|\vB-\vC\|^2 - \|\vA-\vB\|^2$, we have
\begin{align}
   2 \langle \vX^{k+1}-\vX^*, \vX^k - \vX^{k+1} \rangle
= & 2 \langle \vX^*-\vX^{k+1}, \vX^{k+1} - \vX^k \rangle \nonumber  \\
= & \|\vX^{k} - \vX^* \|^2 - \|\vX^{k+1} - \vX^k \|^2 - \|\vX^{k+1} - \vX^* \|^2. \label{lemma2_proof_2a}
\end{align}
Using $2 \langle \vA, \vB \rangle = \|\vA\|^2 + \|\vB\|^2 - \|\vA-\vB\|^2$, we have
\begin{align}
 & 2\eta \langle \nabla \vF(\vX^{k};\xi^k) - \nabla \vF(\vX^*), \vX^k - \vX^{k+1} \rangle \nonumber\\
=& \eta^2 \|\nabla \vF(\vX^{k};\xi^k) - \nabla \vF(\vX^*)\|^2 + \|\vX^k - \vX^{k+1}\|^2 - \|\vX^k - \vX^{k+1} - \eta (\nabla \vF(\vX^{k};\xi^k) - \nabla \vF(\vX^*))\|^2 \nonumber\\
=& \eta^2 \|\nabla \vF(\vX^{k};\xi^k) - \nabla \vF(\vX^*)\|^2 + \|\vX^k - \vX^{k+1}\|^2 - \eta^2 \|\vD^{k+1} - \vD^*\|^2.  \qquad {(\text{from} \mbox{ Line 7})}\label{lemma2_proof_2b}
\end{align}
Combining~\eqref{lemma2_proof_1},~\eqref{lemma2_proof_2a},~\eqref{lemma2_proof_2b}, and~\eqref{lemma5_a}, we obtain
\begin{align}
 & 2\eta \langle \vX^k-\vX^*, \nabla \vF(\vX^k;\xi^k)-\nabla \vF(\vX^*) \rangle \nonumber\\
=& \underbrace{\|\vX^{k} - \vX^* \|^2 - \|\vX^{k+1} - \vX^k \|^2 - \|\vX^{k+1} - {\vX^*} \|^2}_{ 2\langle \vX^{k+1}-\vX^*, \vX^k - \vX^{k+1}\rangle} \nonumber\\
&+ \underbrace{\eta^2 \|\nabla \vF(\vX^{k};\xi^k) - \nabla \vF(\vX^*)\|^2 + \|\vX^k - \vX^{k+1}\|^2 - \eta^2 \|\vD^{k+1} - \vD^*\|^2}_{2\eta \langle \nabla \vF(\vX^{k};\xi^k) - \nabla \vF(\vX^*), \vX^k - \vX^{k+1} \rangle} \nonumber\\
&-\underbrace{\Big( \frac{2\eta^2}{\gamma} \langle \vD^{k+1}-\vD^k, \vD^{k+1} - \vD^* \rangle_{\vM} - 2\eta \langle \vE^k,\vD^{k+1} - \vD^* \rangle \Big) }_{2\eta \langle \vX^{k+1}-\vX^*, \vD^{k+1} - \vD^* \rangle} \nonumber\\
=& \|\vX^{k} - \vX^* \|^2 - \|\vX^{k+1} - \vX^k \|^2 - \|\vX^{k+1} - \vX^* \|^2 \nonumber\\
&+\eta^2 \|\nabla \vF(\vX^{k};\xi^k) - \nabla \vF(\vX^*)\|^2 + \|\vX^k - \vX^{k+1}\|^2 - \eta^2 \|\vD^{k+1} - \vD^*\|^2 \nonumber\\
&+ \frac{\eta^2}{\gamma} \underbrace{\Big( \|\vD^{k} - \vD^* \|_\vM^2 - \|\vD^{k+1} - \vD^* \|_\vM^2 - \|\vD^{k+1} - \vD^k \|_\vM^2 \Big)}_{-2\langle \vD^{k+1}-\vD^k, \vD^{k+1} - \vD^* \rangle_{\vM}} 
+ 2\eta \langle \vE^k, \vD^{k+1} - \vD^* \rangle, \nonumber
\end{align}
where the last equality holds because 
\begin{align*}
  2 \langle \vD^k-\vD^{k+1}, \vD^{k+1} - \vD^* \rangle_{\vM}  
=& \|\vD^{k} - \vD^* \|_\vM^2 - \|\vD^{k+1} - \vD^* \|_\vM^2 - \|\vD^{k+1} - \vD^k \|_\vM^2.     
\end{align*}
Thus, we reformulate it as
\begin{align*}
& \|\vX^{k+1}-\vX^*\|^2 + \frac{\eta^2}{\gamma} \|\vD^{k+1}-\vD^*\|^2_\vM \nonumber \\
=& \|\vX^{k}-\vX^*\|^2 + \frac{\eta^2}{\gamma} \|\vD^{k}-\vD^*\|^2_\vM - \frac{\eta^2}{\gamma} \|\vD^{k+1}-\vD^k\|^2_\vM - \eta^2 \|\vD^{k+1}-\vD^*\|^2  \nonumber \\
&- 2\eta \langle \vX^k-\vX^*, \nabla \vF(\vX^k;\xi^k)-\nabla \vF(\vX^*) \rangle + \eta^2 \|\nabla \vF(\vX^{k};\xi^k) - \nabla \vF(\vX^*)\|^2 + 2\eta \langle \vE^k, \vD^{k+1} - \vD^* \rangle,
\end{align*}
which completes the proof.
\end{proof}

%%%%%%%%%%%% Lemma 2 %%%%%%%%%
\subsection{Proof of Lemma~\ref{thm:inequality}}
\label{apdx:proof_lemma_inequality}
\begin{proof}[Proof of Lemma~\ref{thm:inequality}] From Alg.~\ref{algocomplete}, we take the expectation conditioned on $k$th compression and obtain
\begin{align}
&\mathbb{E} \|\vH^{k+1}-\vX^*\|^2  \nonumber \\
=& \mathbb{E} \|(1-\alpha) (\vH^{k}-\vX^*) + \alpha (\vY^k-\vX^*) + \alpha \vE^k\|^2  \qquad{(\text{from} \mbox{ Line 13})}\nonumber\\
=& \|(1-\alpha) (\vH^{k}-\vX^*) + \alpha (\vY^k-\vX^*)\|^2 + \alpha^2 \mathbb{E} \|\vE^k\|^2 \nonumber\\
=& (1-\alpha) \|\vH^{k}-\vX^*\|^2 + \alpha \|\vY^k-\vX^*\|^2 - \alpha(1-\alpha) \|\vH^k-\vY^k\|^2 + \alpha^2 \mathbb{E} \|\vE^k\|^2. \label{equ:h_equ1} %\nonumber\\
%=& (1-\alpha) \|\vH^{k}-\vX^*\|^2 + \alpha \|\vY^k-\vX^*\|^2 - \alpha(1-\alpha) \|\vY^k-\vY^k\|^2 + \alpha^2\|\vE^k\|^2
% &\leq (1-\alpha) \|h^{k}-\vX^*\|^2 + \alpha \|y^k-\vX^*\|^2 - \alpha(1-\alpha) \|u^k\|^2 + \alpha^2(1-C)\|u^k\|^2 \\
% &= (1-\alpha) \|h^{k}-\vX^*\|^2 + \alpha \|y^k-\vX^*\|^2 - \alpha(1-\alpha(2-C)) \|u^k\|^2.
\end{align}
In the second equality, we used the unbiasedness of the compression, i.e., $\mathbb{E}\vE^k =\vzero$. The last equality holds because of
$$\|(1-\alpha)\vA + \alpha\vB\|^2 = (1-\alpha)\|\vA\|^2+\alpha\|\vB\|^2 - \alpha(1-\alpha)\|\vA-\vB\|^2.$$

In addition, by taking the conditional expectation on the compression, we have 
\begin{align}
\|\vY^k-\vX^*\|^2 =& \|\vX^k-\eta \nabla \vF(\vX^k;\xi^k)-\eta \vD^k - \vX^*\|^2 \qquad{(\text{from} \mbox{ Line 4})}\nonumber \\
=&\EE\|\vX^{k+1} + \eta \vD^{k+1}- \eta \vD^k - \vX^*\|^2 \qquad{(\text{from} \mbox{ Line 7})}\nonumber\\
=&\EE\|\vX^{k+1}-\vX^*\|^2 + \eta^2 \EE\|\vD^{k+1}-\vD^k\|^2 + 2\eta \EE\langle \vX^{k+1}-\vX^*, \vD^{k+1}-\vD^k \rangle \nonumber\\
=&\EE\|\vX^{k+1}-\vX^*\|^2 + \eta^2 \EE\|\vD^{k+1}-\vD^k\|^2  \nonumber\\
 & + \frac{2\eta^2}{\gamma} \EE\|\vD^{k+1}-\vD^k\|^2_{\vM} - 2\eta \EE\langle \vE^k, \vD^{k+1} - \vD^k \rangle. \qquad{(\text{from}~\eqref{lemma5_a})} \nonumber\\
 =&\EE\|\vX^{k+1}-\vX^*\|^2 + \eta^2 \EE\|\vD^{k+1}-\vD^k\|^2  \nonumber\\
 & + \frac{2\eta^2}{\gamma} \EE\|\vD^{k+1}-\vD^k\|^2_{\vM} - \gamma\EE\| \vE^k\|^2_{\vI-\vW}. \qquad{(\text{from} \mbox{ Line 6})}  \label{equ:h_equ2}
\end{align}
% \begin{align}
% \mathbb{E} \|h^{k+1}-\vX^*\|^2 &\leq (1-\alpha) \|h^{k}-\vX^*\|^2 + \alpha \|\vX^{k+1}-\vX^*\|^2 + \alpha \eta^2 \|d^{k+1}-d^k\|^2 \\
% & + 2\alpha \eta^2 \| d^{k+1}-d^k \|^2_{M} - \alpha \mathbb{E} \langle e^k, (I-W) e^k \rangle - \alpha(1-\alpha(2-C)) \|u^k\|^2.
% \end{align}
Combing the above two equations~\eqref{equ:h_equ1} and~\eqref{equ:h_equ2} together, we have 
% A tighter one:
\begin{align}
&\mathbb{E} \|\vH^{k+1}-\vX^*\|^2 \nonumber \\
\leq& (1-\alpha) \|\vH^{k}-\vX^*\|^2 + \alpha \mathbb{E}\|\vX^{k+1}-\vX^*\|^2 + \alpha \eta^2 \EE\|\vD^{k+1}-\vD^k\|^2 +  \frac{2\alpha \eta^2}{\gamma}\EE \| \vD^{k+1}-\vD^k \|^2_{\vM} \nonumber\\
& - \alpha\gamma \mathbb{E} \| \vE^k\|^2_{\vI-\vW} + \alpha^2 \mathbb{E} \|\vE^k\|^2 - \alpha(1-\alpha) \|\vY^k-\vH^k\|^2,
\end{align}
which completes the proof.
\iffalse{\color{red}Where does the inequality come from?}\yl{we omit $\alpha\|\vY^k-\vX^*\|^2-\|\vY^k-\vX^*\|^2$ here and I clarify all conditional expectation where I set the expectation conditioned on random variable $\vE^k$ in $k$th iteration. Actually the filtration follows $\mathcal{F}_0\subset\mathcal{F}_1\subset\mathcal{F}_2\subset\cdots$ and

$$\underbrace{\underbrace{\underbrace{\vX^1,\vD^1,\vH^1,\vY^1}_{\mathcal{F}_0}\xrightarrow[]{\vE^1}\vH^2,\vD^2,\vX^2,\vY^2}_{\mathcal{F}_1}\xrightarrow[]{\vE^2}\vH^3,\vD^3,\vX^3,\vY^3}_{\mathcal{F}_2}\xrightarrow[]{\vE^3}\vH^4,\vD^4,\vX^4\cdots$$
and iteration follows 
$$\vX^1,\vD^1,\vH^1,\overbrace{\vY^1\xrightarrow[]{\vE^1}\vH^2,\vD^2,\vX^2}^{1\text{st}\ \  \text{Iteration}},\overbrace{\vY^2 \xrightarrow[]{\vE^2}\vH^3,\vD^3,\vX^3}^{2\text{nd}\ \ \text{Iteration}},\overbrace{\vY^3 \xrightarrow[]{\vE^3}\vH^4,\vD^4,\vX^4}^{3\text{rd}\ \ \text{Iteration}}\cdots$$

}\fi
\end{proof}

%%%%%%%%%%%% main theorem %%%%%%%%%
\subsection{Proof of Theorem~\ref{thm:linear_convergence}}
\label{apdx:proof_linear_convergence}

\begin{proof}[Proof of Theorem~\ref{thm:linear_convergence}]
%{\color{blue}Proof: to show $\vX^k \rightarrow \vX^*, \vD^k  \rightarrow \vD^* = -\nabla \vF(\vX^*), \vH^k \rightarrow \vX^*, \vY^k  \rightarrow \vX^*$.}

Combining Lemmas~\ref{thm:equality},~\ref{thm:inequality}, and ~\ref{thm:expect_error}, we have the expectation conditioned on the compression satisfying
\begin{align}
& \mathbb{E}\|\vX^{k+1}-\vX^*\|^2 + \frac{\eta^2}{\gamma} \EE\|\vD^{k+1}-\vD^*\|^2_{\vM} + a_1 \mathbb{E}\|\vH^{k+1}-\vX^*\|^2 \nonumber\\
\leq& \|\vX^{k}-\vX^*\|^2 + \frac{\eta^2}{\gamma} \|\vD^{k}-\vD^*\|^2_\vM - \frac{\eta^2}{\gamma} \mathbb{E}\|\vD^{k+1}-\vD^k\|^2_\vM - \eta^2 \EE\|\vD^{k+1}-\vD^*\|^2 \nonumber\\
&- 2\eta \langle \vX^k-\vX^*, \nabla \vF(\vX^k;\xi^k)-\nabla \vF(\vX^*) \rangle + \eta^2 \|\nabla \vF(\vX^{k};\xi^k) - \nabla \vF(\vX^*)\|^2 + \gamma \mathbb{E} \|\vE^k\|^2_{\vI-\vW} \nonumber\\
&+  a_1 (1-\alpha) \|\vH^{k}-\vX^*\|^2 + a_1 \alpha\mathbb{E} \|\vX^{k+1}-\vX^*\|^2 + a_1 \alpha \eta^2 \mathbb{E} \|\vD^{k+1}-\vD^k\|^2 \nonumber\\ 
&+ \frac{2a_1 \alpha \eta^2}{\gamma} \mathbb{E}\| \vD^{k+1}-\vD^k \|^2_{\vM}  + a_1 \alpha^2 \mathbb{E} \|\vE^k\|^2 - a_1 \alpha \gamma \mathbb{E} \| \vE^k\|^2_{\vI-\vW} - a_1 \alpha(1-\alpha) \|\vY^k-\vH^k\|^2\nonumber\\
=& \underbrace{\|\vX^{k}-\vX^*\|^2  - 2\eta \langle \vX^k-\vX^*, \nabla \vF(\vX^k;\xi^k)-\nabla \vF(\vX^*) \rangle + \eta^2 \|\nabla \vF(\vX^{k};\xi^k) - \nabla \vF(\vX^*)\|^2}_{\cA}    \nonumber\\
& + a_1 \alpha\mathbb{E} \|\vX^{k+1}-\vX^*\|^2+ \frac{\eta^2}{\gamma} \|\vD^{k}-\vD^*\|^2_\vM  - \eta^2 \EE\|\vD^{k+1}-\vD^*\|^2\nonumber\\
&+  a_1 (1-\alpha) \|\vH^{k}-\vX^*\|^2  \underbrace{- (1-2a_1\alpha)\frac{\eta^2}{\gamma} \mathbb{E}\|\vD^{k+1}-\vD^k\|^2_\vM+ a_1 \alpha \eta^2 \mathbb{E} \|\vD^{k+1}-\vD^k\|^2}_{\cB} \nonumber\\ 
&  + \underbrace{a_1 \alpha^2 \mathbb{E} \|\vE^k\|^2 +(1- a_1 \alpha) \gamma \mathbb{E} \| \vE^k\|^2_{\vI-\vW} - a_1 \alpha(1-\alpha) \|\vY^k-\vH^k\|^2}_{\cC}, \label{thm1_pf_1}
\end{align}
where $a_1$ is a non-negative number to be determined. 
Then we deal with the three terms on the right hand side separately. We want the terms $\cB$ and $\cC$ to be nonpositive. 
First, we consider $\cB$. Note that $\vD^k\in\Range{(\vI-\vW)}$. If we want $\cB\leq 0$, then, we need $1-2a_1\alpha> 0$, i.e., $a_1\alpha< 1/2$. Therefore we have 
\begin{align*}
\cB = & - (1-2a_1\alpha)\frac{\eta^2}{\gamma} \mathbb{E}\|\vD^{k+1}-\vD^k\|^2_\vM+ a_1 \alpha \eta^2 \mathbb{E} \|\vD^{k+1}-\vD^k\|^2\\
\leq & \left(a_1\alpha-{(1-2a_1\alpha)\lambda_{n-1}(\vM)\over\gamma}\right) \eta^2\mathbb{E} \|\vD^{k+1}-\vD^k\|^2,
\end{align*}
where $\lambda_{n-1}(\vM)>0$ is the second smallest eigenvalue of $\vM$. 
It means that we also need 
$$a_1\alpha + \frac{(2a_1\alpha -1)\lambda_{n-1}(\vM)}{\gamma} \leq 0,$$
which is equivalent to 
\begin{align}
    a_1\alpha \leq {\lambda_{n-1}(\vM)\over \gamma + 2\lambda_{n-1}(\vM)}<1/2.
\end{align}
%{\color{red}what is $\beta$? $\lambda_{\max}(\vI-\vW)$?}\yl{Yes, the def of $\beta$ has been added in Thm1}
Then we look at $\cC$. We have 
\begin{align}
 \cC =  & a_1 \alpha^2 \mathbb{E} \|\vE^k\|^2 +(1 - a_1 \alpha) \gamma \mathbb{E} \| \vE^k\|^2_{\vI-\vW} - a_1 \alpha(1-\alpha) \|\vY^k-\vH^k\|^2 \nonumber\\
   \leq & ((1-a_1\alpha) \beta\gamma + a_1 \alpha^2 )\mathbb{E} \|\vE^k\|^2 - a_1 \alpha(1-\alpha) \|\vY^k-\vH^k \|^2 \nonumber\\
   \leq & C((1-a_1\alpha) \beta \gamma + a_1 \alpha^2 ) \|\vY^k-\vH^k\|^2 - a_1 \alpha(1-\alpha) \|\vY^k-\vH^k\|^2 \nonumber
\end{align}
Because we have $1-a_1\alpha > 1/2$, so we need 
\begin{align}
C((1-a_1\alpha) \beta\gamma + a_1 \alpha^2 ) - a_1 \alpha(1-\alpha)= (1+C)a_1 \alpha^2 - a_1(C\beta\gamma+1)\alpha + C\beta\gamma \leq 0.
\end{align}
That is
\begin{align}
    &\alpha \geq \frac{a_1(C\beta\gamma+1) - \sqrt{a_1^2(C\beta\gamma+1)^2 - 4(1+C)Ca_1\beta\gamma}}{2(1+C)a_1} \eqqcolon \alpha_0,\\
    &\alpha \leq \frac{a_1(C\beta\gamma+1) + \sqrt{a_1^2(C\beta\gamma+1)^2 - 4(1+C)Ca_1\beta\gamma}}{2(1+C)a_1} \eqqcolon \alpha_1.
\end{align}

% With smoothness and restricted strongly convex properties from Assumptions~\ref{ass:smooth}, we have $\|\nabla \vF(\vX^k) - \nabla \vF(\vX^*) \|^2 \geq \mu\|\vX^k-\vX^*\|^2$ and $L \langle \vX^k-\vX^*, \nabla \vF(\vX^k)-\nabla \vF(\vX^*) \rangle\geq\|\nabla \vF(\vX^k)-\nabla\vF(\vX^*)\|^2$.
% Therefore,
% \begin{align}
% &- 2\eta \langle \vX^k-\vX^*, \nabla \vF(\vX^k)-\nabla \vF(\vX^*) \rangle \nonumber\\
% \leq &    -\eta^2 \|\nabla \vF(\vX^k) - \nabla \vF(\vX^*) \|^2 - \mu(2\eta-L\eta^2) \|\vX^k-\vX^*\|^2.\label{eq:innerp}
% \end{align}

% {\color{red} If the stepsie $\eta\leq 2/(\mu+L)$, we can use 
% $$ \langle \vX^k-\vX^*, \nabla \vF(\vX^k)-\nabla \vF(\vX^*) \rangle\geq{\mu L\over \mu+L}\|\vX^k-\vx^*\|^2+ {1\over \mu+L}\|\nabla \vF(\vX^k)-\nabla\vF(\vX^*)\|^2$$
% Then we will have
% \begin{align}
% &- 2\eta \langle \vX^k-\vX^*, \nabla \vF(\vX^k)-\nabla \vF(\vX^*) \rangle \nonumber\\
% \leq &    -\eta^2 \|\nabla \vF(\vX^k) - \nabla \vF(\vX^*) \|^2 - \mu(2\eta-\mu\eta^2) \|\vX^k-\vX^*\|^2.
% \end{align}
% I will add this tomorrow. --Yan
% }
Next, we look at $\cA$. Firstly, by the bounded variance assumption, we have the expectation conditioned on the gradient sampling in $k$th iteration satisfying
\begin{equation*}
    \begin{aligned}
    &\EE\|\vX^{k}-\vX^*\|^2  - 2\eta \EE\langle \vX^k-\vX^*, \nabla \vF(\vX^k;\xi^k)-\nabla \vF(\vX^*) \rangle + \eta^2 \EE\|\nabla \vF(\vX^{k};\xi^k) - \nabla \vF(\vX^*)\|^2\\
    \leq&\|\vX^{k}-\vX^*\|^2  - 2\eta \langle \vX^k-\vX^*, \nabla \vF(\vX^k)-\nabla \vF(\vX^*) \rangle + \eta^2 \|\nabla \vF(\vX^{k}) - \nabla \vF(\vX^*)\|^2+n\eta^2\sigma^2
    \end{aligned}
\end{equation*}

Then with the smoothness and strong convexity from Assumptions~\ref{ass:smooth}, we have the co-coercivity of $\nabla g_i(\vx)$ with $g_i(\vx):=f_i(\vx)-\frac{u}{2}\|\vx\|_2^2$, which gives
\begin{align*} 
\langle \vX^k-\vX^*, \nabla \vF(\vX^k)-\nabla \vF(\vX^*) \rangle\geq{\mu L\over \mu+L}\|\vX^k-\vX^*\|^2+ {1\over \mu+L}\|\nabla \vF(\vX^k)-\nabla\vF(\vX^*)\|^2.
\end{align*}
When $\eta \leq 2/(\mu+L)$, we have 
\begin{align*} 
 & \langle \vX^k-\vX^*, \nabla \vF(\vX^k)-\nabla \vF(\vX^*) \rangle  \\
=&\left(1-{\eta(\mu+L)\over2}\right)\langle \vX^k-\vX^*, \nabla \vF(\vX^k)-\nabla \vF(\vX^*) \rangle + {\eta(\mu+L)\over2}\langle \vX^k-\vX^*, \nabla \vF(\vX^k)-\nabla \vF(\vX^*) \rangle \\
\geq &\left(\mu-{\eta\mu(\mu+L)\over2}+{\eta\mu L\over 2}\right)\|\vX^k-\vX^*\|^2+ {\eta\over 2}\|\nabla \vF(\vX^k)-\nabla\vF(\vX^*)\|^2\\
=& \mu\left(1-{\eta\mu\over2}\right)\|\vX^k-\vX^*\|^2+ {\eta\over 2}\|\nabla \vF(\vX^k)-\nabla\vF(\vX^*)\|^2.
\end{align*}
Therefore, we obtain
\begin{align}
 &- 2\eta \langle \vX^k-\vX^*, \nabla \vF(\vX^k)-\nabla \vF(\vX^*) \rangle \nonumber\\
 \leq &    -\eta^2 \|\nabla \vF(\vX^k) - \nabla \vF(\vX^*) \|^2 - \mu(2\eta-\mu\eta^2) \|\vX^k-\vX^*\|^2.
\end{align}

Conditioned on the $k$the iteration, (i.e., conditioned on the gradient sampling in $k$th iteration), the inequality~\eqref{thm1_pf_1} becomes 
\begin{align}
\label{equ:decay_inequality}
& \mathbb{E}\|\vX^{k+1}-\vX^*\|^2 + \frac{\eta^2}{\gamma} \EE\|\vD^{k+1}-\vD^*\|^2_{\vM} + a_1 \mathbb{E}\|\vH^{k+1}-\vX^*\|^2 \nonumber\\
%\leq& \left(1- \mu(2\eta-\mu\eta^2)\right)\|\vX^{k}-\vX^*\|^2 
%+ \left(\eta^2 - {2\eta\over \mu+L}\right)\|\nabla \vF(\vX^k)-\nabla\vF(\vX^*)\|^2 
%+ a_1 \alpha\mathbb{E} \|\vX^{k+1}-\vX^*\|^2    \nonumber\\
%& + \frac{\eta^2}{\gamma} \|\vD^{k}-\vD^*\|^2_\vM  - \eta^2 \EE\|\vD^{k+1}-\vD^*\|^2 +  a_1 (1-\alpha) \|\vH^{k}-\vX^*\|^2 \nonumber \\ 
\leq& \left(1- \mu(2\eta-\mu\eta^2)\right)\|\vX^{k}-\vX^*\|^2 
+ a_1 \alpha\mathbb{E} \|\vX^{k+1}-\vX^*\|^2    \nonumber\\
& + \frac{\eta^2}{\gamma} \|\vD^{k}-\vD^*\|^2_\vM  - \eta^2 \EE\|\vD^{k+1}-\vD^*\|^2 +  a_1 (1-\alpha) \|\vH^{k}-\vX^*\|^2+n\eta^2\sigma^2,
\end{align}
if the step size satisfies $\eta \leq {2\over \mu+L}$. Rewriting~\eqref{equ:decay_inequality}, we have
\begin{align}
& (1-a_1\alpha)\mathbb{E}\|\vX^{k+1}-\vX^*\|^2 + \frac{\eta^2}{\gamma} \EE\|\vD^{k+1}-\vD^*\|^2_{\vM} + \eta^2 \EE\|\vD^{k+1}-\vD^*\|^2 + a_1 \mathbb{E}\|\vH^{k+1}-\vX^*\|^2 \nonumber\\
\leq& \left(1- \mu(2\eta-\mu\eta^2)\right)\|\vX^{k}-\vX^*\|^2 
+ \frac{\eta^2}{\gamma} \|\vD^{k}-\vD^*\|^2_\vM +  a_1 (1-\alpha) \|\vH^{k}-\vX^*\|^2+n\eta^2\sigma^2,
\end{align}
and thus
\begin{align}
& (1-a_1\alpha)\mathbb{E}\|\vX^{k+1}-\vX^*\|^2 + {\eta^2\over\gamma} \EE\|\vD^{k+1}-\vD^*\|^2_{\vM+\gamma\vI} + a_1 \mathbb{E}\|\vH^{k+1}-\vX^*\|^2 \nonumber\\
\leq& \left(1- \mu(2\eta-\mu\eta^2)\right)\|\vX^{k}-\vX^*\|^2 
+ \frac{\eta^2}{\gamma} \|\vD^{k}-\vD^*\|^2_\vM +  a_1(1-\alpha)\|\vH^{k}-\vX^*\|^2+n\eta^2\sigma^2.
\end{align}

% So the inequality~\eqref{thm1_pf_1} becomes 
% \begin{align}
% & \mathbb{E}\|\vX^{k+1}-\vX^*\|^2 + \frac{\eta^2}{\gamma} \EE\|\vD^{k+1}-\vD^*\|^2_{\vM} + a_1 \mathbb{E}\|\vH^{k+1}-\vX^*\|^2 \nonumber\\
% \leq& (1-\mu(2\eta-L\eta^2))\|\vX^{k}-\vX^*\|^2 + a_1 \alpha\mathbb{E} \|\vX^{k+1}-\vX^*\|^2    \nonumber\\
% & + \frac{\eta^2}{\gamma} \|\vD^{k}-\vD^*\|^2_\vM  - \eta^2 \EE\|\vD^{k+1}-\vD^*\|^2 +  a_1 (1-\alpha) \|\vH^{k}-\vX^*\|^2.
% \end{align}

With the definition of $\cL^k$ in~\eqref{eqn_Lk}, we have 
\begin{align}\label{ineq:L}
    \EE\mathcal{L}^{k+1} \leq \rho \mathcal{L}^k + n\eta^2\sigma^2,
\end{align}
with 
$$\rho =\max\left\{ \frac{1- \mu(2\eta-\mu\eta^2)}{1-a_1\alpha}, {\lambda_{\max}(\vM)\over\gamma+\lambda_{\max}(\vM)},1-\alpha\right\}.$$
where
$$\lambda_{\max}(\vM) = 2\lambda_{\max}((\vI-\vW)^{\dagger}) - \gamma.$$

Recall all the conditions on the parameters $a_1,~\alpha$, and $\gamma$ to make sure that $\rho<1$:
\begin{align}
    a_1\alpha &\leq \frac{\lambda_{n-1}(\vM)}{\gamma + 2\lambda_{n-1}(\vM)}, \label{equ:lambdaM} \\
    a_1\alpha &\leq  \mu(2\eta-\mu\eta^2), \label{equ:stepsize}\\
    \alpha &\geq \frac{a_1(C\beta\gamma+1) - \sqrt{a_1^2(C\beta\gamma+1)^2 - 4(1+C)C a_1\beta\gamma}}{2(1+C)a_1} \eqqcolon \alpha_0, \label{equ:alpha0} \\
    \alpha &\leq \frac{a_1(C\beta\gamma+1) + \sqrt{a_1^2(C\beta\gamma+1)^2 - 4(1+C)C a_1\beta\gamma}}{2(1+C)a_1} \eqqcolon \alpha_1. \label{equ:alpha1}
\end{align}
In the following, we show that there exist parameters that satisfy these conditions.

\iffalse
Therefore, we obtain
\begin{align}
& \EE\|\vX^{k+1}-\vX^*\|^2 + \frac{\eta^2}{\gamma} \EE\|\vD^{k+1}-\vD^*\|^2_\vM + a_1 \EE\|\vH^{k+1}-\vX^*\|^2 \nonumber\\
\leq& \|\vX^{k}-\vX^*\|^2 + \frac{\eta^2}{\gamma} \|\vD^{k}-\vD^*\|^2_\vM - \frac{\eta^2}{\gamma} \EE\|\vD^{k+1}-\vD^k\|^2_\vM - \eta^2 \EE\|\vD^{k+1}-\vD^*\|^2 \nonumber\\
&- 2\eta \langle \vX^k-\vX^*, \nabla \vF(\vX^k)-\nabla \vF(\vX^*) \rangle + \eta^2 \|\nabla \vF(\vX^{k}) - \nabla \vF(\vX^*)\|^2 \nonumber\\
&+  a_1 (1-\alpha) \|\vH^{k}-\vX^*\|^2 + a_1 \alpha \EE\|\vX^{k+1}-\vX^*\|^2 + a_1 \alpha \eta^2 \EE\|\vD^{k+1}-\vD^k\|^2 + \frac{2a_1 \alpha \eta^2}{\gamma} \EE \| \vD^{k+1}-\vD^k \|^2_{\vM}
\end{align}

With smoothness and restricted strongly convex properties from Assumptions~\ref{ass:smooth}, we have $\|\nabla \vF(\vX^k) - \nabla \vF(\vX^*) \|^2 \geq \mu\|\vX^k-\vX^*\|^2$ and $L \langle \vX^k-\vX^*, \nabla \vF(\vX^k)-\nabla \vF(\vX^*) \rangle\geq\|\nabla \vF(\vX^k)-\nabla\vF(\vX^*)\|^2$.

Therefore,
\begin{align}\label{eq:innerp}
    - 2\eta \langle \vX^k-\vX^*, \nabla \vF(\vX^k)-\nabla \vF(\vX^*) \rangle \leq -\eta^2 \|\nabla \vF(\vX^k) - \nabla \vF(\vX^*) \|^2 - \mu(2\eta-L\eta^2) \|\vX^k-\vX^*\|^2.
\end{align}

Combine~\eqref{eq:innerp}, we have
\begin{align}
& (1-a_1\alpha)\EE\|\vX^{k+1}-\vX^*\|^2 + \frac{\eta^2}{\gamma} \EE\|\vD^{k+1}-\vD^*\|^2_{\vM+\gamma\vI} + a_1 \EE\|\vH^{k+1}-\vX^*\|^2 \nonumber\\
=& (1-\mu(2\eta-L\eta^2))\|\vX^{k}-\vX^*\|^2 + \frac{\eta^2}{\gamma} \|\vD^{k}-\vD^*\|^2_\vM + a_1 (1-\alpha) \|\vH^{k}-\vX^*\|^2 \nonumber\\
& + a_1 \alpha \eta^2 \EE \|\vD^{k+1}-\vD^k\|^2 + \frac{(2a_1 \alpha-1)}{\gamma}\eta^2 \EE \| \vD^{k+1}-\vD^k \|^2_{\vM} \nonumber\\
\leq & (1-\mu(2\eta-L\eta^2))\|\vX^{k}-\vX^*\|^2 + \frac{\eta^2}{\gamma} \|\vD^{k}-\vD^*\|^2_\vM + a_1 (1-\alpha) \|\vH^{k}-\vX^*\|^2 \nonumber\\
%(1)\leq & (1-\mu(2\eta-L\eta^2))\|\vX^{k}-\vX^*\|^2 + \frac{\lambda_{\max}(\vM)}{1+\lambda_{\max}(\vM)}\frac{\eta^2}{\gamma} \|\vD^{k}-\vD^*\|^2_{\vM+\vI} + a_1 (1-\alpha) \|\vH^{k}-\vX^*\|^2 \\
%&\text{Or a tighter one} \\
\leq & (1-\mu(2\eta-L\eta^2))\|\vX^{k}-\vX^*\|^2 + \Big(1-\frac{\gamma}{\gamma+\lambda_{\max}(\vM)}\Big)\frac{\eta^2}{\gamma} \|\vD^{k}-\vD^*\|^2_{\vM+\gamma\vI} \nonumber\\
&+ a_1 (1-\alpha) \|\vH^{k}-\vX^*\|^2 
\end{align}
if $2a_1\alpha -1 \leq 0 ~~(i.e., a_1\alpha \leq \frac{1}{2})$ and $a_1\alpha + \frac{(2a_1\alpha -1)\lambda_{n-1}(\vM)}{\gamma} \leq 0$, where $\lambda_{n-1}(\vM)>0$ is the second smallest eigenvalue of $\vM$. 

Since $a_1\alpha \leq \frac{\lambda_{n-1}(\vM)}{\gamma+2\lambda_{n-1}(\vM)} \leq \frac{1}{2}$, $a_1\alpha \leq \frac{\lambda_{n-1}(\vM)}{\gamma+2\lambda_{n-1}(\vM)}$ is sufficient to guarantee above two conditions. Let $\tilde \lambda = %\lambda_{\max}[(\vM+\vI)^{-{1\over2}}\vM(\vM+\vI)^{-{1\over2}}]=
1-\frac{\gamma}{\gamma+\lambda_{\max}(\vM)}$
%\xr{TODO: prove it is less than 1}\yl{done, this bound is over $\Range(\vI-\vW)$, so we don't need to care $\lambda_{\min}(\vM)=-\gamma$. Also, $\vM+\gamma\vI$ is positive semidefinite on the whole space}.
and $\mathcal{L}^k = (1-a_1\alpha)\|\vX^{k}-\vX^*\|^2 + \frac{\eta^2}{\gamma} \|\vD^k-\vD^*\|_{\vM+\gamma\vI} + a_1\|\vH^k-\vX^*\|^2$  for simplicity , we have
\begin{align}\label{ineq:L}
    \EE\mathcal{L}^{k+1} \leq \rho \mathcal{L}^k
\end{align}
where
% \begin{align}
%     \rho = \max \{1-\frac{\mu(2\eta-L\eta^2)-a_1\alpha}{1-a_1\alpha}, 1-\frac{1}{1+\lambda_{\max}(M)}, 1-\alpha \}
% \end{align}
\begin{align}
    \rho = \max \left\{1-\frac{\mu(2\eta-L\eta^2)-a_1\alpha}{1-a_1\alpha}, \tilde \lambda, 1-\alpha \right\}
\end{align}

In summary, to ensure linear convergence $\rho <1$, we need
\begin{align}
    a_1\alpha &\leq \frac{\lambda_{n-1}(\vM)}{\gamma + 2\lambda_{n-1}(\vM)}, \label{equ:lambdaM} \\
    a_1\alpha &\leq \mu\eta(2-L\eta), \label{equ:stepsize}\\
    \alpha &\geq \frac{a_1(C\beta\gamma+1) - \sqrt{a_1^2(C\beta\gamma+1)^2 - 4(1+C)C a_1\beta\gamma}}{2(1+C)a_1} \eqqcolon \alpha_0, \label{equ:alpha0} \\
    \alpha &\leq \frac{a_1(C\beta\gamma+1) + \sqrt{a_1^2(C\beta\gamma+1)^2 - 4(1+C)C a_1\beta\gamma}}{2(1+C)a_1} \eqqcolon \alpha_1. \label{equ:alpha1}
\end{align}
\else
\fi

% Equ~\eqref{equ:stepsize} suggests that the stepsize $\eta$ will have a lower bound because to cancel the compression error, $a_1\alpha $ cannot be infinitesimal~\eqref{equ:alpha0}. We also need to ensure 
% $\alpha_0 \leq \frac{\lambda_{min}(M)}{1+2\lambda_{min}(M)}$~\eqref{equ:lambdaM}\eqref{equ:alpha0} which will have restriction on $C$. It suggests that we cannot get linear convergence with arbitrary compression precision.

% If there is no compression (C=1), then $\alpha_0=0 \leq \alpha \leq \alpha_1 =1$. If $\gamma \rightarrow 0$, then $0<\alpha<\frac{1}{2-C}$.

Since we can choose any $a_1$, we let 
\begin{align*}
a_1=\frac{4(1+C)}{C\beta\gamma+2},    
\end{align*} such that
\begin{align*} 
a_1^2(C\beta\gamma+1)^2-4(1+C)Ca_1\beta\gamma = a_1^2.
\end{align*}
Then we have
\begin{align*}
\alpha_0=&\frac{C\beta\gamma}{2(1+C)}\rightarrow{0},\qquad \ \ \mbox{ as }\gamma\rightarrow{0}, \\
\alpha_1=&\frac{C\beta\gamma +2}{2(1+C)}\rightarrow{\frac{1}{1+C}},\ \mbox{ as }\gamma\rightarrow{0}.
\end{align*}

%Note, if $C=0$, we have $a_1 =2$, which makes $\alpha_0=0$ and $ \alpha_1 =1.$\footnote{Actually, we can take any $a_1$, but choosing $a_1=2$ simplifies the formula.}  So $\alpha$ can take any value in $[0,1].$ %\yl{If $C=0$, when $\alpha=0$, we do not update the reference $\vH$; when $\alpha=1,$ we set $\vH=\vY$.}

Conditions~\eqref{equ:alpha0} and~\eqref{equ:alpha1} show
\begin{align*}
a_1\alpha\in\left[\frac{2C\beta\gamma}{C\beta\gamma+2}, 2\right]\rightarrow{[0,2]},\ \text{if}\ \ C=0 \ \text{or}\  \gamma\rightarrow{0}.
\end{align*}
Hence in order to make~\eqref{equ:lambdaM} and~\eqref{equ:stepsize} satisfied, it's sufficient to make
\beq\label{bound1}\frac{2C\beta\gamma}{C\beta\gamma+2}\leq\min\left\{\frac{\lambda_{n-1}(\vM)}{\gamma+2\lambda_{n-1}(\vM)}, \mu(2\eta-\mu\eta^2)\right\}=\min\left\{\frac{\frac{2}{\beta}-\gamma}{\frac{4}{\beta}-\gamma}, \mu(2\eta-\mu\eta^2)\right\}.\eeq
where we use $\lambda_{n-1}(\vM) = \frac{2}{\lambda_{\max}(\vI-\vW)}-\gamma = \frac{2}{\beta}-\gamma.$
%and
%\beq\label{bound2}\frac{2C\beta\gamma}{C\beta\gamma+2}\leq \mu\eta(2-L\eta),\eeq
%for fixed $\eta\in(0,\frac{2}{L}).$

When $C>0$, the condition~\eqref{bound1} is equivalent to 
%$$\gamma\leq\frac{2\lambda_{n-1}(M)}{(2+3\lambda_{n-1}(M))(1-C)\beta}=\frac{2(\frac{2}{\beta}-\gamma)}{2+3(\frac{2}{\beta}-\gamma)(1-C)\beta},$$
\begin{equation}
\gamma\leq\min\left\{\frac{(3C+1)-\sqrt{(3C+1)^2-4C}}{C\beta},\frac{2 \mu\eta(2-\mu\eta)}{[2- \mu\eta(2-\mu\eta)]C\beta}\right\}.\label{eqn:gamma_bdd}
\end{equation}

% {\color{red}It is better to separately consider $C=0$ with the case of $C>0$. Because we will have $0/0$ in the result of Theorem 1 when $C=0$. We can just briefly mention the steps in the proof of Corollary 2. }

The first term can be simplified using
$$\frac{(3C+1)-\sqrt{(3C+1)^2-4C}}{C\beta} \geq \frac{2}{(3C+1)\beta} $$
due to $\sqrt{1-x}\leq 1-\frac{x}{2}$ when $x\in(0,1).$

Therefore, for a given stepsize $\eta$, if we choose 
% \begin{align*}
% \gamma \in \left(0,\min\Big\{\frac{(3C+1)-\sqrt{(3C+1)^2-4C}}{C\beta},\frac{2 \mu\eta(2-\mu\eta)}{[2- \mu\eta(2-\mu\eta)]C\beta}, \frac{2}{\beta}\Big\}\right)
% \end{align*}
% \begin{align*}
% \gamma \in \left(0,\min\Big\{\frac{2}{(3C+1)\beta}, \frac{2 \mu\eta(2-\mu\eta)}{[2- \mu\eta(2-\mu\eta)]C\beta}, \frac{2}{\beta}\Big\}\right)
% \end{align*}
\begin{align*}
\gamma \in \left(0,\min\Big\{\frac{2}{(3C+1)\beta}, \frac{2 \mu\eta(2-\mu\eta)}{[2- \mu\eta(2-\mu\eta)]C\beta} \Big\}\right)
\end{align*}
and 
\begin{align*}\alpha \in \left[\frac{C\beta\gamma}{2(1+C)},\min\Big\{\frac{C\beta\gamma +2}{2(1+C)}, \frac{2-\beta\gamma}{4-\beta\gamma}\frac{C\beta\gamma+2}{4(1+C)},  \mu\eta(2-\mu\eta)\frac{C\beta\gamma+2}{4(1+C)}\Big\}\right],
\end{align*}
then, all conditions~\eqref{equ:lambdaM}-\eqref{equ:alpha1} hold. 

Note that $\gamma < \frac{2}{(3C+1)\beta}$ implies $\gamma<\frac{2}{\beta}$, which ensures the positive definiteness of $\vM$ over $\Span\{\vI-\vW\}$ in Lemma~\ref{thm:xde_equality}.
% {2}/{\lambda_{\max}(\vI-\vW)}.$

Note that $\eta\leq\frac{2}{\mu+L}$ ensures
%\beq \frac{2-\beta\gamma}{4-\beta\gamma}\frac{C\beta\gamma+2}{4(1+C)}\leq\frac{1}{4}\frac{C\beta\gamma +2}{2(1+C)}\eeq
%and
\beq\mu\eta(2-\mu\eta)\frac{C\beta\gamma+2}{4(1+C)}\leq \frac{C\beta\gamma +2}{2(1+C)}.\eeq
So, we can simplify the bound for $\alpha$ as 
\begin{align*}
\alpha \in \left[\frac{C\beta\gamma}{2(1+C)},\min\Big\{\frac{2-\beta\gamma}{4-\beta\gamma}\frac{C\beta\gamma+2}{4(1+C)}, \mu\eta(2-\mu\eta)\frac{C\beta\gamma+2}{4(1+C)}\Big\}\right].
\end{align*}

Lastly, taking the total expectation on both sides of~\eqref{ineq:L} and using tower property, we complete the proof for $C>0$. 

% \xr{When $C=0$, the condition~\eqref{bound1} is satisfied when $\gamma<2/\beta$ and $\eta \leq 2/(\mu+L)$. 
% So, the upper bound of $\gamma$ in~\eqref{eqn:gamma_bdd} holds if we let $\frac{(6C+2)-\sqrt{(6C+2)^2-16C}}{2C\beta}=2/\beta$, which is the limit when $C\rightarrow 0$.
% Because $\beta<2$, we can choose $\gamma=1$, and no tuning parameter is needed to control the communication.}
\end{proof}

\begin{proof}[Proof of Corollary~\ref{thm:complexity_bound}]

Let's first define $\kappa_f = \frac{L}{\mu}$ and  $\kappa_g = \frac{\lambda_{\max}(\vI-\vW)}{\lambda_{\min}^{+}(\vI-\vW)}=\lambda_{\max}(\vI-\vW)\lambda_{\max}((\vI-\vW)^{\dagger})$.

We can choose the stepsize $\eta=\frac{1}{L}$ such that the upper bound of $\gamma$ is
$$\begin{aligned}
% \gamma_{\text{upper}}=&\min\Big\{\frac{(3C+1)-\sqrt{(3C+1)^2-4C}}{C\beta},\frac{\frac{2}{\kappa_f}\left(2-\frac{1}{\kappa_f}\right) }{\left[2-\frac{1}{\kappa_f}\left(2-\frac{1}{\kappa_f}\right)\right]C\beta}, \frac{2}{\beta}\Big\} \\
\gamma_{\text{upper}}=&\min\Big\{\frac{2}{(3C+1)\beta},\frac{\frac{2}{\kappa_f}\left(2-\frac{1}{\kappa_f}\right) }{\left[2-\frac{1}{\kappa_f}\left(2-\frac{1}{\kappa_f}\right)\right]C\beta}, \frac{2}{\beta}\Big\} 
\geq \min\left\{\frac{2}{(3C+1)\beta},\frac{1}{\kappa_f C\beta}\right\},
\end{aligned}$$
due to 
% $\sqrt{1-x}\leq 1-\frac{x}{2}$ and
$\frac{x(2-x)}{2-x(2-x)}\geq\frac{x}{2-x}\geq x$ when $x\in(0,1).$
% $$\frac{2 \mu\eta(2-\mu\eta)}{[2- \mu\eta(2-\mu\eta)]C\beta} = \frac{4\kappa -1}{(2\kappa^2-2\kappa+1)C\beta} \propto O(\frac{1}{ C\beta \kappa}). $$ 

% Let $\eta=\mathcal{O}\left(\frac{1}{L}\right)$ and we pick the upper bound of $\gamma$ $$\gamma=\min\left\{\mathcal{O}\left(\frac{1}{(1+C)\beta}\right),\mathcal{O}\left(\frac{1}{\kappa_f C\beta}\right)\right\}.$$

Hence we can take $\gamma=\min \{\frac{1}{(3C+1)\beta}, \frac{1}{\kappa_f C\beta}\}$.

The bound of $\alpha$ is
$$\alpha\in\left[\frac{C\beta\gamma}{2(1+C)},\min\left\{\frac{2-\beta\gamma}{4-\beta\gamma}\frac{C\beta\gamma+2}{4(1+C)}, \frac{1}{\kappa_f}(2-\frac{1}{\kappa_f}) \frac{C\beta\gamma+2}{4(1+C)} \right \}\right]$$

% When $\gamma$ is chosen as $\frac{1}{\kappa_f C\beta}$, pick $$\alpha = \frac{C\beta\gamma}{2(1+C)}= \frac{1}{2(1+C)\kappa_f}.$$
% When $\gamma$ is chosen as $\frac{2}{(3C+1)\beta}$, the upper bound of $\alpha$ is
% \begin{align*}
%     \alpha_{\text{upper}} &=\min\left\{\frac{2-\beta\gamma}{4-\beta\gamma}\frac{C\beta\gamma+2}{4(1+C)}, \frac{1}{\kappa_f}(2-\frac{1}{\kappa_f}) \frac{C\beta\gamma+2}{4(1+C)} \right\}  \\
%     &=\min\left\{ \frac{3C}{(6C+1)}, \frac{1}{\kappa_f}(2-\frac{1}{\kappa_f}) \right\}  \frac{4C+1}{2(C+1)(3C+1)} \\
%     &\geq \min\left\{ \frac{3C}{(6C+1)}, \frac{1}{\kappa_f} \right\}  \frac{4C+1}{2(C+1)(3C+1)}
% \end{align*}
% We pick $\alpha=\min\left\{ \frac{3C}{(6C+1)}, \frac{1}{\kappa_f} \right\}  \frac{4C+1}{2(C+1)(3C+1)} =\mathcal{O}\left(\frac{1}{(1+C)\kappa_f}\right).$

When $\gamma$ is chosen as $\frac{1}{\kappa_f C\beta}$, pick 
\begin{align}
\label{equ:alpha_pick1}
    \alpha = \frac{C\beta\gamma}{2(1+C)}= \frac{1}{2(1+C)\kappa_f}.
\end{align}

When $\frac{1}{(3C+1)\beta} \leq \frac{1}{\kappa_f C\beta}$, %we can choose
%$\gamma = \frac{1}{(3C+1)\beta} < \frac{2}{(3C+1)\beta}$, 
the upper bound of $\alpha$ is
\begin{align*}
    \alpha_{\text{upper}} &=\min\left\{\frac{2-\beta\gamma}{4-\beta\gamma}\frac{C\beta\gamma+2}{4(1+C)}, \frac{1}{\kappa_f}(2-\frac{1}{\kappa_f}) \frac{C\beta\gamma+2}{4(1+C)} \right\}  \\
    &=\min\left\{ \frac{6C+1}{12C+3}, \frac{1}{\kappa_f} (2-\frac{1}{\kappa_f} ) \right\}  \frac{7C+2}{4(C+1)(3C+1)} \\ 
    &\geq \min\left\{ \frac{6C+1}{12C+3}, \frac{1}{\kappa_f} \right\}  \frac{7C+2}{4(C+1)(3C+1)}.
\end{align*}

In this case, we pick 
\begin{align}
\label{equ:alpha_pick2}
\alpha=\min\left\{ \frac{6C+1}{12C+3}, \frac{1}{\kappa_f} \right\}  \frac{7C+2}{4(C+1)(3C+1)}.
\end{align}

Note $\alpha = \mathcal{O}\left(\frac{1}{(1+C)\kappa_f}\right)$ since $\frac{6C+1}{12C+3}$ is lower bounded by $1\over 3$. 
Hence in both cases (Eq.~\eqref{equ:alpha_pick1} and Eq.~\eqref{equ:alpha_pick2}), $\alpha=\mathcal{O}\left(\frac{1}{(1+C)\kappa_f}\right)$, and the third term of $\rho$ is upper bounded by
% $$1-\alpha=1-\mathcal{O}\left(\frac{1}{(1+C)\kappa_f}\right).$$
%$1-\mathcal{O}\left(\frac{1}{(1+C)\kappa_f}\right)$.
$$\begin{aligned}
1-\alpha\leq\max\left\{1-\frac{1}{2(1+C)\kappa_f},1-\min\left\{\frac{6C+1}{12C+3},\frac{1}{\kappa_f}\right\}\frac{7C+2}{4(1+C)(3C+1)}\right\}\\
\end{aligned}
$$

% \xr{Here since we choose $\gamma=\frac{2}{(3C+1)\beta}$, $2-\beta\gamma$ will give a $C$ in the numerator such that when $C\rightarrow 0$, we can't choose $\frac{1}{\kappa_f}$ as $\alpha$. Maybe we can try a smaller $\gamma$ such as $\gamma=\frac{1}{(3C+1)\beta}$. In this case, we have
% $$
% \alpha=\min\left\{ \frac{6C+1}{(12C+3)}, \frac{1}{\kappa_f} \right\}  \frac{7C+2}{4(C+1)(3C+1)} =\mathcal{O}\left(\frac{1}{(1+C)\kappa_f}\right).$$
% and 
% $$a_1\alpha = \min\left\{ \frac{6C+1}{(12C+3)}, \frac{1}{\kappa_f} \right\} =\mathcal{O}\left(\frac{1}{\kappa_f}\right) $$
% since $\frac{6C+1}{(12C+3)} \geq 1/3.$}

In two cases of $\gamma$, the second term of $\rho$ becomes
\begin{align*}
1-\frac{\gamma}{2\lambda_{\max}((\vI-\vW)^\dagger)}&=\max\left\{1-\frac{1}{2C\kappa_f\kappa_g}, 1-\frac{1}{(1+3C)\kappa_g} \right\} \\
%&= \max\left\{1-\mathcal{O}\left(\frac{1}{C\kappa_f\kappa_g}\right),1-\mathcal{O}\left(\frac{1}{(1+C)\kappa_g}\right)\right\}.
\end{align*}

Before analysing the first term of $\rho$, we look at $a_1\alpha$ in two cases of $\gamma$. When $\gamma=\frac{1}{\kappa_f C\beta}$, 
$$a_1\alpha = \frac{2C\beta\gamma}{C\beta\gamma+2} = \frac{2}{2\kappa_f+1} \leq \frac{1}{\kappa_f}.$$

When $\gamma=\frac{1}{(3C+1)\beta}$, 
$$a_1\alpha = \min\left\{ \frac{6C+1}{(12C+3)}, \frac{1}{\kappa_f} \right\} \leq\frac{1}{\kappa_f}.$$
% since $\frac{6C+1}{(12C+3)} \geq 1/3.$
% $$a_1\alpha = \min\{ \frac{3C}{6C+1}, \frac{1}{\kappa_f} \}.$$

% In both cases, $a_1\alpha = \mathcal{O}\left ( {\frac{1}{\kappa_f}} \right )$. 
In both cases, $a_1\alpha \leq  \frac{1}{\kappa_f}$. 
Therefore, the first term of $\rho$ becomes 
$$\frac{1-\mu\eta(2-\mu\eta)}{1-a_1\alpha} \leq \frac{1-\frac{1}{\kappa_f}(2-\frac{1}{\kappa_f})}{1-\frac{1}{\kappa_f}} = 1-\frac{1-\frac{1}{\kappa_f}}{\kappa_f-1} = 1-\frac{1}{\kappa_f}.$$
% \xr{I am not sure whether the above approximation is correct.}
% $$a_1\alpha\in\left[\frac{2C\beta\gamma}{C\beta\gamma+2},\min\left\{\frac{2-\beta\gamma}{4-\beta\gamma}, \mu\eta(2-\mu\eta)\right\}\right].$$

% Then $a_1\alpha =\mathcal{O}\left(\frac{1}{1+\kappa_f}\right)$ or $\min\left\{\mathcal{O}\left(1\right),\mathcal{O}\left(\frac{1}{\kappa_f}\right)\right\}=\mathcal{O}\left(\frac{1}{\kappa_f}\right).$

% Hence the first term of $\rho$ is $1-\mathcal{O}\left(\frac{1}{\kappa_f}\right)$ and 
To summarize, we have
$$\begin{aligned}
\rho &\leq 1- \min\left\{  \frac{1}{\kappa_f},\frac{1}{2C\kappa_f\kappa_g}, \frac{1}{(1+3C)\kappa_g},\frac{1}{2(1+C)\kappa_f},\min\left\{\frac{6C+1}{12C+3},\frac{1}{\kappa_f}\right\}\frac{7C+2}{4(1+C)(3C+1)}\right\}\\
%&\coloneqq 1-\tilde\rho
\end{aligned}
$$
and therefore
$$\rho = \max \left\{1-\mathcal{O}\Big(\frac{1}{(1+C)\kappa_f}\Big), 1-\mathcal{O}\Big(\frac{1}{(1+C)\kappa_g}\Big), 1-\mathcal{O}\Big(\frac{1}{C\kappa_f \kappa_g }\Big)\right\}.$$

%Therefore, if $k\geq\frac{1}{\log\frac{1}{1-\tilde{\rho}}}\log\frac{\mathcal{L}^0}{n\epsilon}\geq\left(\frac{1}{\tilde{\rho}}-1\right)\log\frac{\mathcal{L}^0}{n\epsilon}$, we get $\epsilon$-accuracy solution.

%Note that
%$$\frac{1}{\tilde\rho}$$

With full-gradient (i.e., $\sigma=0$), we get $\epsilon-$accuracy solution with the total number of iterations
$$k\geq\widetilde{\mathcal{O}}((1+C)(\kappa_f+\kappa_g)+C\kappa_f\kappa_g).$$

When $C=0$, i.e., there is no compression, the iteration complexity recovers that of NIDS, $\widetilde{\mathcal{O}}\left(\kappa_f+\kappa_g\right).$

When $C\leq\frac{\kappa_f+\kappa_g}{\kappa_f\kappa_g+\kappa_f+\kappa_g},$ the complexity is improved to that of NIDS, i.e., the compression doesn't harm the convergence in terms of the order of the coefficients.
\end{proof}

%%%%%%%%%%%%%%%%%%%%%%%%%%%%%%%%%%%%%%%%%%%%%%%%%%%%%%%%%%%%%%%%%%%%%%%%%%%%%%%%%%%%%%%%%%%%
%%%%%%%%%%%%%%%%%%%%%%%%%%%%%%%%%%%%%%%%%%%%%%%%%%%%%%%%%%%%%%%%%%%%%%%%%%%%%%%%%%%%%%%%%%%%

\begin{proof}[Proof of Corollary~\ref{thm:consensus}]
Note that $(\overbar{\vx}^k)^\top=\overbar{\vX}^k$ and $\vone_{n\times 1}\overbar{\vX}^*=\vX^*$, then
%\begin{equation}
\begin{align}
    \sum_{i=1}^n\EE\|\vx^k_i-\overbar{\vx}^k\|^2&=\EE\left\|\vX^k-\vone_{n\times 1}\overbar{\vX}^k\right\|^2\nonumber\\
    &=\EE\left\|\vX^k-\vX^*+\vX^*-\vone_{n\times 1}\overbar{\vX}^k\right\|^2\nonumber\\
    &=\EE\left\|\vX^k-\vX^*-\frac{\vone_{n\times 1}\vone_{n\times 1}^\top}{n}\Big(\vX^k-\vX^*\Big)\right\|\nonumber\\
    &\leq\EE\|\vX^k-\vX^*\|^2\nonumber\\
    &\leq \frac{\rho\EE\mathcal{L}^{k-1}+ n \eta^2\sigma^2(1-\rho)^{-1}}{1-a_1\alpha}\nonumber\\ 
    &\leq 2\rho^{k}\mathcal{L}^0+2\frac{ n\eta^2\sigma^2}{1-\rho}.
\end{align}
%\end{equation}
The last inequality holds because we have $a_1\alpha\leq{1}/{2}.$
\end{proof}

\begin{proof}[Proof of Corollary~\ref{thm:reduce_nids}]
\label{proof:consistent_nids}
From the proof of Theorem~\ref{thm:linear_convergence}, when $C=0,$ we can set  $\gamma=1$, $\alpha=1$, and $a_1=0$. Plug those values into $\rho$, and we obtain the convergence rate for NIDS. 
%implies that for any constant stepsize $\eta \in (0,\frac{2}{L})$, we can set such that
\end{proof}

\subsection{Proof of Theorem~\ref{thm:diminish}}

\begin{proof}[Proof of Theorem~\ref{thm:diminish}]
In order to get exact convergence, we pick diminishing step-size, set $\alpha=\frac{C\beta\gamma}{2(1+C)}$, $a_1\alpha=\frac{2C\beta\gamma_k}{C\beta\gamma_k+2}$, $\theta_1=\frac{1}{2\lambda_{\max}((\vI-\vW)^\dagger)}$ and $\theta_2=\frac{C\beta}{2(1+C)},$ then
$$\rho_k =\max\left\{ 1-\frac{\mu\eta_k(2-\mu\eta_k)-a_1\alpha}{1-a_1\alpha},
% {2\lambda_{\max}((\vI-\vW)^\dagger)-\gamma\over 2\lambda_{\max}((\vI-\vW)^\dagger)},
1-\theta_1\gamma_k,
1-\theta_2\gamma_k\right\}$$

If we further pick diminishing $\eta_k$ and $\gamma_k$ such that $\mu\eta_k(2-\mu\eta_k)-a_1\alpha\geq a_1\alpha,$ then
$$\frac{\mu\eta_k(2-\mu\eta_k)-a_1\alpha}{1-a_1\alpha}\geq\frac{a_1\alpha}{1-a_1\alpha}=\frac{2C\beta\gamma_k}{2-C\beta\gamma_k}\geq C\beta\gamma_k.$$

Notice that $C\beta\gamma\leq\frac{2}{3}$ since $(3C+1)-\sqrt{(3C+1)^2-4C}$ is increasing in $C>0$ with limit $\frac{2}{3}$ at $\infty$.

In this case we only need,
\begin{align}
    \gamma_k &\in \left(0,\min\Big\{\frac{(3C+1)-\sqrt{(3C+1)^2-4C}}{C\beta},\frac{2\mu\eta_k(2-\mu\eta_k) }{[4-\mu\eta_k(2-\mu\eta_k)]C\beta}, \frac{2}{\beta}\Big\}\right).
\end{align}

And 
$$\rho_k \leq\max\left\{ 1-C\beta\gamma_k,
% {2\lambda_{\max}((\vI-\vW)^\dagger)-\gamma\over 2\lambda_{\max}((\vI-\vW)^\dagger)},
1-\theta_1\gamma_k,
1-\theta_2\gamma_k\right\}\leq1-\theta_3 \gamma_k$$
if $\theta_3=\min\{ \theta_1,\theta_2\}$ and note that $\theta_2\leq C\beta.$

We define
\begin{align*}
\mathcal{L}^k &\coloneqq (1-a_1\alpha_k)\|\vX^{k}-\vX^*\|^2 + (2\eta_k^2/\gamma_k) \EE\|\vD^{k+1}-\vD^*\|^2_{(\vI-\vW)^\dagger}+ a_1\|\vH^k-\vX^*\|^2 \label{eqn_Lk}.
\end{align*}

Hence $$    \EE\mathcal{L}^{k+1} \leq (1-\theta_3\gamma_k) \EE\mathcal{L}^k+n\sigma^2\eta_k^2.$$

From $a_1\alpha\leq\frac{\mu\eta_k(2-\mu\eta_k)}{2},$ we get
$$\frac{4C\beta\gamma_k}{C\beta\gamma_k+2}\leq\mu\eta_k(2-\mu\eta_k).$$

If we pick $\gamma_k=\theta_4\eta_k$, then it's sufficient to let
$$2C\beta\theta_4\eta_k\leq\mu\eta_k(2-\mu\eta_k).$$
Hence if $\theta_4<\frac{\mu}{C\beta}$ and let $\eta_{*}=\frac{2(\mu-C\beta\theta_4)}{\mu^2},$ then $\eta_k=\frac{\gamma_k}{\theta_4}\in(0,\eta_*)$ guarantees the above discussion and 
$$ \EE\mathcal{L}^{k+1} \leq (1-\theta_3\theta_4\eta_k) \EE\mathcal{L}^k+n\sigma^2\eta_k^2.$$

So far all restrictions for  $\eta_k$ are
$$\eta_k\leq\min\left\{\frac{2}{\mu+L}, \eta_*\right\}$$
and
$$\eta_k\leq \frac{1}{\theta_4}\min\left\{\frac{(3C+1)-\sqrt{(3C+1)^2-4C}}{C\beta},\frac{2}{\beta}\right\}$$

Let $\theta_5=\min\left\{\frac{2}{\mu+L}, \eta_*,\frac{(3C+1)-\sqrt{(3C+1)^2-4C}}{C\beta\theta_4},\frac{2}{\beta\theta_4}\right\}$, $\eta_k=\frac{1}{Bk+A}$ and $D=\max\left\{A \mathcal{L}^0,\frac{2n\sigma^2}{\theta_3\theta_4}\right\},$ we claim that if we pick $B=\frac{\theta_3\theta_4}{2}$ and some $A$, by setting $\eta_k=\frac{2}{\theta_3\theta_4 k+2A}$, we get
$$\EE\mathcal{L}^k\leq\frac{D}{Bk+A}.$$

Induction:\\
When $k=0,$ it's obvious. Suppose previous $k$ inequalities hold. Then
$$\EE\mathcal{L}^{k+1}\leq\left(1-\frac{2\theta_3\theta_4}{\theta_3\theta_4 k+2A}\right)\frac{2D}{\theta_3\theta_4 k+2A}+\frac{4n\sigma^2}{(\theta_3\theta_4 k+2A)^2}.$$

Multiply $M\coloneqq (\theta_3\theta_4 k+\theta_3\theta_4+2A)(\theta_3\theta_4 k+2A)(2D)^{-1}$ on both sides, we get
\begin{equation*}
    \begin{aligned}
    M\EE\mathcal{L}^{k+1}\leq&\left(1-\frac{2\theta_3\theta_4}{\theta_3\theta_4 k+2A}\right)(\theta_3\theta_4 k+\theta_3\theta_4+2A)+\frac{4n\sigma^2(\theta_3\theta_4 k+\theta_3\theta_4+2A)}{2D(\theta_3\theta_4 k+2A)}\\
    =&\frac{2D(\theta_3\theta_4 k+2A-2\theta_3\theta_4)(\theta_3\theta_4 k+\theta_3\theta_4+2A)+4n\sigma^2(\theta_3\theta_4 k+\theta_3\theta_4+2A)}{2D(\theta_3\theta_4 k+2A)}\\
    =&\frac{2D(\theta_3\theta_4 k+2A)^2+4n\sigma^2(\theta_3\theta_4 k+2A)-4D\theta_3\theta_4(\theta_3\theta_4 k+2A)+2D\theta_3\theta_4(\theta_3\theta_4 k+2A)}{2D(\theta_3\theta_4 k+2A)}\\
    &+\frac{-4D(\theta_3\theta_4)^2+4n\sigma^2\theta_3\theta_4}{2D(\theta_3\theta_4 k+2A)}\\
    \leq&\theta_3\theta_4 k+2A.
    \end{aligned}
\end{equation*}

Hence $$\EE \mathcal{L}^{k+1}\leq\frac{2D}{\theta_3\theta_4 (k+1)+2A}$$

This induction holds for any $A$ such that $\eta_k$ is feasible, i.e.
$$\eta_0=\frac{1}{A}\leq\theta_5.$$

Here we summarize the definition of constant numbers:
\begin{align}
\theta_1&=\frac{1}{2\lambda_{\max}((\vI-\vW)^\dagger)},\ \theta_2=\frac{C\beta}{2(1+C)},\\    
\theta_3&=\min\{\theta_1,\theta_2\},\ \theta_4\in\left(0,\frac{\mu}{C\beta}\right),\ \eta_*=\frac{2(\mu-C\beta\theta_4)}{\mu^2},\\
\theta_5&=\min\left\{\frac{2}{\mu+L}, \eta_*,\frac{(3C+1)-\sqrt{(3C+1)^2-4C}}{C\beta\theta_4},\frac{2}{\beta\theta_4}\right\}.
\end{align}

Therefore, let $A=\frac{1}{\theta_5}$ and $\eta_k=\frac{2\theta_5}{\theta_3\theta_4\theta_5 k+2},$ we get
$$\frac{1}{n}\EE\mathcal{L}^k\leq\frac{2\max\left\{{ \frac{1}{n}\mathcal{L}^0},\frac{2\sigma^2\theta_5}{\theta_3\theta_4}\right\}}{\theta_3\theta_4 \theta_5 k+2}.$$

Since $1-a_1\alpha_k \geq 1/2$, we complete the proof.

\end{proof}

\end{document}